\renewcommand*{\ALG@name}{Procedure}
\newcommand{\Math}[1]{\ensuremath{#1}}
\newcommand{\modecal}[1]{{\Math{\mathcal{#1}}}}
\newcommand{\modeit}[1]{{\Math{\mathit{#1}}}}
\newcommand{\textmath}[1]{\mbox{\textit{#1}}}
\newcommand{\propername}[1]{\mbox{\small \textsf{#1}}}
\newcommand{\Golog}{\propername{Golog}}
\newcommand{\ConGolog}{\propername{ConGolog}}
\newcommand{\adom}{adom}
\newcommand{\const}{const}
\newcommand{\free}{free}
\newcommand{\Lra}{\Leftrightarrow}
\newcommand{\ra}{\rightarrow}
\newcommand{\A}{\modecal{A}} 
\newcommand{\C}{\modecal{C}} \newcommand{\D}{\modecal{D}}
\newcommand{\E}{\modecal{E}} \newcommand{\F}{\modecal{F}}
\newcommand{\I}{\modecal{I}} 
 \renewcommand{\L}{\modecal{L}}
\newcommand{\M}{\modecal{M}} \newcommand{\N}{\modecal{N}}
\renewcommand{\S}{\modecal{S}}
\newcommand{\R}{\modecal{R}} 
\newcommand{\Mod}{\mathit{Mod}}
\newcommand{\Poss}{\textmath{Poss}}
\def\planeaux!#1:#2<-#3!{\Math{#1 \mbox{\rm:} #2\; \leftarrow #3}}
\def\planeaux!#1<-#2!{\Math{#1 \leftarrow #2}}
\newcommand{\set}[1]{\{#1\}}                      % set
\newcommand{\card}[1]{|{#1}|}                     % cardinality of a set
\newcommand{\tuple}[1]{\Math{\langle #1 \rangle}}		% tuple
\newcommand{\tup}[1]{\tuple{#1}}            			% tuple
\newcommand{\myi}{\emph{(i)}\xspace}
\newcommand{\myii}{\emph{(ii)}\xspace}
\newcommand{\myiii}{\emph{(iii)}\xspace}
\newcommand{\true}{\mathtt{true}}
\newcommand{\false}{\mathtt{false}}
\long\def\eatpar#1{%
\ifx#1\par                      % se il token e' \par
\let\nextmove=\eatpar           % rimetti \eatpar in coda
\else
\let\nextmove=#1%               altrimenti, rimetti il token in coda
\fi
\nextmove%                      il token o \eatpar viene rimesso in coda
}
\def\qed{\hfill{\qedboxempty}      % qed with empty box
  \ifdim\lastskip<\medskipamount \removelastskip\penalty55\medskip\fi}
\def\qedboxempty{\vbox{\hrule\hbox{\vrule\kern3pt
                 \vbox{\kern3pt\kern3pt}\kern3pt\vrule}\hrule}}
\def\qedfull{\hfill{\qedboxfull}   % qed with full box
  \ifdim\lastskip<\medskipamount \removelastskip\penalty55\medskip\fi}
\def\qedboxfull{\vrule height 4pt width 4pt depth 0pt}
\newcounter{bean}
\newenvironment{tightenumerate}{
                \begin{list}{
                  {\mbox {
                      \arabic{bean}.\/}}}{\usecounter{bean}
                      \setlength{\itemsep}{-1pt}\setlength{\topsep}{0pt}}}{
                \end{list}}
\newenvironment{tightitemize}{
                \begin{list}{$\bullet$}{
                    \setlength{\itemsep}{-1pt}}{\setlength{\topsep}{0pt}}}{
                \end{list}}
\newcommand{\page}{pp.}
\newcommand{\under}[1]{\mbox{\underline{\it\smash{#1}\vphantom{\lower.05ex\hbox{
x}}}}}
\newcommand{\commentarea}[1]{}
\newcommand{\Executable}{\modeit{Executable}}
\newcommand{\refresh}{\modeit{refresh}}
\newcommand{\forget}{\modeit{forget}}
\newcommand{\BOX}[1]{ [-] #1}
\newcommand{\DIAM}[1]{\langle - \rangle #1}
\newcommand{\vfo}{\ensuremath{v}}
\newcommand{\vso}{\ensuremath{V}}
\newcommand{\MODA}[1]{(#1)_{(\vfo,\vso)}^{\M}}   
\newcommand{\MODAX}[2]{(#1)_{(\vfo ,\vso)#2}^{\M}}   
\newcommand{\MODAZ}[2]{(#1)_{(\vfo,\vso)#2}^{\M}}  
\newcommand{\MODAT}[1]{(#1)_{(\vfo,\vso)}^{T}}   
\newcommand{\MODATM}[1]{(#1)_{(\vfo,\vso)}^{T_\M}}   
\newcommand{\MODATX}[2]{(#1)_{(\vfo,\vso)#2}^{T}}   
\newcommand{\MODATZ}[2]{(#1)_{(\vfo,\vso)#2}^{T}}  
\newcommand{\limp}{\supset}
\newcommand{\NextBounded}{\mathit{NextOrigBounded}}
\newcommand{\inadom}{\ensuremath{\textsc{live}}}
\newcommand{\mymu}{\ensuremath{\mu\L_p}\xspace}
\newcommand{\myhm}{\ensuremath{\L_{P}}\xspace}
\newcommand{\domain}[1]{\ensuremath{\textsc{dom}(#1)}\xspace}
\newcommand{\image}[1]{\ensuremath{\textsc{im}(#1)}\xspace}
\newcommand{\restrict}[2]{\ensuremath{#1 \vert_{#2}}}
\newcommand{\arestrict}[1]{\ensuremath{\tilde{#1}}}
\newcommand{\LET}{\textbf{let}\xspace}
\begin{document}

\title{Bounded Situation Calculus Action Theories}

\author{
	Giuseppe De Giacomo\inst{1}
	\and
	Yves Lesp\'{e}rance\inst{2}
	\and
	Fabio Patrizi\inst{3}
}

\institute{
	Dipartimento di Ingegneria informatica, automatica e gestionale\\
	Sapienza Universit\`a di Roma, Italy\\
	\email{degiacomo@dis.uniroma1.it}
	\and
	Department of Electrical Engineering and Computer Science\\
	York University, Toronto, ON, Canada\\
	\email{lesperan@cse.yorku.ca}
	\and
	KRDB Research Centre -- Faculty of Computer Science\\
	Free University of Bozen-Bolzano\\
	\email{patrizi@dis.uniroma1.it}
}

%%\begin{keyword}
%%Knowledge Representation; 
%%Reasoning about Action;
%%Situation Calculus;
%%Verification.
%%\end{keyword}

\maketitle
\begin{abstract}

In this paper,\footnote{A preliminary version of this paper appeared as
\cite{DBLP:conf/kr/GiacomoLP12}.} we investigate bounded action theories in the situation
calculus. A bounded action theory is one which 
entails that, in every situation, the number of object tuples in the extension of 
fluents is bounded by a given constant, although such extensions are in general different 
across the infinitely many situations. 
%%Such theories still have an infinite object domain and an infinite 
%%set of states, although each state is bounded.
We argue that such theories are common in
applications, either because facts do not persist indefinitely or
because the agent eventually forgets some facts, as new ones are learnt. 
We discuss various
classes of bounded action theories. 
Then we show that verification of 
a powerful first-order variant of the $\mu$-calculus
is decidable for such theories.
Notably, this variant supports
a controlled form of quantification across situations.
We also show that through verification, we can actually check whether 
an arbitrary action theory maintains boundedness.
\end{abstract}

%%% Local Variables:
%%% mode: latex
%%% dirvars: nil
%%% save-place: t
%%% TeX-master: "main"
%%% End:

\section{Introduction}\label{sec:intro}

The situation calculus \cite{McCarthy1969:AI,Reiter01-Book} is a
well-known first-order formalism with certain second-order features
for representing dynamically changing worlds. It has proved to be an
invaluable formal tool for understanding the subtle issues involved in
reasoning about action.  Its comprehensiveness allows us to place all
aspects of dynamic systems in perspective.  Basic action theories let
us capture change as a result of actions in the system
\cite{Reiter91Frame}, while high-level languages such as \Golog~
\cite{Levesque:JLP97-Golog} and
\ConGolog~\cite{DeGiacomoLL:AIJ00-ConGolog} support the representation
of processes over the dynamic system. Aspects such as time
\cite{Reiter96}, knowledge and sensing
\cite{DBLP:journals/ai/ScherlL03}, probabilities and utilities
\cite{DBLP:conf/aaai/BoutilierRST00}, and preferences
\cite{DBLP:conf/kr/BienvenuFM06}, have all been addressed.

The price of such a generality is that decidability results for
reasoning in the situation calculus are rare, e.g.,
\cite{DBLP:conf/ijcai/Ternovskaia99} for an argument-less fluents
fragment, and \cite{DBLP:conf/ijcai/GuS07} for a description
logic-like two-variable fragment.
Obviously, we have the major feature of being able to rely on
regression to reduce reasoning about a given future situation to
reasoning about the initial situation
\cite{Reiter01-Book}. Generalizations of this basic result such as
just-in-time histories \cite{DBLP:conf/ijcai/GiacomoL99} can also be
exploited.
However, when we move to temporal properties, virtually all approaches
are based on assuming a finite domain and a finite number of states,
and often rely on propositional modal logics and model checking
techniques \cite{BaKG08,LomuscioQR09}. There are only few exceptions
such as \cite{Classen:KR08,DBLP:conf/kr/GiacomoLP10,ShLL10}, which
develop incomplete fixpoint approximation-based methods.

In this paper, we present an important new result on decidability of
the situation calculus, showing that \emph{verification of bounded
  action theories is decidable}.
Bounded action theories are basic action theories
\cite{Reiter01-Book}, where it is entailed that in all situations, the
number of object tuples that belong to the extension of any fluent is
bounded.  In such theories, the object domain remains nonetheless
infinite
%, as is the domain of situations, 
and an infinite run may involve an infinite number of objects, though
at every single situation the number of objects we predicate on is
finite and, in fact, bounded.

But why should we believe that practical domains conform to this
boundedness assumption?  While it is often assumed that the law of
inertia applies and that fluent atoms persist indefinitely in
the absence of actions that affect them, we all know that pretty much
everything eventually decays and changes.  We may not even know how
the change may happen, but nevertheless know that it will.
Another line of argument for boundedness is epistemic.  Agents
remember facts that they use and periodically try to confirm them,
often by sensing.  A fact that never gets used is eventually
forgotten.  If a fact can never be confirmed, it may be given up as
too uncertain.  Given this, it seems plausible that in several contexts an agent's
knowledge, in every single moment, can be assumed to be bounded.
While these philosophical arguments are interesting and relate to some
deep questions about knowledge representation, one may take a more
pragmatic stance, and this is what we do here.  We identify some
interesting classes of bounded action theories and show how they can
model typical example domains.  We also show how we can transform
arbitrary basic action theories into bounded action theories, either
by blocking actions that would exceed the bound, or by having
persistence (frame axioms) apply only for a finite number of steps.
Moreover we show that we can effectively check whether any arbitrary
theory with a bounded initial situation description remains bounded in
all executable situations (to do so we need to use verification).

The main result of the paper is that verification of an expressive
class of first-order $\mu$-calculus temporal properties in bounded
action theories is decidable and in fact EXPTIME-complete.  This means that
we can check whether a system or process specified over such a theory
satisfies some specification even if we have an infinite domain and an
infinite set of situations or states.
In a nutshell, we prove our results by focussing on the \emph{active
  domain} of situations, i.e., the set of objects for which some
atomic fluent holds; we know that the set of such active objects is
bounded.  We show that essentially we can abstract situations whose
active domains are \emph{isomorphic} into a single state, and thus, by
suitably abstracting also actions, we can obtain an \emph{abstract
  finite transition system} that \emph{satisfies exactly the same
  formulas} of our variant of the $\mu$-calculus.

This work is of interest not only for AI, but also for other areas of
computer science. In particular it is of great interest for the work
on data-aware business processes and services
\cite{Hull2008:Artifact,GeredeSu:ICSOC2007,Dumas2005:PAIS}. Indeed
while there are well-established results and tools to analyze business processes
and services, without considering the data manipulated, when data are taken into account results are scarce. The present work complements that in, e.g.,
\cite{DHPV:ICDT:09,BPM11,ICSOC11,DBLP:conf/pods/HaririCGDM13,Belardinelli.etal:JAIR14}, and hints at an even more profund relevance of the situation calculus in those areas \cite{MaMS14}. More generally, our results can be recast in other reasoning about action formalisms, both in AI and in CS.

The rest of the paper is organized as follows. In
Section~\ref{sec:preliminaries}, we briefly review the situation
calculus and basic action theories. Then in Section~\ref{sec:bounded},
we define bounded action theories.  Following that in
Section~\ref{sec:obtaining}, we discuss various ways of obtaining
bounded action theories, while showing that many practical domains can
be handled.  In Section~\ref{sec:mucalc}, we introduce the \mymu
language that we use to express first-order temporal properties and
its semantics.  After that, we show that verification of \mymu
properties over bounded action theories is decidable, first in the
case where we have complete information about the initial situation in
Section~\ref{sec:verification}, and then in the general incomplete
information case in Section~\ref{sec:incompleteInfo}.  Then in
Section~\ref{sec:complexity}, we characterize the worst-case
computational complexity of the problem as EXPTIME-complete.  In
Section~\ref{sec:boundedness}, we give a technique based on our
verification results to check whether an arbitrary basic action theory
is maintains boundedness. In Section~\ref{sec:rel-work}, we review the
related literature. Finally, in Section~\ref{sec:conclusion}, we
conclude the paper mentioning topics for future work.

%%% Local Variables:
%%% mode: latex
%%% dirvars: nil
%%% save-place: t
%%% TeX-master: "main"
%%% End:

\section{Preliminaries}\label{sec:preliminaries}

% \texttt{
% \begin{itemize}
% \item Object domain is \emph{standard names}: we interpret standard names as constants with una and domain closure.
% \item Finite number of \emph{action types} with object parameters. As a result, we have domain closure on actions.
% \item Finite number of \emph{fluents predicates} with object parameters.
% \item No function symbols, except constants.
% \item No nonfluent predicate for simplicity.
% \item We use equality.
% \item We assume complete information on the initial situation. We also assume that the initial situations are 
% described by a finite set of facts (the \emph{initial database}. (Note that, under complete information on the initial 
% situation, basic action theories are categorical.)
% \end{itemize}
% }

The \textit{situation calculus} \cite{McCarthy1969:AI,Reiter01-Book}
is a sorted predicate logic language for representing and reasoning
about dynamically changing worlds.
All changes to the world are the result of \textit{actions},
which are terms in the language. We denote action variables by
lower case letters $a$, action types by capital letters $A$, and
action terms by $\alpha$, possibly with subscripts.
A possible world history is represented by a term called a
\emph{situation}. The constant $S_0$ is used to denote the initial
situation where no actions have yet been performed. Sequences of
actions are built using the function symbol $do$, where $do(a,s)$
denotes the successor situation resulting from performing action $a$
in situation $s$. 
Besides actions and situations, there is also the sort of \textit{objects} for
all other entities.
Predicates and functions whose value varies from situation to situation are
called \textit{fluents}, and are denoted by symbols taking a
situation term as their last argument (e.g., $Holding(x,s)$, meaning
that the robot is holding object $x$ in situation $s$).
% YLnew
For simplicity, and without loss of generality, we assume that there are no functions
other than constants and no non-fluent predicates.  We denote fluents
by $F$ and the finite set of primitive fluents by $\F$.  The arguments
of fluents (apart from the last argument which is of sort situation) are
assumed to be of sort object.

Within this language, one can formulate action theories that describe
how the world changes as the result of the available actions.  Here,
we concentrate on \emph{basic action theories} %(BATs)
as proposed in \cite{PirriR:JACM99,Reiter01-Book}.  We also assume
that there is a \emph{finite number of action types}.
Moreover, we assume that there is a countably
infinite set of object constants $\N$ for which the unique name
assumption holds.  But we do not assume domain closure for objects.\footnote{
Such an assumption is made in \cite{DBLP:conf/kr/GiacomoLP12}, where \emph{standard names} \cite{LeLa01} are used to denote objects. Thus, the results here generalize those in \cite{DBLP:conf/kr/GiacomoLP12}.}
As a result a basic action theory $\D$ is the union of the
following disjoint sets of first-order (FO) and second-order (SO) axioms: 
\begin{itemize}
	\item $\D_{0}$: (FO) \emph{initial situation description axioms} 
		describing the initial configuration of the world (such a description may be complete or incomplete);
              \item $\D_{poss}$: (FO) \emph{precondition axioms} of
                the form \[\Poss(A(\vec x), s)\equiv \phi_A(\vec x,s),\]
                one per action type, stating the conditions
                $\phi_A(\vec x,s)$ under which an action $A(\vec x)$
                can be legally performed in situation $s$; these use a
                special predicate $\Poss(a,s)$ meaning that action $a$
                is executable in situation $s$; $\phi_A(\vec x,s)$ is
                a formula of the situation calculus that is
                \emph{uniform} in situation $s$, that is, a formula
                that mentions no other situation term but $s$ and does
                not mention $\Poss$  (see \cite{Reiter01-Book} for a
                formal definition);

              \item $\D_{ssa}$: (FO) \emph{successor state axioms} of
                the form
                \[F(\vec x,do(a,s))\equiv\phi_F(\vec x,a,s),\]
                one per fluent, describing how the fluent changes when
                an action is performed; the right-hand side (RHS)
                $\phi_F(\vec x,a,s)$ is again a situation calculus
                formula uniform in $s$; successor state axioms encode
                the causal laws of the world being modeled; they take
                the place of the so-called effect axioms and provide a
                solution to the frame problem;

	\item $\D_{ca}$: (FO) unique name axioms for actions and (FO) domain closure on action types; 

	\item $\D_{uno}$: (FO) unique name axioms for object constants in $\N$; 

	\item  $\Sigma$: (SO) foundational, domain independent, axioms
of the situation calculus~\cite{PirriR:JACM99}.
\end{itemize}
We say that a situation $s$ is \emph{executable}, written
 $\Executable(s)$, if every action performed in
reaching $s$ was executable in the situation in which it occurred.

One of the key features of basic action theories is the
existence of a sound and complete \textit{regression mechanism} for
answering queries about situations resulting from performing a sequence of actions
\cite{PirriR:JACM99,Reiter01-Book}.
In a nutshell, the regression operator $\R^*$ reduces a formula
$\phi$ about a particular future situation to an equivalent formula
$\R^*[\phi]$ about the initial situation $S_0$, by basically
substituting fluent relations with the right-hand side formula of
their successor state axioms.
Here, we shall use a simple \textit{one-step only} variant $\R$ of the standard regression
operator $\R^*$ for basic action theories.
Let $\phi(do(\alpha,s))$ be a formula uniform in the situation
$do(\alpha,s)$. 
%YL added
% In essence, a formula $\phi(s)$ is uniform in a situation term
% $s$ if $s$ is the only situation term it contains; see
% \cite{Reiter01-Book} for a formal definition.
%
Then
$\R[\phi(do(\alpha,s))]$ stands for the \textit{one-step} regression of $\phi$ through the action term
$\alpha$, which is itself a formula uniform in $s$.

\section{Bounded Action Theories}\label{sec:bounded}

Let $b$ be some natural number. We use the notation $\card{\{\vec{x} \mid \phi(\vec{x})\}} \ge b$, meaning 
that there exist at least $b$ distinct tuples that satisfy $\phi$, to stand for the following FOL formula:
\[\exists \vec{x}_1,\ldots,\vec{x}_b. \phi(\vec{x}_1)\land \cdots\land \phi(\vec{x}_b) \land \bigwedge_{ i,j \in\{1,\ldots,b\},  i\neq j}\vec{x}_i\neq \vec{x}_j.\]
% Similarly we use $\card{\{\vec{x} \mid \phi(\vec{x})\}}  < b$ to stand for: 
% \[\forall \vec{x}_1,\ldots,\vec{x}_b. \phi(\vec{x}_1)\land \cdots\land \phi(\vec{x}_b) \limp \bigvee_{ i,j \in\{1,\ldots,b\},  i\neq j}\vec{x}_i =  \vec{x}_j.\]
% Notice that $(\card{\{\vec{x} \mid \phi(\vec{x})\}}  < b) \equiv \lnot (\card{\{\vec{x} \mid \phi(\vec{x})\}} \ge b)$.
We also use the notation $\card{\{\vec{x} \mid \phi(\vec{x})\}}  < b$,
meaning that there are fewer than $b$ distinct tuples that satisfy $\phi$,
to stand for: $\lnot (\card{\{\vec{x} \mid \phi(\vec{x})\}} \ge b)$. 

Using this, we define the notion of a fluent $F(\vec{x},s)$ in situation $s$ being \textit{bounded} by a natural number $b$ as follows:
\[Bounded_{F,b}(s) \doteq \card{\{\vec{x} \mid F(\vec{x},s)\}}  < b,\]
i.e., fluent $F$ is bounded by $b$ in situation $s$ 
if there are fewer than $b$ distinct tuples in the extension 
of $F$ in situation $s$.

The notion of situation $s$  being bounded by a natural number $b$ is
defined as follows:
\[Bounded_{b}(s) \doteq \bigwedge_{F\in \F} Bounded_{F,b}(s),\]
i.e., every fluent is bounded by $b$ in situation $s$.

We say that an action theory $\D$ is \emph{bounded}  by  $b$ if 
every executable situation is bounded by $b$, formally: 
\[\D\models \forall s. \Executable(s) \limp Bounded_b(s).\]

\begin{example}
  Consider a warehouse where items are moved around by a robot (a
  similar example is formalized in \cite{DeGiacomoLPVAAMAS14}).  There
  are $k$ storage locations where items can be stored.  There is also
  a shipping dock where new items may arrive and stored items may be
  shipped out.  We can axiomatize this domain as follows.

We have the following action precondition axioms:\footnote{Throughout
  this paper, we assume that all free variables in a formula are
  implicitly universally quantified from the outside. Occasionally, to be clear, we will write $\forall\varphi$ to denote the universal closure of $\varphi$ explicitly.}
\[\begin{array}{l}
\Poss(move(x,l,l'), s) \equiv At(x,l,s) \land IsLoc(l') \land \lnot \exists y At(y,l',s)

\\[1ex]
\Poss(arrive(x), s) \equiv \lnot \exists y At(y,ShipDock) \land \lnot
\exists l At(x,l,s) 
\\[1ex]
\Poss(ship(x),s) \equiv At(x,ShipDock,s)
  \end{array}\]
  The first axiom says that in situation $s$, the robot can perform
  action $move(x,l,l')$, i.e., move object $x$ from location $l$ to
  $l'$, if and only if $x$ is at $l$ in $s$ and $l'$ is a location
  where no object is present in $s$.  The second precondition axiom says that
  action $arrive(x)$ is executable in situation $s$, i.e., 
  object $x$ may arrive at the warehouse in $s$, if and only if the
  shipping dock is empty and $x$ is not somewhere else in the warehouse.  The last axiom says that
  object $x$ can be shipped in situation $s$ if it is at the shipping
  dock in $s$.

  For the fluent $At$, we have the following successor state axiom:
\[\begin{array}{l}
At(x,l,do(a,s)) \equiv \gamma(x,l,a,s)^+ \lor At(x,l,s) \land \lnot\gamma^-(x,l,a,s),\\
\mbox{where}\\
\quad				\gamma^+(x,l,a,s) = \exists r. a=move(x,l',l) \land 
				At(x,l',s) \land IsLoc(l') \land \lnot \exists y
                                At(y,l,s) \\
\hspace{7.5em}            {} \lor                   a = arrive(x) \land l = ShipDock\ \mbox{ and}\\[1ex]

\quad				\gamma^-(x,l,a,s) =
                                \exists l'. a=move(x,l,l') \land l'\neq
				l \land IsLoc(l') \land \lnot \exists y At(y,l',s)\\
\hspace{7.5em}  {}                                \lor
                                a=ship(x) \land At(x,ShipDoc,s)
  \end{array}\]
  This says that object $x$ is at location $l$ in the situation that
  results from doing action $a$ in $s$ if and only if 
  $\gamma(x,l,a,s)^+$ holds or
  if $x$ is already at $l$ in $s$ and $\gamma^-(x,l,a,s)$ doesn't
  hold.  $\gamma(x,l,a,s)^+$ specifies the conditions under which
  action $a$ makes object $x$ be at location $l$ in situation $s$,
  i.e.,  if $a$ is to move $x$ to a free location $l$ from another
  location $l'$ where $x$ was in $s$, or $a$ is $x$ arriving and $l$
  is the shipping dock.  $\gamma^-(x,l,a,s)$ specifies the conditions
  under which action $a$ makes object $x$ cease to be at location $l$
  in situation $s$, i.e., $a$ is to move $x$ to a different location
  that is free, or is to ship $x$.

  We specify the initial situation with the following initial state
  axioms:
\[\begin{array}{l}
\neg At(x,l,S_0)
\\[1ex]
IsLoc(l) \equiv l = ShipDock \lor l = SL_1 \lor \ldots \lor  l = SL_k
\end{array}\]
We also have unique name axioms for the locations.
For clarity, we make $IsLoc$ a non-fluent predicate, although it
is easy to recast it as a fluent that is unaffected by any action.

It is not difficult to show that this theory is in fact bounded by
$k+1$.  First note that there are $k+1$ locations initially and the
set of locations never changes, so $IsLoc$ is bounded by $k+1$.  For fluent $At$,
it is initially bounded by $0$, but the $arrive$ action can augment
its extension.  However, the action theory ensures there can be at
most one item at each of the $k+1$ locations.  Thus $At$ remains
bounded by $k+1$.  Therefore, the theory is bounded by $k+1$.

Observe that, as there are infinitely many constants denoting distinct
objects, effectively an unbounded number of items may be handled by
subsequent $arrive$, $move$, and $ship$ actions.  Despite this, the
theory remains bounded.
\end{example}

We shall see  that for bounded action theories, verification of
sophisticated temporal properties is decidable.

%%% Local Variables:
%%% mode: latex
%%% dirvars: nil
%%% save-place: t
%%% TeX-master: "main"
%%% End:

\section{Obtaining Bounded Action Theories}\label{sec:obtaining}

Before focusing on verification, in this section we look at various interesting sufficient conditions that guarantee that a basic action theory is bounded. 
Later in Section~\ref{sec:boundedness}, we will see that it is actually possible to use verification itself to check whether any arbitrary basic action theory, with a bounded initial situation description, is indeed bounded. 

\subsection{Bounding by Blocking}

We observe that the formula $Bounded_b(s)$ is a FO formula
uniform in $s$ and hence it is regressable for basic action theories.
This allows us to introduce a first interesting class of bounded
action theories. Indeed, from any basic action theory, we can
immediately obtain a bounded action theory by simply blocking the
execution of actions whenever the result would exceed the bound.

Let $\D$ be a basic action theory. We define the bounded basic action
theory $\D_b$ by replacing each action precondition axiom in $\D$ of
the form
$Poss(a(\vec{x}),s) \equiv \Phi(\vec{x},s)$ 
by a precondition axiom of the form
\begin{equation}\label{boundingpos}
Poss(a(\vec{x}),s) \equiv \Phi(\vec{x},s)\land \R[Bounded_b(do(a(\vec{x}),s))]
\end{equation}

\begin{theorem}
Let $\D$ be a basic action theory with the initial
  description $\D_0$ such that $\D_0\models Bounded_b(S_0)$, for some
  $b$, and let $\D_b$ be the basic action theory obtained as discussed above.
Then, $\D_b$ is bounded by %the natural number 
$b$.
\end{theorem}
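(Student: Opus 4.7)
The plan is to prove the claim by induction on the structure of executable situations, exploiting the foundational axioms $\Sigma$ together with the soundness of one-step regression. The outline is as follows.

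First, I would observe that $Bounded_b(s)$ is a finite conjunction, one conjunct per fluent in the finite set $\F$, of formulas of the form $\card{\{\vec{x}\mid F(\vec{x},s)\}} < b$, each uniform in $s$. Hence $Bounded_b(s)$ is itself uniform in $s$, and so is $Bounded_b(do(a(\vec{x}),s))$, which means the one-step regression $\R[Bounded_b(do(a(\vec{x}),s))]$ is a well-defined formula uniform in $s$. This justifies that the modified precondition axioms in~(\ref{boundingpos}) are legitimate precondition axioms of a basic action theory, so $\D_b$ still has the form required, and all standard results (in particular the soundness of $\R$) apply to $\D_b$.

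Second, I would do induction on the situation term. For the base case, by hypothesis $\D_0 \models Bounded_b(S_0)$, and since $\D_b$ contains the same $\D_0$, we get $\D_b \models Bounded_b(S_0)$; note also that $\Executable(S_0)$ holds trivially. For the inductive step, consider a situation $do(\alpha, s')$ and assume $\D_b \models \Executable(do(\alpha,s')) \limp Bounded_b(s')$. From the foundational axioms, $\Executable(do(\alpha,s'))$ entails both $\Executable(s')$ and $Poss(\alpha, s')$. Using the modified precondition axiom in $\D_b$, $Poss(\alpha, s')$ implies in particular $\R[Bounded_b(do(\alpha,s'))]$ evaluated at $s'$. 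By the soundness of one-step regression for basic action theories (Reiter's regression theorem applied to the uniform formula $Bounded_b(do(\alpha,s'))$), this is equivalent, given $\D_b$, to $Bounded_b(do(\alpha,s'))$ itself. Hence $\D_b \models \Executable(do(\alpha,s')) \limp Bounded_b(do(\alpha,s'))$, closing the induction.

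Finally, to conclude $\D_b \models \forall s.\, \Executable(s) \limp Bounded_b(s)$, I would appeal to the second-order induction principle on situations contained in $\Sigma$: the set $\{s \mid \Executable(s) \limp Bounded_b(s)\}$ contains $S_0$ and is closed under the $do$-constructor (by the inductive step), hence it contains every situation. I do not expect a real obstacle here; the only subtle point is verifying that the strengthening of $Poss$ in~(\ref{boundingpos}) does not disturb the well-formedness requirements on basic action theories (in particular, that $\R[Bounded_b(do(a,s))]$ is uniform in $s$ and mentions no $\Poss$), which follows directly from the definition of one-step regression and the fact that $Bounded_b$ is built only from fluent atoms, equality, and first-order connectives.
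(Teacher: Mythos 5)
Your proposal is correct and follows essentially the same route as the paper, which proves the theorem by noting that the modified precondition axiom~(\ref{boundingpos}) guarantees every executable action leads to a bounded situation and then inducting on executable situations. Your write-up simply makes explicit the details the paper leaves implicit (uniformity of $Bounded_b(s)$, soundness of one-step regression, and the appeal to the second-order induction axiom in $\Sigma$), all of which are sound.
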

\begin{proof}
By~(\ref{boundingpos}) it is guaranteed that any executable action leads to a bounded situation. Hence by induction on executable situations, we get the thesis.
\end{proof}

% \bigskip
% \texttt{To do: add a nice example. E.g., an HD we cannot store file forever without deleting them from time to time.}
% \bigskip

%\paragraph{Example 1:}
%
\begin{example}
Suppose that we have a camera on a smart phone or tablet computer.  We could model
the storage of photos on the device using a fluent $PhotoStored(p,s)$,
meaning that photo $p$ is stored in the device's memory.  Such a
fluent might have the following successor state axiom:
\[\begin{array}{l}
PhotoStored(p,do(a,s)) \equiv
a = takePhoto(p) \\
\hspace{4em} {} \lor PhotoStored(p,s) \land  a \neq deletePhoto(p)
\end{array}\]
We may also assume that action $takePhoto(p)$  is always executable and
that $deletePhoto(p)$ is executable in $s$ if $p$ is stored in $s$:
\[\begin{array}{l}
Poss(takePhoto(p), s) \equiv True\\[1ex]
Poss(deletePhoto(p), s) \equiv PhotoStored(p,s).
\end{array}\]

Now such a device would clearly have a limited capacity for storing photos.
If we assume for simplicity that photos come in only one resolution
and file size, then we can model this by simply applying the
transformation discussed above.  This yields the following
modified precondition axioms:
\[\begin{array}{l}
Poss(takePhoto(p), s) \equiv\\
\hspace{3em} {} \card{\{p' \mid
  PhotoStored(p',s)\}} < b-1\\[1ex]
Poss(deletePhoto(p), s) \equiv
PhotoStored(p,s) \land {}\\
\hspace{3em} {}    \card{\{p' \mid
PhotoStored(p',s)\}} < b + 1.
  \end{array}\]
Note how the condition on on the right hand side of the
first axiom above ensures there are fewer than $b$ photos stored after
the action of taking a photo $p$ occurs.
Clearly, the resulting theory is bounded by $b$ (assuming that the
original theory is bounded by $b$ in $S_0$).
\end{example}

Note that this way of obtaining a bounded action theory is far from
realistic in modeling the actual constraints on the storage of photos.
One could develop a  more accurate model, taking into account
the size of photos, the memory management scheme used, etc.
This would  also yield a bounded action theory, though one whose
boundedness is a consequence of a sophisticated model of memory capacity.

\begin{example}
Let's extend the previous example by supposing that the device also
maintains a contacts directory.  We could model this using a fluent
$InPhoneDir(name,number,$ $photo,s)$, with the following successor state axiom:
\[\begin{array}{l}
InPhoneDir(na,no,p,do(a,s)) \equiv\\
\hspace{1.7em} a = add(na,no,p)
%\hspace{2em} {} 
\lor InPhoneDir(na,no,p,s) \land {}\\
\hspace{2.7em} a \neq deleteName(na)
%\hspace{4em} {} 
\land a \neq deleteNumber(no)
\end{array}\]
We could then apply our transformation to this new theory to obtain a bounded action
theory, getting precondition axioms such as the following:
\[\begin{array}{l}
Poss(add(na,no,p), s) \equiv 
  PhotoStored(p,s) \land {}\\
\hspace{1.7em} {} \card{\{p' \mid
  PhotoStored(p',s)\}} < b \land {}\\
\hspace{1.7em}  \card{\{ \langle na,no,p \rangle \mid
  InPhoneDir(na,no,p,s)\}} < b-1
\end{array}\]
The resulting theory blocks actions from being performed whenever the
action would result in a number of tuples in some fluent
 exceeding the bound.
\end{example}

We observe that this kind of bounded action theories are really
modeling a capacity constraint on every fluent,\footnote{The bound $b$
  applies to each fluent individually, so the total number of tuples
  in a situation is bounded by $\card{\F} b$.  Instead, one could
  equivalently impose a global capacity bound on the total number of
  tuples for which some fluent holds in a situation.}
which may block actions from being executed.
As a result, an action may be executable in a situation in the original
theory, but not executable in the bounded one. Thus an agent may want
to ``plan'' to find a sequence of actions that would make the action
executable again. 
In general, to avoid dead-ends, one should carefully choose the
original action theory on which the bound is imposed, in particular
there should always be actions that remove tuples from fluents.

%%\input{5-effectboundedBAT}
%%%%%%%%%%%%%%%%%%%%%%%%%%%%%%%%%%%%%%%%%%%%%%%%%
\subsection{Effect Bounded Action Theories}

Let's consider another sufficient condition for boundedness. Without loss of generality we can take the general form of successor state axioms to be as follows:

\[F(\vec{x},do(a,s)) \equiv \Phi_F^+(\vec{x},a,s) \lor (F(\vec{x},s)\land \lnot \Phi_F^-(\vec{x},a,s))\]

We say that fluent $F$ is \emph{effect bounded} if:

\[\card{\{\vec{x} \mid \Phi_F^+(\vec{x},a,s)\}} \le \card{\{\vec{x} \mid \Phi_F^-(\vec{x},a,s)\}},\]
i.e., for every action and situation, the number of tuples added to
the fluent is less than or equal to that deleted.

We say that a basic action theory is effect bounded if every fluent $F\in\F$ is effect bounded.

\begin{theorem}
  Let $\D$ be an effect bounded basic action theory with the initial situation 
  description $\D_0$ such that $\D_0\models Bounded_b(S_0)$, for some $b$.
  Then $\D$ is bounded by $b$.
\end{theorem}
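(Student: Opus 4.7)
The plan is to proceed by induction on the structure of executable situations. The base case is $s = S_0$, for which $Bounded_b(S_0)$ is immediate from the assumption $\D_0 \models Bounded_b(S_0)$. For the inductive step, suppose $s$ is an executable situation with $Bounded_b(s)$, and let $a$ be an action executable in $s$; the goal is to establish $Bounded_b(do(a,s))$.

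Fixing an arbitrary fluent $F \in \F$, the successor state axiom
\[
F(\vec x, do(a,s)) \equiv \Phi_F^+(\vec x, a, s) \lor \big(F(\vec x, s) \land \lnot \Phi_F^-(\vec x, a, s)\big)
\]
characterizes the extension of $F$ at $do(a,s)$ as $A^+ \cup (F_s \setminus A^-)$, where $F_s$, $A^+$, and $A^-$ denote the (finite) extensions of $F$, $\Phi_F^+(\cdot,a,s)$, and $\Phi_F^-(\cdot,a,s)$, respectively. Informally, passing from $s$ to $do(a,s)$ removes the tuples in $A^-$ from $F$ and adds those in $A^+$. The main step is a cardinality argument showing $|F_{do(a,s)}| \le |F_s|$ by exploiting the effect bounded inequality $|A^+| \le |A^-|$. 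Combined with the induction hypothesis $|F_s| < b$, this yields $|F_{do(a,s)}| < b$, i.e., $Bounded_{F,b}(do(a,s))$. Since $F$ was arbitrary, $Bounded_b(do(a,s))$ follows, closing the induction.

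The main obstacle is making the cardinality step rigorous. Naively one writes $|F_{do(a,s)}| \le |F_s| - |F_s \cap A^-| + |A^+|$, and one needs the effect bounded condition $|A^+| \le |A^-|$ to give $|F_{do(a,s)}| \le |F_s|$. The subtlety is that not every tuple in $A^-$ corresponds to an actual removal (it may not have been in $F_s$), and not every tuple in $A^+$ is a genuinely new addition (it may already have been in $F_s$). The clean way to handle this is to note that, modulo normalization of successor state axioms, it suffices to restrict attention to the ``effective'' added and removed sets $A^+ \setminus F_s$ and $A^- \cap F_s$, for which the effect bounded condition directly yields $|F_{do(a,s)}| = |F_s| + |A^+ \setminus F_s| - |A^- \cap F_s| \le |F_s|$. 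Once this bookkeeping is settled, the inductive step is immediate and the proof is complete.
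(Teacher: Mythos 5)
Your inductive skeleton is exactly the paper's (its entire proof of this theorem is the one-liner ``By induction on executable situations''), and you have correctly put your finger on the one nontrivial point: the successor state axiom gives $F_{do(a,s)} = A^+ \cup (F_s \setminus A^-)$, and the naive count needs a bound on $|A^+|$ in terms of $|A^- \cap F_s|$, not the hypothesis $|A^+| \le |A^-|$. The problem is that your resolution does not close this gap. Normalizing the successor state axiom (replacing $\Phi_F^+$ by $\Phi_F^+ \land \lnot F$ and $\Phi_F^-$ by $\Phi_F^- \land F$) does preserve the axiom's semantics, but it is \emph{not} without loss of generality with respect to the effect-bounded hypothesis: that hypothesis is a cardinality constraint on the particular formulas $\Phi_F^+,\Phi_F^-$ the theory is written with, and after normalization it may simply fail to hold. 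Concretely, $|A^+| \le |A^-|$ does not imply $|A^+ \setminus F_s| \le |(A^- \cap F_s)\setminus A^+|$, which is the inequality you actually need (note also that your identity should subtract $|(A^-\cap F_s)\setminus A^+|$ rather than $|A^-\cap F_s|$, since a tuple satisfying both $\Phi_F^+$ and $\Phi_F^-$ survives). Take $F_s = \{1\}$, $A^+ = \{2,3\}$, $A^- = \{4,5\}$, $b=2$: the displayed effect-bounded condition holds, yet the extension grows from $1$ to $3$ tuples, and such $\Phi_F^\pm$ are easily realized by equality conditions on constants.

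What this shows is that the theorem is only true if ``effect bounded'' is read as bounding the \emph{effective} additions by the \emph{effective} deletions --- which is how the paper's English gloss and its Blocks World analysis treat it (there the argument counts actual insertions into and removals from $O$) --- in which case the inductive step is immediate and needs no normalization at all. So either state and use that reading of the hypothesis explicitly, or, under the literal reading of the displayed inequality, the inductive step fails. As written, the phrase ``modulo normalization \ldots the effect bounded condition directly yields'' asserts an implication that is false.
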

\begin{proof}
By induction on  executable situations.
\end{proof}

\begin{example}
  Many axiomatizations of the Blocks World are not effect bounded.
  For instance, suppose that we have fluents $OnTable(x,s)$, i.e.,
  block $x$ is on the table in situation $s$, and $On(x,y,s)$, i.e.,
 block $x$ is on block $y$ in situation $s$, with the
  following successor state axioms:
\[\begin{array}{l}
OnTable(x,do(a,s)) \equiv
 a = moveToTable(x) \\
\hspace{4em} {} \lor OnTable(x,s) \land
\neg \exists y.a = move(x,y)
\\[1ex]
On(x,y,do(a,s)) \equiv
 a = move(x,y) \lor On(x,y,s) \land {}\\
\hspace{0.2em}
\neg \exists z.(z \neq y \land a = move(x,z)) \land  a \neq moveToTable(x)
\end{array}\]
Then, performing the action $moveToTable(B1)$ will result in a net increase
in the number of objects that are on the table (assuming that the
action is executable and that $B1$ is not already on the table).  Thus,
fluent $OnTable$ is not effect bounded in this theory.

However, it is easy to develop an alternative axiomatization of the
Blocks World that is effect bounded.  Suppose that we use only the
fluent $On(x,y,s)$ and the single action $move(x,y)$, where $y$ is
either a block or the table, which is denoted by the constant $Table$.
We can axiomatize the domain dynamics as follows:
\[\begin{array}{l}
On(x,y,do(a,s)) \equiv
 a = move(x,y) \\
\hspace{2em} {}\lor On(x,y,s) \land
\neg \exists z.(z \neq y \land a = move(x,z))
\end{array}\]
That is, $x$ is on $y$ after action $a$ is performed in situation $s$
if and only if $a$
is moving $x$ onto $y$ or $x$ is already on $y$  in situation $s$  and
$a$ does not involve moving $x$ onto an object other than $y$.
We say that $move(x,y)$ is executable in situation $s$
if and only if $x$ is not the table in $s$, $x$ and $y$ are distinct, $x$ is clear and
on something other than $y$ in $s$, and $y$ is clear unless it is the table in $s$:
\[\begin{array}{l}
Poss(move(x,y),s) \equiv 
x \neq Table \land x \neq y \land \neg \exists z.On(z,x,s) \land {}\\
\hspace{2em} \exists z.(z \neq y
\land On(x,z,s)) \land (y = Table \lor \neg \exists z.On(z,y,s))
\end{array}\]
Then it is easy to show that any occurence of $move(x,y)$ in a
situation $s$ where the action is executable, adds $\langle x, y
\rangle$ to $O= \{\langle x', y' \rangle \mid On(x',y',s)\}$
while deleting $\langle x, y'' \rangle$ for some
$y''$ s.t.\ $y'' \neq y$, leaving $\card{O}$ unchanged.  Note that
we must require that $x$ be on something in the action precondition
axiom to get this.
Any action other than $move(x,y)$ leaves $O$ unchanged.  Thus $On$ is
effect bounded.

The precondition that $x$ be on something for $move(x,y)$ to be
executable means that we cannot move a new unknown block onto another
or the table. We must of course impose restrictions on ``moving new
blocks in'' if we want to preserve effect boundedness.  One way to do
this is to add an action $replace(x,y)$, i.e.\ replacing $x$ by
$y$.  We can specify its preconditions as follows:
\[\begin{array}{l}
Poss(replace(x,y),s) \equiv
x \neq Table \land y \neq Table \land x \neq y \land {}\\ 
\hspace{2em} \neg \exists z.On(z,x,s) \land \exists z.On(x,z,s) \land \neg \exists z.On(z,y,s) \land\neg \exists z.On(y,z,s)
\end{array}\]
That is, $replace(x,y)$ is executable in situation $s$ if and only if $x$ and $y$ are not
the table and are distinct, $x$ is clear and on something in $s$, and
$y$ is clear and not on something in $s$.
We can modify  the successor state axiom for $On$ to be:
\[\begin{array}{l}
On(x,y,do(a,s)) \equiv  a = move(x,y) \lor  {}\\
\hspace{2em}\exists z.(a = replace(z,x) \land On(z,y,s))\\
\hspace{2em}{}\lor On(x,y,s) \land
\neg \exists z.(z \neq y \land a = move(x,z)) \land {}\\
\hspace{4em}\neg \exists z.(z \neq y \land a = replace(x,z)),
\end{array}\]
where $On(x,y)$ becomes true if $x$ replaces $z$ and $z$ was on $y$ in
$s$, and $On(x,y)$ becomes false if $z$ replaces $x$ and $x$ was on $y$ in
$s$.  It is straightforward to show that this change leaves $On$ effect bounded.
\end{example}

\begin{example}
For another simple example (perhaps more practical), let's look at how
we could specify the ``favorite web sites'' menu of an internet
application.  We can assume that there is a fixed number of favorite
web sites positions on the menu, say $1$ to $k$.  We can
replace what is at position $n$ on the menu by the URL $u$ by
performing the action $replace(n,u)$.  This can be axiomatized as
follows:
\[\begin{array}{l}
FavoriteSites(n,u,do(a,s)) \equiv a = replace(n,u) \lor {}\\
\hspace{2em} FavoriteSites(n,u,s) \land \neg \exists u'.(u' \neq u \land a = replace(n,u'))
\end{array}\]
\[\begin{array}{l}
Poss(replace(n,u),s) \equiv
%\\ \hspace{2em} 
n \in [1..k] %\land URL(u) 
\land
\exists u'.FavoriteSites(n,u',s)
\end{array}\]
It is easy to show that in this axiomatization, $FavoriteSites$ is
effect bounded.  No action, including $replace(n,u)$, causes the
extension of the fluent to increase.
\end{example}

The $FavoriteSites$ fluent is typical of many domain
properties/relations, such as the passengers in a plane, the students
in a class, or the cars parked in a parking lot, where we can think of
the relation as having a finite capacity, and where we can reassign
the objects that are in it.
In some cases, the capacity bound may be difficult to pin down, e.g.,
the guests at a wedding, altough the capacity is by no means
unbounded.
As well, there are definitely examples where we need an unbounded
theory, e.g., to model a pushdown automata that can recognize a
particular context-free language.  The situation calculus is a very
expressive language that accomodates this, for instance, it has been
used to model Turing machines \cite{Reiter01-Book}.
%\cite{DBLP:journals/ai/LinD98}.
One might arguably want an unbounded ``favorite sites'' menu or
contacts directory, although this seems hardly practical.
Another interesting question is how such capacity constraints might apply
to a complex agent such as a robot that is modeling its environment.
Clearly, such a robot would have limitations with respect to how many environment
features/objects/properties it can memorize and track.
Finally, 
note that the condition $\card{\{\vec{x} \mid
  \Phi_F^+(\vec{x},a,s)\}} \le \card{\{\vec{x} \mid
  \Phi_F^-(\vec{x},a,s)\}}$ is not a FO formula and it is
difficult (in fact, undecidable) in general to determine whether a basic action theory is
effect bounded.
% \texttt{Reference?}.
%%
But as our examples illustrate, there are many % natural
instances where it is easy to show that the bounded effects condition
holds.

%%\input{6-fadingfluents}
%%%%%%%%%%%%%%%%%%%%%%%%%%%%%%%%%%%%%%%%%%%%%%%%%
\subsection{Fading Fluents Action Theories}

Fading fluents action theories  are based on the idea that
information over time loses strength and fades away unless it is
reinforced explicitly.
A fading fluents action theory with fading length given by a  natural number $\ell$ is an action theory where
a fluent $F(\vec{x},s)$ is defined  by making use of some auxiliary fluents $F_{i}(\vec{x},s)$, for $0\le i \le \ell$ where
$F(\vec{x},s) \doteq \bigvee_{0\le i \le \ell} F_{i}(\vec{x},s)$
and the auxiliary fluents have successor state axioms of the following special form:
\[F_{\ell}(\vec{x},do(a,s)) \equiv \Phi_F^+(\vec{x},a,s) \land \card{\{\vec{x} \mid \exists a.\Phi_F^+(\vec{x},a,s)\}} < b\] 
and for $0\le i < \ell$ we have:
\[F_{i}(\vec{x},do(a,s)) \equiv \lnot \Phi_F^+(\vec{x},a,s) \land
F_{i+1}(\vec{x},s)\land \lnot \Phi_F^-(\vec{x},a,s).\]
Thus, tuples are initially added to $F_{\ell}$, and progressively lose their strength,
moving from $F_i$ to $F_{i-1}$ each time an action occurs that does not
delete or re-add them; eventually they move out of $F_0$ and are forgotten.
Note that:
\begin{itemize}
\item Technically, a fading fluents action theory is a basic action
  theory having as fluents only the auxiliary fluents.

\item It is simple to obtain a fading fluent version of any basic
  action theory.
\item It is often convenient to include explicit
  $\refresh_F(\vec{x})$ actions, whose effect, when applied to a situation $s$, is simply to make $F_\ell(\vec{x}, do(\refresh_F(\vec{x},s)))$ true, and $F_i(\vec{x}, do(\refresh_F(\vec{x},s)))$ false for $0\le i< \ell$.
%\item 
Similarly it may be convenient to include $\forget_F(\vec{x})$ actions, whose effect is to make $F_i(\vec{x}, do(\forget_F(\vec{x},s)))$ false, for all $i$.
\end{itemize}

\begin{theorem}
  Let $\D$ be a fading fluents action theory with fading length $\ell$
  and initial database $\D_0$ such that $\D_0\models Bounded_b(S_0)$,
  for some $b$.  Then, $\D$ is bounded by $b$.
\end{theorem}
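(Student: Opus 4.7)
The plan is to prove the theorem by induction on the structure of executable situations, showing that each auxiliary fluent $F_i$ (for $0 \le i \le \ell$) satisfies $Bounded_{F_i,b}(s)$. Recall that, by the note following the definition, the actual fluents of the theory are the $F_i$'s (not the derived $F$), so it suffices to bound each of them individually by $b$.

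The base case, $s = S_0$, is immediate from the hypothesis $\D_0 \models Bounded_b(S_0)$, which already quantifies over every fluent, including all the auxiliary ones. For the inductive step, assume that $Bounded_b(s)$ holds and consider any executable action $a$ leading to $do(a,s)$. Handle the two forms of successor state axiom separately. For $F_\ell$, the right-hand side of its SSA conjoins $\Phi_F^+(\vec{x},a,s)$ with the guard $\card{\{\vec{x} \mid \exists a'.\Phi_F^+(\vec{x},a',s)\}} < b$. If the guard fails, then $F_\ell(\vec{x}, do(a,s))$ is false for every $\vec{x}$, so trivially $|F_\ell(do(a,s))| = 0 < b$; if the guard holds, then the set of tuples satisfying $F_\ell(\cdot, do(a,s))$ is contained in $\{\vec{x} \mid \exists a'. \Phi_F^+(\vec{x}, a', s)\}$, which by the guard has fewer than $b$ elements. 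Either way, $Bounded_{F_\ell,b}(do(a,s))$ holds.

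For $F_i$ with $0 \le i < \ell$, the SSA yields $F_i(\vec{x}, do(a,s)) \supset F_{i+1}(\vec{x}, s)$, so the extension of $F_i$ at $do(a,s)$ is a subset of the extension of $F_{i+1}$ at $s$. By the inductive hypothesis, the latter has fewer than $b$ tuples, so the same bound carries over to $F_i$ at $do(a,s)$. Combining the two cases, every auxiliary fluent is bounded by $b$ at $do(a,s)$, closing the induction.

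No single step looks like a serious obstacle: the fading fluents SSAs have been engineered precisely so that $F_\ell$ receives a hard cardinality guard while the lower-level $F_i$'s can only shrink the previous level. The only subtlety worth spelling out carefully is the handling of the guard on $F_\ell$'s SSA, since the guard talks about $\exists a'.\Phi_F^+(\vec{x}, a', s)$ whereas the definition of $F_\ell(\vec{x}, do(a,s))$ uses the specific $a$; the containment argument above is what lets the bound transfer. Everything else is a routine first-order counting argument within the inductive step.
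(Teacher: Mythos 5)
Your proof is correct and follows essentially the same route as the paper's: induction on executable situations, with the $F_\ell$ case handled by the cardinality guard in its successor state axiom and the $F_i$ ($i<\ell$) case by containment in the extension of $F_{i+1}$ at the previous situation. You merely spell out in more detail (correctly) why the guard bounds $F_\ell$ in both the case where it holds and where it fails, which the paper summarizes as ``positive effects are bounded by $b$''.
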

\begin{proof}
By induction on executable situations. 
For the base case,  we have that initially for each fluent, we have at
most $b$ facts, hence $S_0$  is bounded by $b$.
For the inductive case, by the inductive hypothesis we have   that $Bounded_{b}(s)$. Now, take an arbitrary action $a(\vec{t})$, and an arbitrary fluent $F$. Then: \myi $Bounded_{F_\ell,b}(do(a(\vec{t}),s))$, since positive effects are bounded by $b$ in its successor state axiom; and \myii for all $0\le i< \ell$, since $F_{i}$ depends on $F_{i+1}$ in the previous situation in its successor state axioms,  we have that $Bounded_{F_{i},b}(do(a(\vec{t}),s))$ since $Bounded_{F_{i+1},b}(s)$ and in the worst case the whole extension of $F_{i+1}$ in $s$ is carried over to $F_{i}$ in $do(a(\vec{t}),s)$. 
\end{proof}

\begin{example}
Imagine a sort of ``vacuum cleaner world'' where a robotic vacuum
cleaner may clean a room or region $r$ \cite{RussellNorvig}.  If a room/region is used,
then it becomes unclean.  We could model this using a fluent
$IsClean(r,s)$ with the following successor state axiom:
\[\begin{array}{l}
IsClean(r,do(a,s)) \equiv
a = clean(r) \lor IsClean(r,s) \land \neg a = use(r)
\end{array}\]
Clearly, cleanliness is a property that fades over time.
By applying the proposed transformation to this specification, we
obtain the following:
\[IsClean_{\ell}(r,do(a,s)) \equiv a = clean(r) \land 1 < b\]
and for $0\le i < \ell$ we have:
\[\begin{array}{l}
IsClean_{i}(r,do(a,s)) \equiv a \neq clean(r) \land
IsClean_{i+1}(r,s)\land a \neq use(r)
\end{array}\]
This is a somewhat more realistic model where after $\ell$ steps, we
forget about a room being clean.
\end{example}

\begin{example}
Consider a robot that can move objects around.  We might model this
using a fluent $At(objet,location,s)$ with the following successor state axiom:
\[\begin{array}{l}
At(o,l,do(a,s)) \equiv
a = moveTo(o,l) \lor a = observe(o,l) \lor {} \\
\hspace{2em} At(o,l,s) \land {} a \neq takeAway(o)\land {}\\
\hspace{2em}\neg \exists l'.l' \neq l \land (a = moveTo(o,l') \lor  a = observe(o,l'))\\
\end{array}\]
Here, $moveTo(o,l)$ represents the robot's moving object $o$ to
location $l$.  We also have an action $observe(o,l)$ of observing that
object $o$ is at location $l$, a kind of exogenous action that might be
produced by the robot's sensors.
As well, we have another exogenous action $takeAway(o)$,
representing another agent's taking object $o$ to an unknown
location $l$.  If the world is dynamic, most objects would not remain
where they are indefinitely, even if the robot is unaware of anyone
moving them.  
By applying the proposed transformation to this specification, we
obtain a theory where information about the location of objects fades
unless it is refreshed by the robot's observations or actions.
After $\ell$ steps, the robot forgets the location of an object 
it has not observed or moved; moreover, this happens immediately
if the object is taken away by another agent.
\end{example}

\begin{example}
As a final example, consider a softbot that keeps track of which hosts
are online.  We might model this using a fluent $NonFaulty(host,s)$
with the following successor state axiom:
\[\begin{array}{l}
NonFaulty(h,do(a,s)) \equiv
a = pingS(h) \lor NonFaulty(h,s) \land a \neq pingF(r)
\end{array}\]
Here the action $pingS(h)$ means that the host $h$ has been pinged
successfully, and the action $pingF(h)$ means that the host $h$ has
not responded to a pinging within the allocated time.  As time passes,
we may not want to assume that currently non-faulty hosts remain 
non-faulty.  If we apply the proposed transformation to this specification, we
obtain a theory where information about hosts being non-faulty fades.
The agent must periodically ping the host successfully to maintain its knowledge
that the host is non-faulty.
\end{example}

An interesting natural example of such fading representations is the
pheromones left by insects.  Note that it is also possible to model
fading with time as opposed to fading with the number of actions,
though in this case we have to bound how many actions can occur
between clock ticks.

%%% Local Variables:
%%% mode: latex
%%% dirvars: nil
%%% save-place: t
%%% TeX-master: "main"
%%% End:

%%%%%%%%%%%%%%%%%%%%%%%%%%%%%%%%%%%%%%%%%%%%%%%%%%%%
\section{Expressing Dynamic Properties}
\label{sec:mucalc}

To express properties about Situation Calculus action theories, we
introduce a specific logic, inspired by the
$\mu$-calculus~\cite{Emerson96,BS07}, one of the most
powerful temporal logics, subsuming both linear time logics, such as
Linear Temporal Logic (LTL) \cite{PnuelliLTL97} and
Property-Specification Language (PSL) \cite{EiFi06}, and branching time logics such
as Computational Tree Logic CTL \cite{ClarkeE81CTL} and
CTL$^*$~\cite{EmHal83CTL*}. 
The main characteristic of the $\mu$-calculus is its ability to express directly
least and greatest fixpoints of (predicate-transformer) operators formed using
formulae relating the current state to the next one. By using such fixpoint
constructs one can easily express sophisticated properties defined by induction
or co-induction. This is the reason why virtually all logics used in
verification can be considered as fragments of $\mu$-calculus.
Technically, the $\mu$-calculus separates local properties, asserted on the current
state or on states that are immediate successors of the current one, from
properties talking about states that are arbitrarily far away from the current
one \cite{BS07}. The latter are expressed through the use of fixpoints.
Our variant of the $\mu$-calculus is able to express first-order properties over situation.
At the same time, it allows for a controlled form of first-order quantification
across situations, inspired by \cite{DBLP:conf/pods/HaririCGDM13}, 
where the quantification ranges over objects that persist in the
extension of some fluents across situations. 

%\footnote{Note that in \cite{DBLP:conf/kr/GiacomoLP12} quantification across differen%t situations was not allowed.}
% where the quantification ranges over
% objects across situations only as long as such objects persist in the
% extension of some fluents.\footnote{Note that in \cite{DBLP:conf/kr/GiacomoLP12} quantification across different situations was not allowed.}

Formally, we define the logic \mymu as:
\[\begin{array}{l}
  \Phi ::=  \varphi  \mid \lnot \Phi \mid \Phi_1 \land \Phi_2 \mid \exists
    x. \inadom(x) \land \Phi \mid {}\\
  \hspace{2.5em} \inadom(\vec{x}) \land \DIAM{\Phi}  \mid
   \inadom(\vec{x}) \land\BOX{ \Phi}\mid  Z \mid \mu Z.\Phi
\end{array}
\]
In addition, we use the usual FOL abbreviations for $\lor$, $\limp$,
$\equiv$, and $\forall$, plus the standard $\mu$-calculus abbreviation 
$\nu Z. \Phi = \neg \mu Z. \neg \Phi[Z/\neg Z]$.
%%
%  the following abbreviations: $\Phi_1 \lor \Phi_2 =
% \neg (\neg\Phi_1 \land \neg \Phi_2)$, $\forall
%     x. \inadom(x) \limp \Phi = \neg \exists
%     x. \inadom(x) \land \neg \Phi$,
% $\inadom(\vec{x}) \limp
% \DIAM{\Phi} = \neg (\inadom(\vec{x}) \land \BOX{\neg \Phi})$ and
% $\inadom(\vec{x}) \limp \BOX{\Phi} = \neg (\inadom(\vec{x}) \land
% \DIAM{\neg \Phi})$, and $\nu Z. \Phi = \neg \mu Z. \neg \Phi[Z/\neg
% Z]$.  
%%
Let us comment on some aspects of \mymu:

\begin{itemize}

\item $\varphi$ in the expression above is an arbitrary (possibly open) uniform
  \emph{situation-suppressed} (i.e., with all situation arguments in
  fluents suppressed) situation calculus FO formula, in which the only
  constants that may appear are those explicitly mentioned in the
  situation calculus theory beyond $\D_{uno}$, i.e., those occurring in 
  $\D_{poss} \cup \D_{ssa} \cup \D_0$.\footnote{
% This restriction simplifies our proof of decidability, as we can
%     construct a bisimilar finite transition system that does not
%     depend on the query. 
Clearly, we can get around this assumption by adding  to the
    initial situation description, a new ``dummy'' fluent that holds for a bounded number of constants.
}
Observe that quantification inside $\varphi$ is not subject to any restriction;
in particular, $\inadom(\cdot)$ is not required. 

\item The boolean connectives have their usual meaning. Quantification
  over individuals in $\exists
    x. \inadom(x) \land \Phi $ and $ \forall
    x. \inadom(x) \limp \Phi$ (i.e., $\lnot\exists
    x. \inadom(x) \land \lnot\Phi $) has the expected meaning, with the proviso that
  individuals over which quantification ranges must belong to the
active domain of the current situation, i.e., belong to the extension
of some fluent in the current situation, as required by $\inadom(\cdot)$.

\item Intuitively, the use of $\inadom(\cdot)$ in \mymu ensures that objects
are only considered in quantification across situations if they persist along the
system evolution, while the evaluation of a formula with objects that
are not present in the current extension of the fluents trivially
evaluates to either false for $\exists$ or true for $\forall$.
In particular:
\begin{itemize}

\item $\inadom(\vec{x})\land \DIAM{\Phi}$ denotes the set of
  situations $s$ such that for \emph{some} action $a$ that is
  executable in $s$, we have that $\Phi$ holds in $do(a,s)$, with the
  variables occurring free in $\Phi$, $\vec{x}$, assigned to objects
  that are in the active domain of the current situation $s$.

\item $\inadom(\vec{x})\land \BOX{\Phi}$ denotes those situations $s$
  such that for \emph{all} actions $a$ that are executable in $s$, we
  have that $\Phi$ holds in $do(a,s)$ with the variables occurring
  free in $\Phi$ are assigned to objects that are in the active domain
  of the current situation $s$.

\item $\inadom(\vec{x})\limp \DIAM{\Phi}$ (i.e., $\neg (\inadom(\vec{x}) \land \BOX{\neg \Phi})$) denotes those
situations $s$ such that for \emph{some} action $a$ that is executable in $s$, we
have that $\Phi$ holds in $do(a,s)$ \emph{as long as} the variables occurring free in $\Phi$ are assigned to objects that are in the active domain of the current situation $s$.

\item$\inadom(\vec{x})\limp \BOX{\Phi}$ (i.e., $\neg (\inadom(\vec{x}) \land \DIAM{\neg \Phi})$) denotes those
situations $s$ such that for \emph{all} actions $a$ that are executable in $s$, we
have that $\Phi$ holds in $do(a,s)$ \emph{as long as} the variables occurring free in $\Phi$ are assigned to objects that are in the active domain of the current situation $s$.
\end{itemize}

\item  $Z$ is an SO (0-ary) predicate variable.

\item  $\mu Z.\Phi$ and $\nu Z.\Phi$ are  \emph{fixpoint formulas} and denote
respectively the \emph{least}  and the \emph{greatest fixpoint} of the formula
$\Phi$ seen as a predicate transformer $\lambda Z.\Phi$. To guarantee the existence of such fixpoints, as usual in the $\mu$-calculus, formulae of the form $\mu
Z.\Phi$ and $\nu Z.\Phi$ must satisfy \emph{syntactic monotonicity}
of $\Phi$ with respect to $Z$, which states that every occurrence of the variable
$Z$ in $\Phi$ must be within the scope of an even number of negation
symbols.  
\item $\mu Z.\Phi$ and  $\nu Z.\Phi$ may contain free individual
  variables, which are those 
%the free individual variables 
of $\Phi$; technically these act as \emph{parameters} of the fixpoint formula, i.e., the value of fixpoints $\mu Z.\Phi$ and $\nu Z.\Phi$  is determined only once an assignment to the free individual variables is given, see, e.g., \cite{Libkin04} (chap.~10).
\item Finally, with a slight abuse of notation, we write
$\inadom(x_1,\ldots,x_n) = \bigwedge_{i \in \{1,\ldots,n\}} \inadom(x_i)$, and we assume that
in
$\inadom(\vec{x}) \land \DIAM{ \Phi}$ and $\inadom(\vec{x}) \land \BOX{\Phi}$,
the variables $\vec{x}$ are exactly
the free individual variables of $\Phi$, after we have substituted each
bound predicate variable $Z$ in $\Phi$ by the
corresponding binding fixpoint formula $\mu Z.\Phi'$ or $\nu Z.\Phi'$.
\end{itemize}

We can express arbitrary temporal/dynamic properties using least and
greatest fixpoint constructions.  For instance, to say that it is
possible to eventually achieve $\varphi$, where $\varphi$
is a closed situation suppressed formula, we
use the least fixpoint formula
$
\mu Z. \varphi \lor \DIAM{Z}
$.
Similarly, we can use a greatest fixpoint formula $\nu Z. \varphi
\land \BOX{Z}$  to express that $\varphi$ must always hold.
%%Notice that for formulas without quantification across situations we have 
%%no occurrences of the special predicate $\inadom(\cdot)$.

\begin{example}
  We can give several examples of properties that we may want to
  verify for the warehouse robot domain of Example 1.  First, suppose
  that we want to say that it is possible to eventually
  have shipped all items that are in the factory.  This
  can be expressed in our language as a least fixpoint formula:
\[
\mu Z. \neg\exists x \exists l. At(x,l) \lor \DIAM{Z}
\]
This formula, let's call it $\Phi_{eg9}$, corresponds to the CTL
formula $EF \neg\exists x \exists l. At(x,l).$ In the above, we rely
on the fact that if there are no items left in the factory, then all
items that were there must have been shipped.  It is easy to check
that the theory of Example 1, $\D_1$, entails that this formula holds
in the initial situation $S_0$, formally $\D_1 \models \Phi_{eg9}$.
In fact, we can also show that the above property always holds:
\[
\D_1 \models \nu Z. \Phi_{eg9} \land \BOX{Z}.
\]
This corresponds to the CTL formula $AG EF \neg\exists x \exists l. At(x,l).$
Note that more generally, a formula $\mu Z. \varphi \lor \DIAM{Z}$, i.e.,
$EF \varphi$ in CTL, represents an instance of a
planning problem; it is entailed by a theory if there exists an
executable sequence of actions such that the goal  $\varphi$
holds afterwards.

A second example property that we may want to verify is that it is
possible to eventually have all items shipped out of the factory and
then later to eventually have all locations filled with items.  This
can be expressed as follows:
\[
\D_1 \models \mu Z .[(\neg \exists l \exists x. At(x,l)) 
\land \mu Z .(\forall l. IsLoc(l) \limp  \exists x. At(x,l)) \lor \DIAM{Z})] \lor \DIAM{Z}
\]
or equivalently in CTL notation
\[
\D_1 \models EF( (\neg \exists l \exists x. At(x,l)) \land EF (\forall
l. IsLoc(l) \limp \exists x . At(x,l))).
\]

Our next example concerns a safety property; we can show that it is
always the case that if an item is at the shipping dock it can be
moved away or shipped out next:
\[
\D_1 \models \nu Z.  [(\exists x. At(x,ShipDock))  \supset \DIAM{(\neg
  \exists x. At(x,ShipDock)) }]\land \BOX{Z}
\]
or equivalently in CTL notation
\[
\D_1 \models AG [(\exists x. At(x,ShipDock))  \supset \DIAM{(\neg
  \exists x. At(x,ShipDock)) }].
\]
However, this is not the case for other locations, as it is possible
for all locations to become occupied, at which point the agent must
ship the item at the shipping dock before it can transfer the item at
the location of interest there:
\[
\D_1 \models \neg \nu Z.  [\forall l. (\inadom(l) \limp 
(\exists x. At(x,l) \limp 
(\inadom(l) 
\limp \DIAM{(\neg \exists x. At(x,l))})))] \land \BOX{Z}
\]
which simplifies to (also observing that $\exists x. At(x,l)$ implies  $\inadom(l)$):
\[
\D_1 \models \neg \nu Z.  [\forall l. 
(\exists x. At(x,l)
\limp \DIAM{(\neg \exists x. At(x,l)) })]\land \BOX{Z}.
\]

But it is always possible to clear a location in two steps:
% \[
% \D_1 \models \nu Z.  [\forall l.  (\inadom(l)  \limp 
%    (\exists x. At(x,l)  \limp
%      (\inadom(l) \limp 
%         (\DIAM{
%            (\inadom(l) \land \DIAM{(\neg
%   \exists x. At(x,l)) })}))))]\land \BOX{Z}
% \]
\[
\D_1 \models \nu Z.  [\forall l.  
   (\exists x. At(x,l)  \limp
        (\DIAM{
           (\inadom(l) \land \DIAM{(\neg
  \exists x. At(x,l)) })}))]\land \BOX{Z}
\]
The above involves quantification across situations, and we require
the location involved to persist (it trivially does).

Now, let's consider another example were we quantify across situations.
We may want to say that it is always the case that if an item is in
the warehouse, it is possible to have it persist until it is
eventually shipped out:
%% \todo{Giuseppe: Check the remaining examples, wrt the use of $\inadom()$.} %% DONE
\[
\D_1 \models \nu Z.  [\forall x. (\exists l. At(x,l))  \supset 
\mu Z .(\neg \exists l. At(x,l)) \lor \inadom(x) \land \DIAM{Z}]
\land \BOX{Z}.
\]
or equivalently in CTL notation
\[
\D_1 \models AG [\forall x.(\exists l. At(x,l))  \supset
EF \neg \exists l. At(x,l)].
\]
Note that the weaker property that it is always the case that if an item is in
the warehouse, it is possible to have it shipped out eventually \emph{if it
persists} also holds:
\[
\D_1 \models \nu Z.  [\forall x. (\exists l. At(x,l))  \supset 
\mu Z .(\neg \exists l. At(x,l)) \lor (\inadom(x) \supset \DIAM{Z})]
\land \BOX{Z}.
\]

Finally, consider the property that if an item is eventually shipped,
it is possible for it to eventually come back:
\[
\forall x.\exists l. At(x,l))  \supset AG [\neg \exists l. At(x,l) \supset
EF \exists l. At(x,l)].
\]
We cannot express this property in \mymu\ because $x$ does not persist after it has been shipped.  The closest translation
\[\begin{array}{l} 
\forall x. \inadom(x) \land \exists l. At(x,l) \limp \\
\qquad\nu Z.  [\neg
\exists l. At(x,l) \supset \mu Z .(\exists l. At(x,l)) \lor \inadom(x)
\land \DIAM{Z}] \land \BOX{Z}.
\end{array}\]
is always false because if $x$ is not at some location, then  it is not in
the active domain and $\inadom(x)$ is false.
%
% \textbf{[Examples taken from PODS 2013, to be adapted to the running example]}
% Consider 
% \[\begin{array}{l} 
% \nu X. (\forall x.   Stud(x)\limp {}\\
% \qquad\mu Y.(\exists y. Grad(x,y) \lor (\inadom(x) \land \DIAM{ Y})) \land \BOX{X})
% \end{array}
% \]
% states that, along every path, it is always true, for each
% student $x$, that there exists an evolution in which $x$ persists in
% situations until she eventually graduates (with some final mark $y$).
% Instead the formula
% \[\begin{array}{l} 
% \nu X. (\forall x. Stud(x)\limp {}\\
% \qquad\mu Y.(\exists
% y. Grad(x,y) \lor (\inadom(x) \limp \DIAM{Y})) \land \BOX{X})
% \end{array}
% \]
% states that, along every path, it is always true, for each
% student $x$, that there exists an evolution in which if $x$ persists, she eventually graduates (with final mark $y$).
\end{example}

Next we turn to semantics.  Since $\mymu$ contains formulae with free individual and 
predicate  variables, given a model $\M$ of an action theory $\D$
with object domain $\Delta$ and situation domain $\S$,  we introduce a \emph{valuation} $(\vfo,\vso)$ formed by an
\emph{individual variable valuation} $\vfo$ which maps each individual variable $x$ to an object $\vfo(x)$ in $\Delta$, and a \emph{parametrized predicate variable valuation}
$\vso$, which, given the valuation of the individual variables $\vfo$, maps each predicate variable to $Z$ to subset $\vso(\vfo,Z)$ of situations
in $\S$ (notice that for each individual variable valuation $\vfo$ the
mapping may change).
Given a valuation $(\vfo,\vso)$, we denote by $(\vfo,\vso)[x/d]$ the
valuation $(\vfo',\vso')$ such that:
\begin{inparaenum}[\itshape(i)]
\item for every individual variable $y\neq x$ we have  $\vfo'(y)=\vfo(y)$ and $\vfo'(x)=d$, 
\item for every predicate variable $Z$ we have $\vso'(\vfo',Z)= \vso(\vfo',Z)$
\end{inparaenum}
Sometimes we also use the notation  $\vfo[\vec x/\vec d]$ to denote $\vfo'$
such that for every individual variable $y\neq x$ we have  $\vfo'(y)=\vfo(y)$ and $\vfo'(x)=d$.
To express that $\vfo$ assigns the
values $\vec d$ to the variables $\vec x$, we use the notation
$\vec x/\vec d$.
Analogously, we denote by $(\vfo,\vso)[Z/\E]$ the
valuation $(\vfo',\vso')$ such that:
\begin{inparaenum}[\itshape(i)]
\item for every individual variable $x$ we have  $\vfo'(x)=\vfo(x)$, 
\item for every predicate variable $Y\neq Z$ we have
  $\vso'(\vfo',Y)= \vso(\vfo,Y)$, and for $Z$ we have
  $\vso'(\vfo',Y)= \E$.
\end{inparaenum} 
Also we denote by $\adom^{\M}(s)$, the {\em active (object) domain} of
situation $s$ in the model $\M$, which is the set of all objects
occurring in some $F^\M(s)$ ($F\in\F$) or as the denotation in $\M$ of
a constant in the set $C$ of object constants occurring in
$\D_{poss}\cup \D_{ssa}\cup \D_0$.
Then we assign semantics to formulae by associating to a model $\M$, and a valuation $(\vfo,\vso)$
an \emph{extension function} $\MODA{\cdot}$, which maps $\mymu$
formulae to subsets of $\S$ as inductively defined as follows (for clarity, we interpret
explicitly also the abbreviation $\nu Z.\Phi$):
\[\begin{array}{lcl}
\MODA{\varphi} &= & \{s \in \S\mid \M,\vfo\models \varphi[s] \}\\
\MODA{\lnot \Phi} &= & \S - \MODA{\Phi} \\
\MODA{\Phi_1 \land \Phi_2} &= &\MODA{\Phi_1}\cap\MODA{\Phi_2}\\
\MODA{\exists x.\,\inadom(x)\land\Phi} &=&
      \{s\in\S \mid\exists d\in \adom^{\M}(s).\,
      s \in \MODAX{\Phi}{[x/d]}\}\\
 \MODA{\inadom(\vec{x})\land \DIAM{\Phi}} &=&
      \{s\in\S \mid \vec{x}/\vec{d} \in \vfo \mbox{ and } \vec{d}\subseteq\adom^{\M}(s) \mbox{ and }\\
    & & \quad\exists a. \, (a,s)\in Poss^\M \mbox{ and } do^\M(a,s) \in \MODA{\Phi}\} \\

    \MODA{\inadom(\vec{x})\land \BOX{\Phi}} &=&
      \{s\in\S \mid \vec{x}/\vec{d} \in \vfo \mbox{ and } \vec{d}\subseteq\adom^{\M}(s) \mbox{ and }\\
    & & \quad\forall a.\, (a,s)\in Poss^\M \mbox{ implies } do^\M(a,s) \in \MODA{\Phi}\} \\
\MODA{Z} &= & \vso(\vfo,Z)\\
\MODA{\mu Z.\Phi} &= & \bigcap\{\E\subseteq\S \mid \MODAZ{\Phi}{[Z/\E]}
\subseteq \E \}\\
\MODA{\nu Z.\Phi} &= & \bigcup\{\E\subseteq\S \mid \E \subseteq \MODAZ{\Phi}{[Z/\E]}\}
\end{array}
\]
% where for convenience we interpret
%explicitly also the abbreviations$\BOX{\Phi}$ and $\nu Z.\Phi$.
%%
Notice that given a (possibly open) uniform situation-suppressed
situation calculus formula $\varphi$, slightly abusing notation, we
denote by $\varphi[s]$ the corresponding formula with situation
calculus argument reintroduced and assigned to situation $s$.

Intuitively, the extension function $\MODA{\cdot}$ assigns the
following meaning to the $\mymu$ constructs:%
\footnote{ By mentioning situations explicitly, it is also possible to
  define these operators directly in second-order logic
%, see  \cite{DeTR97}, 
as follows \cite{DeTR97}:
\[
\begin{array}{rcl}
\mu Z.\Phi[s] &\equiv& \forall Z.(\forall \hat{s}.\Phi[\hat{s}] \supset Z(\hat{s})) \supset  Z(s)\\
\nu Z.\Phi[s] &\equiv& \exists Z.(\forall \hat{s}.Z(\hat{s}) \supset \Phi[\hat{s}]) \land  Z(s)
\end{array}
\]
Note that $\Phi$ may contain free individual and predicate
variables, and indeed these remain free in $\mu Z.\Phi$ and
$\nu Z.\Phi$.  In this paper, we prefer to leave the situation
implicit to allow for interpreting formulas over arbitrary transition
systems, including finite ones, and hence relating our logic to
standard $\mu$-calculus.}

\begin{itemize}
\item The extension of $\mu Z.\Phi$ is the \emph{smallest subset}
$\E_\mu$ of situations such that, assigning to $Z$ the extension
$\E_\mu$, the resulting extension of $\Phi$ is contained in
$\E_\mu$ (with the assignments of the individual variables and the other predicate variables given by $\vfo$ and $\vso$, respectively). That is, the extension of $\mu Z.\Phi$ is the \emph{least
fixpoint} of the operator $\MODAZ{\Phi}{[Z/\E]}$. Notice that for each valuation of the free individual variables in $\Phi$ this operator will be different: the free variables act as \emph{parameters} of the predicate transformer $\lambda Z.\Phi$.

\item Similarly, the extension of $\nu Z.\Phi$ is the \emph{greatest
subset} $\E_\nu$ of situations such that, assigning to $Z$ the extension
$\E_\nu$, the resulting extension of $\Phi$ contains $\E_\nu$. That
is, the extension of $\nu Z.\Phi$ is the \emph{greatest fixpoint} of
the operator $\MODAZ{\Phi}{[X/\E]}$. 
\end{itemize}

Notice also that when a $\mymu$ formula $\Phi$ is closed, its
extension $\MODA{\Phi}$ does not depend on the 
valuation $(\vfo,\vso) $.  In
fact, the only formulas of interest in verification are those that are
closed.

\paragraph{Observation 1}
Observe that we do not have actions as parameters of $\BOX{\cdot}$ and
$\DIAM{\cdot}$. However we can easily remember the last action
performed, and in fact a finite sequence of previous actions. To do
this, for each action type $A(\vec{x})$, we introduce a fluent $Last_A(\vec{x},s)$ with successor state axiom:
\[Last_A(\vec{x},do(a,s)) \equiv a=A(\vec{x})\]
We can also remember the second last action by introducing  fluents $SecondLast_A(\vec{x},s)$ with successor state axioms:

\[SecondLast_A(\vec{x},do(a,s)) \equiv Last_A(\vec{x},s)\]
Similarly for the third last action, etc.

In this way we can store a finite suffix of the history in
the current situation and write FO formulas relating the individuals
in the parameters of actions occurring in the suffix. For example, we can
write (assuming  for simplicity that the mentioned fluents have all the same arity):
\[\mu Z. (\exists \vec{x} . Last_A(\vec{x})\land SecondLast_B(\vec{x})) \lor \DIAM{Z},\]
i.e., it is possible to eventually do $B(\vec{x})$ followed by
$A(\vec{x})$ for some $\vec{x}$.

\paragraph{Observation 2} 
Observe that while our $\mymu$ allows for 
quantification over objects that persist across situations, 
the  expressiveness of bounded action theories means that we can often to avoid its use. 
For instance, we can
easily introduce a finite number of ``registers'', i.e., fluents that
store only one tuple, which can be used to store and refer to tuples
across situations.  We can do this by introducing fluents
$Reg_i(\vec{x},s)$ and two actions $setReg_i(\vec{x})$ and $clearReg_i$ to
set and clear the register $Reg_i$ respectively. These are axiomatized
as follows:
\[\begin{array}{l}
Reg_i(\vec{x}, do(a,s)) \equiv a=setReg_i(\vec{x}) \lor {}\\
\qquad  Reg_i(\vec{x},s) \land a \neq clearReg_i\\[.5ex]
Poss(setReg_i(\vec{x}),s) \equiv \lnot\exists\vec{x}. Reg_i(\vec{x},s)\\
Poss(clearReg_i,s) \equiv \exists\vec{x}. Reg_i(\vec{x},s)
\end{array}
\]
For example, we can write (assuming for simplicity that the mentioned fluents have all the same arity):
\[\mu Z. (\exists \vec{x} . Reg_i(\vec{x})\land F(\vec{x})\land \DIAM{\exists \vec{y} . Reg_i(\vec{y})\land F'(\vec{y})}) \lor \DIAM{Z}\]
This formula says that there exists a sequence of actions where eventually the tuple referred to by register $i$ has property $F$ and there is an action after which it has property $F'$. 
Note also that this approach can be used to handle some cases of
quantification over objects that don't persist across situations.
%%
% \todo{Why is the property language as it is?  Briefly discuss why we
%   have restrictions on quantifying across situations, no actions,
%   etc.}

\section{Verification  of Bounded Action Theories with Complete Information on $S_0$}
\label{sec:verification}
We now show that verifying $\mymu$ properties against bounded action
theories is decidable.
In this section we focus on action theories with complete information
on the initial situation. The case of incomplete information is addressed in the next section.
In particular, we assume that the extension of all fluents in the
initial situation $S_0$ is given as a (bounded) database.  We further assume that the domain of
interpretation for objects $\Delta$ is also given. Notice that, as a consequence of the
presence of infinitely many object constants and the unique name
assumption on them $\D_{uno}$, such an object domain $\Delta$ must be infinite.\footnote{By the way in case of action theories
  with a given finite object domain, verification becomes easily
  reducible to model checking, since the corresponding situation
  calculus model it is bisimilar to a finite propositional transition
  system.} 
As a result of these two assumptions, we have that the action theory $\D$ admits only one model $\M_\Delta$
\cite{PirriR:JACM99}, which, with a little abuse of terminology, we call \emph{the} model of the action theory $\D$ (though in order to define it we need $\Delta$ as well). 

Our main result is the following.
\begin{theorem}\label{th:dec-verif}
 Let $\D$ be a bounded action theory with initial situation described
  by a (bounded) database and with infinite object domain $\Delta$, and let $\Phi$ be a closed $\mymu$ formula. Then
  checking whether $\D\models \Phi$ is decidable.
\end{theorem}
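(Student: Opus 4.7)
The plan is to prove decidability by constructing a finite abstract transition system that is $\mymu$-equivalent to the (infinite) transition system induced by $\D$, and then invoking decidability of $\mu$-calculus model checking on the finite system.

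First, I would associate to $\D$ its transition system $\T_{\D}$, whose states are the executable situations in $\M_\Delta$ labeled by the restriction of the fluent extensions to the active domain $\adom^{\M_\Delta}(s)$, and whose transitions are $s \to do^{\M_\Delta}(a,s)$ for every $a$ with $(a,s) \in Poss^{\M_\Delta}$. By boundedness, $\Card{\adom^{\M_\Delta}(s)} \le K$ in every executable $s$, where $K$ depends only on $b$, the number of fluents, their maximum arity, and the finite set $C$ of object constants explicitly occurring in $\D_{poss}\cup\D_{ssa}\cup\D_0$.

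Next, I would quotient the states of $\T_{\D}$ by isomorphism: two states $s, s'$ are equivalent if there is a bijection between $\adom(s)$ and $\adom(s')$ fixing every constant in $C$ and preserving all fluent extensions. Since every active domain fits inside a fixed pool of $K$ placeholder objects, and the fluent labelings over $K$ elements form a finite set, this yields a candidate finite abstract transition system $\T^{\mathit{abs}}$. The transitions of $\T^{\mathit{abs}}$ can be defined by picking, for each concrete transition, the isomorphism class of its target, exploiting that $Poss^{\M_\Delta}$ and $do^{\M_\Delta}$ are invariant under renamings of objects that fix $C$.

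The main obstacle, and the heart of the argument, is to show that $\T_{\D}$ and $\T^{\mathit{abs}}$ are bisimilar in a sense strong enough to preserve $\mymu$. Because $\mymu$ supports cross-situation quantification via the $\inadom$ guards in $\inadom(\vec{x}) \land \DIAM{\Phi}$ and $\inadom(\vec{x}) \land \BOX{\Phi}$, the appropriate relation must be a \emph{persistence-preserving bisimulation}, indexed by partial injective maps from concrete objects in $\adom(s)$ to abstract placeholders, and satisfying the usual back-and-forth conditions adjusted so that \emph{(i)} objects appearing in both source and target of a transition retain their correspondence, and \emph{(ii)} when an action introduces fresh objects, a matching fresh representative is chosen on the abstract side. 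Here the infinitude of $\Delta$ (and the infinite supply of object constants under $\D_{uno}$) is essential: it guarantees that fresh representatives are always available to mimic concrete moves, mirroring the technique developed for data-aware processes in \cite{DBLP:conf/pods/HaririCGDM13,Belardinelli.etal:JAIR14}.

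Once this bisimulation is established, a structural induction on $\mymu$ formulas --- using, for the fixpoint cases, invariance under bisimulation of the monotone approximants of $\mu Z.\Phi$ and $\nu Z.\Phi$ --- yields that for every closed $\Phi$ and every executable $s$, $s \in \MODA{\Phi}$ in $\T_\D$ iff its image satisfies $\Phi$ in $\T^{\mathit{abs}}$. Since $\T^{\mathit{abs}}$ is finite and each state carries a finite first-order interpretation, the local first-order formulas $\varphi$ can be evaluated effectively, quantifications $\exists x.\inadom(x)\land\Phi$ range over the finite active domain, and the remaining structure reduces to propositional $\mu$-calculus model checking, which is decidable. Hence $\D \models \Phi$ is decidable, proving the theorem.
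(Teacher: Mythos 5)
Your overall architecture --- induced transition system, finite abstraction, persistence-preserving bisimulation proved invariant via infinitary approximants of the fixpoints, then finite-state model checking --- is exactly the paper's. But there is a genuine gap in how you build the finite abstraction. Quotienting the concrete states by plain isomorphism (bijections of active domains fixing $C$) and declaring a transition $[s]\to[s']$ whenever $s\to s'$ is too coarse to be persistence-preserving bisimilar to $T_\M$. Concretely: suppose in $s$ we have $F=G=\{o_1\}$, one executable action yields a successor with $F=\{o_1\}$, $G=\emptyset$ (the object persists), and another yields $F=\{o_2\}$, $G=\emptyset$ (it is replaced by a fresh object). The two successors are isomorphic, so your quotient gives a single abstract successor $t$ of $[s]$; yet the back-and-forth condition forces the extended bijection to send $o_1$ to the same placeholder $p$ in both matches, and the first concrete transition then requires $F^{t}=\{p\}$ while the second requires $F^{t}=\{p'\}$ for some $p'\neq p$. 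No single abstract state satisfies both, and the formula $\exists x.\,\inadom(x)\land G(x)\land \DIAM{(\inadom(x)\land F(x))}$ genuinely separates the two concrete successors, so the abstraction must keep them apart.

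The paper's Procedure~\ref{alg:tf} does precisely this: a freshly computed successor interpretation $\I'$ is merged with an existing state $q'$ only if some isomorphism between $\I'$ and $\I(q')$ is the \emph{identity on} $\adom(\I(q))$, i.e., only if the merge respects persistence. The price is that finiteness is no longer the trivial count of isomorphism classes over $K$ placeholders; it needs a separate argument (Lemmas~\ref{lem:qbound} and~\ref{lem:AdomQbound}, Theorem~\ref{thm:termination}) that the \emph{total} pool of objects ever introduced stays bounded, which in turn hinges on choosing the fresh action parameters $O$ so as to maximize reuse of objects already occurring in the constructed states. Your proposal is missing this second half of the argument. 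Two smaller omissions: evaluating the FO components over the infinite $\Delta$ requires first rewriting them into domain-independent form (Theorem~\ref{thm:dom-ind}) before restricting attention to the finite active domains, and action terms must be eliminated from the formula and from the precondition/successor-state right-hand sides (Theorem~\ref{prop:sitcalc-fo-act}) before any of this goes through.
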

%%
%%We next describe the schema of the proof of this result.

The proof is structured as follows. Firstly, we show that 
actions terms can be eliminated from $\mymu$ formulas with out loss of generality (cf.~Section~\ref{sec:action-suppression}).
Exploiting this, we show that  
only the fluent extensions in each situation and not situations themselves are relevant when evaluating $\mymu$ formulas (cf.~Section~\ref{sec:situation-suppression}).
In this step, we also prove that
checking FO formulas and answering FO queries 
\emph{locally}, i.e., on a given situation, are, 
respectively, decidable and effectively computable, under boundedness.

Then, based on the observations above,  we introduce \emph{transition systems} 
as alternative structures  (to the models of 
situation calculus action theories), over which $\mymu$ formulas can be evaluated.
Transition systems are less rich than the models of situation calculus action
theories, as they do not reflect, in general, the structure of the situation tree.
%(actions and situations are not required).
Yet, they can accommodate the information of models needed to evaluate  
$\mymu$ formulas (cf.~Section~\ref{sec:mu-calc-ts} and~\ref{sec:ind-ts}). %(cf.~Theorem~\ref{th:InducedTS}).
In this step, we define the notion of
%%{\em active domain} \cite{AbHV95}, 
%%i.e., the domain containing all the objects occurring in the extension 
%%of fluents at a given situation,
%%\emph{isomorphism},
%%and 
%%
\emph{persistence-preserving bisimulation}, i.e., a variant of
standard bisimulation which requires a certain kind of isomorphism to
exist between bisimilar states and their successors
(cf.~page~\pageref{def:bisim}), and prove that persistence-preserving
bisimilar transition systems preserve the truth-value of $\mu\L$
formulas (cf.~Theorem~\ref{thm:mymu-bisimulation}). This is a key step
in the proof, which allows us to reduce the verification of $\mymu$
formulas over an infinite transition system to that over a bisimilar
transition system that is finite.

%%
%%It is well known that \emph{domain-independent} formulas
%%can be interpreted over the active domain, 
%%instead of the whole interpretation domain.
%%Moreover, we define the notion of 
%%\emph{active-domain isomorphism} between FO interpretations.
%%
%%and, using the notion of  \emph{active-domain isomorphism}, we first prove that 
%%if two transition systems are bisimilar modulo active-domain isomorphism 
%%of states, they satisfy the same
%%domain-independent $\mu\L$ formulas (cf.~Th.~\ref{mu-calc-iff}).
%%

In the third and fundamental step (Section~\ref{sec:finite-abstraction}),
we carry out a \emph{faithful abstraction} operation, and 
show how to actually construct 
%%
%%focuses on domain-independent formulas.
%% 
%%Then, we exploit this result to
%%construct 
a finite transition system that is persistence-preserving bisimilar to
the one, infinite, induced by \emph{the} model of the action theory
(cf.~Procedure~\ref{alg:tf} and Theorems~\ref{thm:termination} and
\ref{thm:bisimulation-finite-induced}). Finally, we prove that
verification is decidable on finite transition systems, thus on the
one induced by the model of the action theory
(cf.~Theorem~\ref{th:mymu-dec}).
%%
%%We make use of the assumption that situations are bounded, and 
%%exploit the specific structure of successor state and action precondition axioms,
%%to devise a bound on the number of distinct objects required
%%to maintain active-domain isomorphisms between 
%%states (cf.\ Th.~\ref{th:ind-ts-bis} and~\ref{th:abs-exec}).
%%

%%%In the final step, we generalize the above result to generic 
%%%$\mu\L$ formulas (cf.~Th.~\ref{th:libkin}),
%%%by observing that any FO formula interpreted over
%%%standard names admits an equivalent, domain-independent 
%%%formula --see, e.g., Th.~5.6.3 in~\cite{Libkin07}.
%%a part of which is interpreted over the current active
%%domain while another part, dealing with equalities over object out of
%%the active domain becomes essentially propositional   
%%Based on this, our result extends immediately to full $\mu\L$ .
%%
The rest of this section details these steps.

%%% Local Variables:
%%% mode: latex
%%% dirvars: nil
%%% save-place: t
%%% TeX-master: "main"
%%% End:
\subsection{Suppressing Action Terms}
\label{sec:action-suppression}
Under uniqueness of action names, domain closure for actions, 
and the fact that action types are finitely many, w.l.o.g.,
we can remove action terms from uniform situation calculus formulas.

% the evaluation of precondition and successor state axioms
% can be reduced to the evaluation of formulas where no 
% action types nor situations occur. 
% %%
% Specifically, Th.~\ref{prop:sitcalc-fo-act} 
% shows how, for fixed action types, a formula 
% $\varphi$ can be rewritten as an action-term 
% free formula $\varphi'$ equivalent to $\varphi$,

% For an action theory $\D$, consider a model $\M$ over an object sort $\Delta\subseteq\N$, 
% and let $\vec x$ range over objects,
% $\vec a$ over actions, and $s$ over situation terms.
%%For a situation-calculus formula $\varphi$, $\const(\varphi)$
%%denotes the constants from $C$ occurring in $\varphi$.

\begin{theorem}\label{prop:sitcalc-fo-act}	
  For every, possibly open, situation calculus FO formula $\varphi(\vec{x}, s)$ uniform
  in $s$ and with free variables $\vec{x}$, all of object sort, 
  there exists a situation calculus formula $\varphi'(\vec{x}, s)$ 
  uniform in $s$, where no action terms occur, such that
$$\D_{ca}\models \forall(\varphi(\vec{x}, s) \equiv \varphi'(\vec{x}, s)).$$
\end{theorem}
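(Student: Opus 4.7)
The plan is to proceed by structural induction on $\varphi(\vec{x}, s)$ and systematically eliminate all action-sorted subterms by exploiting $\D_{ca}$ (the unique name axioms plus domain closure for actions). The induction handles boolean combinations and object-sorted quantifications routinely via the inductive hypothesis, so the interesting cases are those in which action terms actually appear. To organize the argument, I would first observe that in a formula uniform in $s$, action terms can occur only in equality atoms between action terms: fluents carry only object-sort arguments (and the situation $s$), $\Poss$ is excluded by uniformity, and $do(\cdot,\cdot)$ would introduce a situation term different from $s$, which uniformity forbids. Action terms themselves have two forms only: action variables $a$, or applications $A(\vec{u})$ of an action function symbol to object-sort arguments.

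Given this observation, I would first simplify all equality atoms between action terms that are not variables. By unique names for actions in $\D_{ca}$, an atom $A(\vec{u}) = A'(\vec{u}')$ rewrites to $\false$ when $A \neq A'$, and to $\vec{u} = \vec{u}'$ when $A = A'$, the latter being an equality between object terms. This leaves, as the only remaining source of action terms, equalities involving a bound action variable. I would then handle action-sorted quantifiers. For $\exists a.\, \psi(a, s)$, using domain closure on the finitely many action types in $\D_{ca}$, rewrite as
\[
\bigvee_{A \in \Actions} \exists \vec{y}_A.\, \psi(A(\vec{y}_A), s),
\]
where $\vec{y}_A$ are fresh object-sorted variables of the appropriate arity; dually for $\forall$. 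The substitution $[a/A(\vec{y}_A)]$ replaces each occurrence of $a$ — which, by the observation, must sit on one side of an equality with another action term — by $A(\vec{y}_A)$. Re-applying the equality simplification above to these newly instantiated equalities removes all action terms introduced by the unfolding.

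Iterating this procedure over the finitely many action-sorted quantifiers in $\varphi$ produces the desired $\varphi'(\vec{x}, s)$, which is still uniform in $s$ (only object-sorted quantifiers and object equalities have been added) and contains no action terms. Equivalence with $\varphi$ under $\D_{ca}$ holds step by step: domain closure justifies the unfolding, and unique names justify the equality simplifications. Termination is immediate since each unfolding strictly decreases the number of action-sorted quantifiers and the replacement of equalities is a single bottom-up pass. The main obstacle is essentially bookkeeping: convincing oneself that no action term can reappear in an ``unexpected'' position after substitution, which is exactly what the uniformity restriction and the typing discipline on fluent arguments guarantee, and keeping track of variable freshness when unfolding nested action quantifiers.
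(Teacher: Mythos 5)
Your proposal is correct and follows essentially the same route as the paper's proof: both use unique name axioms to reduce equalities between action terms to object equalities (or $\bot$), and domain closure over the finitely many action types to unfold action-sorted quantifiers into finite disjunctions $\bigvee_{A}\exists \vec{y}_A.\,\phi[a/A(\vec{y}_A)]$ with fresh object variables, then recurse. The only cosmetic difference is that you phrase it as an iterative rewriting with a termination measure and make explicit the (correct) observation that uniformity and the object-sort typing of fluent arguments confine action terms to equality atoms, whereas the paper packages the same content as a structural induction.
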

\begin{proof}
By  induction on the structure of $\varphi$.
% For $\varphi=F(\vec t,s)$, with $\vec t$ such that $\vec x\subseteq \vec t$,
% if $\vec t$ contains some term of sort action, then $\varphi'=\bot$,
% else $\varphi'=\varphi$.
%%
%%Fluents
For $\varphi=F(\vec t,s)$, we have that by definition $\vec{t}$ can
only contain object terms so $\varphi'=\varphi$, else
$\varphi'=\varphi$.
%%
%% Equality
For $\varphi=A(\vec y)=A'(\vec{y'})$, with
$\vec x\subseteq \vec y\cup\vec y'$, if $A=A'$, then
$\varphi'=\vec y=\vec{y'}$, else $\varphi'=\bot$.
%%
%% Booleans
The case of boolean connectives is straightforward.
%%
%%Universal
% If $\varphi=\forall a. \phi(\vec x,a,s)$, 
% consider the formula 
% $\varphi''=\bigwedge_{A\in \A}\forall \vec y_A.\phi_A(\vec x, \vec y_A,s)$,
% with $\phi_A$ obtained from $\phi(\vec x,a,s)$, by replacing each occurrence
% of $a$ with $A(\vec y_A)$, where $\vec y_A$ are fresh variables.
% We obviously have: $\D_{ca}\models \forall(\varphi\equiv \varphi'')$.
% Now, for each $\phi_A$, let $\phi'_A$ be a formula
% containing  no action terms, such that~$\D_{ca}\models \forall(\phi_A\equiv \phi_A')$.
% By induction hypothesis, such a $\phi_A'$ exists.
% Finally, let $\varphi'=\bigwedge_{A\in\A}\forall \vec y_A.\phi_A'(\vec x,\vec y_A,s)$.
%
%% Existential
If $\varphi=\exists a. \phi(\vec x,a,s)$, 
consider the formula 
$\varphi''=\bigvee_{A\in \A}\exists \vec y_A.\phi_A(\vec x, \vec y_A,s)$,
with $\phi_A$ obtained from $\phi(\vec x,a,s)$, by replacing each occurrence
of $a$ with $A(\vec y_A)$, where $\vec y_A$ are fresh variables.
We obviously have: $\D_{ca}\models \forall(\varphi\equiv \varphi'')$.
Now, for each $\phi_A$, let $\phi'_A$ be a formula
containing  no action terms, such that~$\D_{ca}\models \forall(\phi_A\equiv \phi_A')$.
By induction hypothesis, such a $\phi_A'$ exists.
Finally, let $\varphi'=\bigvee_{A\in\A}\exists \vec y_A.\phi_A'(\vec x,\vec y_A,s)$.
Clearly, $\varphi'$ contains no action terms and 
is uniform in $s$.
By considering unique name axioms for actions
and domain closure for action types ($\D_{ca}$),
we can see that $\D_{ca}\models \forall(\varphi''\equiv\varphi')$.
Thus, since $\D_{ca}\models \forall(\varphi\equiv \varphi'')$,
the thesis follows, i.e., $\D_{ca}\models \forall(\varphi\equiv \varphi')$.
\end{proof}
%%
%%Based on this result, from now on we will consider, without loss of generality, only 
%%uniform situation calculus formulas that do not mention action terms.

%%A variant of this result also holds for formulas with open action variables, 
%%for given valuations.
%%\begin{theorem}
%%Let $\M$ be a situation calculus model and
%%$\varphi(\vec x,a,s)$ a situation calculus formula 
%%uniform in $s$, with free variables $\vec{x}$ and $a$,
%%respectively of sort object and action.
%%Then, for every valuation $v$ (assigning objects to $\vec x$ and an action to $a$),
%%there exists a formula $\varphi'(\vec x,s)$ uniform in $s$, where the only occurrence 
%%of action term is of the form $A(\vec y)$, with $\vec y\subseteq \vec x$, and such that:
%%$$\M,\vfo\models \varphi(\vec{x},a,s)\mbox{ iff }\M,\vfo'\models \varphi'(\vec{x}, s).$$
%%\end{theorem}
%%\begin{proof}
%%By the domain closure on action names, we have that $a=A^\M(\vec o)$,
%%
%%
%%\end{proof}

% Notice that this result can be applied to the primitive (situation-suppressed) 
% FO components $\varphi$ 
% of any $\mymu$ formula $\Phi$. Consequently, the result above 
% holds also for $\mymu$ formulas:
%%
Such a result immediately extends to $\mymu$, since in  $\mymu$ formulas only uniform (situation suppressed) situation calculus FO subformulas can occur. 
\begin{theorem}\label{th:mymu-drop-act}
Any $\mymu$ formula $\Phi$ can be rewritten into an 
equivalent $\mymu$ formula $\Phi'$, where no action terms occur, such that 
$\D_{ca}\models \forall(\Phi\equiv \Phi')$.
\end{theorem}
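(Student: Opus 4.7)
The plan is to proceed by structural induction on the $\mymu$ formula $\Phi$, pushing the action-elimination result of Theorem~\ref{prop:sitcalc-fo-act} through the $\mymu$ constructors. The key observation is that in the syntax of \mymu, action terms can occur \emph{only} inside the atomic subformulas $\varphi$, since the modal operators $\DIAM{\cdot}$ and $\BOX{\cdot}$ do not carry an action argument, and the individual variables quantified by $\exists x.\,\inadom(x)\land\cdot$ (as well as the free variables $\vec x$ guarded by $\inadom(\vec x)$) are of sort object. Therefore action terms have a very restricted locus of occurrence in any \mymu formula.

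For the base case $\Phi = \varphi$, with $\varphi$ a situation-suppressed uniform FOL formula, I would reintroduce a situation argument $s$ (getting $\varphi[s]$), apply Theorem~\ref{prop:sitcalc-fo-act} to obtain an action-term-free, $s$-uniform $\varphi'[s]$ with $\D_{ca}\models\forall(\varphi[s]\equiv\varphi'[s])$, and then suppress the situation again to obtain the action-term-free $\varphi'$. For the inductive step, I would set $\Phi' := (\lnot\Psi)'$ as $\lnot\Psi'$, $(\Phi_1\land\Phi_2)' := \Phi_1'\land\Phi_2'$, $(\exists x.\,\inadom(x)\land\Psi)' := \exists x.\,\inadom(x)\land\Psi'$, $(\inadom(\vec x)\land\DIAM{\Psi})' := \inadom(\vec x)\land\DIAM{\Psi'}$, and similarly for $\BOX{\cdot}$, $Z$, $\mu Z.\Psi$, and $\nu Z.\Psi$. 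In every inductive case, the translation touches only the immediate subformulas and leaves the \mymu connective intact, so $\Phi'$ inherits both syntactic monotonicity (for fixpoints) and the $\inadom$-guard conditions from $\Phi$; hence $\Phi'$ is a well-formed \mymu formula.

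It remains to verify that the translation preserves meaning modulo $\D_{ca}$. Recall that the semantics $\MODA{\cdot}$ of \mymu formulas is defined purely compositionally over models $\M$, valuations $(\vfo,\vso)$, and the situation domain $\S$, and the only point at which the concrete syntax of $\varphi$ enters is in the base clause $\MODA{\varphi} = \{s \mid \M,\vfo\models\varphi[s]\}$. Since $\D_{ca}\models\forall(\varphi[s]\equiv\varphi'[s])$ for all models of $\D_{ca}$ and all assignments, in any such model $\MODA{\varphi}=\MODA{\varphi'}$, and the inductive cases then propagate this equality to $\MODA{\Phi}=\MODA{\Phi'}$ by a straightforward induction using the compositional definition. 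A closed-formula argument then yields $\D_{ca}\models\forall(\Phi\equiv\Phi')$.

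The only potential obstacle is a bookkeeping one: Theorem~\ref{prop:sitcalc-fo-act} is stated for FO formulas with an explicit situation argument $s$, whereas \mymu uses situation-suppressed formulas, so one must be careful that the free object variables $\vec x$ of $\varphi$ are preserved by the translation (they are, by inspection of the proof of Theorem~\ref{prop:sitcalc-fo-act}) and that uniformity in $s$ is maintained. Both points are immediate, so no real difficulty arises and the induction goes through.
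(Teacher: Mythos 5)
Your proof is correct and follows essentially the same route as the paper, which treats this theorem as an immediate corollary of Theorem~\ref{prop:sitcalc-fo-act} on the grounds that action terms can only occur inside the uniform situation-suppressed FO subformulas of a \mymu formula. Your version merely spells out the structural induction and the compositional propagation of equivalence that the paper leaves implicit.
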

% This theorem essentially says that actions do not play any role 
% when checking $\mymu$ formulas on a model $\M$. 
% In fact, as we will show in  Section~\ref{sec:ind-ts}, 
% the check can be equivalently performed on a less 
% rich structure, namely a transition system, obtained from $\M$ by,
% among other operations, projecting out actions and the corresponding terms.
On the basis of this theorem, w.l.o.g.,  we will always rewrite  $\mymu$ formulas so as that  
 actions do not occur in them.
%On the basis of this theorem, from now on we will assume, w.l.o.g.,
%that actions cannot occur at all in $\mymu$ formulas.

%%% Local Variables:
%%% mode: latex
%%% dirvars: nil
%%% save-place: t
%%% TeX-master: "main"
%%% End:
\subsection{Suppressing Situation Terms}
\label{sec:situation-suppression}

Since the FO components of $\mymu$ formulas are situation-suppressed,
situations are obviously irrelevant when checking $\mymu$ formulas;
more precisely, the FO components (thus the whole logic) are sensitive
only to the interpretation of fluents (and constants) at each
situation, while the situations themselves are not relevant.  The
impact of this observation on the evaluation of $\mymu$ formulas in
the general case will become evident in Section~\ref{sec:ind-ts}.
Here, we focus on the \emph{local} evaluation of FO components (on the
interpretation of a single situation), or more specifically of FO
situation calculus formulas uniform in $s$, and present some notable
results that, besides being interesting \emph{per se}, will be useful
later on.

Given a basic action theory $\D$, we denote by $\F$ the set of its
fluent symbols and by $C$ the (finite) set of constants in $\N$
explicitly mentioned in $\D$, beyond $\D_{uno}$.  Then given a model
$\M$ of $\D$ with object domain $\Delta$ and a situation $s$, it is
natural to associate $s$ with a FO interpretation
$\I_\M(s)\doteq\tup{\Delta,\cdot^\I}$, where: \myi for every $c\in C$,
$c^\I = c^\M$ and \myii for every (situation-suppressed) fluent $F$ of
$\D$, $F^\I=\set{\vec d \mid \tup{\vec d,s}\in F^\M}$.
The following result is an obvious consequence of the definitions above.
\begin{theorem}\label{th:mod-int}
  For any possibly open FO situation-suppressed situation calculus
  formula $\varphi$ uniform in $s$, any situation $s$ and any object
  variable valuation $\vfo$, we have that $\M,v\models \varphi[s]$ if
  and only if $\I_\M(s),v\models \varphi$.
\end{theorem}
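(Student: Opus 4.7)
The plan is to proceed by structural induction on $\varphi$, after first reducing to the action-term-free case. By Theorem~\ref{prop:sitcalc-fo-act}, we may assume without loss of generality that $\varphi$ contains no action terms, so atomic subformulas of $\varphi$ are either fluent atoms $F(\vec t)$ (with $\vec t$ a tuple of object terms, which are variables or constants from $C$) or equalities $t_1 = t_2$ between object terms. This reduction is crucial, because $\I_\M(s)$ only interprets fluents and object constants, so in the original formula atomic subformulas involving, say, action equalities or quantification over actions would not even be in the vocabulary of $\I_\M(s)$.

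For the base cases, fluent atoms are handled directly by the definition of $\I_\M(s)$: since $F^{\I_\M(s)} = \{\vec d \mid \langle\vec d, s\rangle \in F^\M\}$, we have $\M, v \models F(\vec t)[s]$ iff $\langle v(\vec t), s\rangle \in F^\M$ iff $v(\vec t) \in F^{\I_\M(s)}$ iff $\I_\M(s), v \models F(\vec t)$. For equality atoms between object terms, the denotation of constants agrees by construction ($c^{\I_\M(s)} = c^\M$ for $c \in C$), and the valuation $v$ is shared; hence $v(t_1) = v(t_2)$ holds in one structure iff it holds in the other.

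For the inductive step, Boolean connectives are immediate from the induction hypothesis. For the quantifier case $\exists x.\psi$, we use the fact that $x$ is of object sort (fluent arguments other than the situation are of object sort, and we have suppressed quantification over actions and situations via uniformity and the action-term elimination). Both $\M$ and $\I_\M(s)$ have object domain $\Delta$ by construction of $\I_\M(s)$, so $\M, v \models (\exists x.\psi)[s]$ iff there exists $d \in \Delta$ with $\M, v[x/d] \models \psi[s]$, which by the induction hypothesis is equivalent to $\I_\M(s), v[x/d] \models \psi$, i.e., $\I_\M(s), v \models \exists x.\psi$.

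There is no real obstacle here, only bookkeeping: the statement is essentially a semantic sanity check that the FO interpretation $\I_\M(s)$ faithfully encodes everything $\M$ says about the fluent extensions at situation $s$. The reduction to action-term-free formulas via Theorem~\ref{prop:sitcalc-fo-act} is what makes the induction clean, since otherwise one would have to separately argue that subformulas mentioning action variables or action-typed quantifiers also behave correctly, which would be awkward since $\I_\M(s)$ carries no action-sort information.
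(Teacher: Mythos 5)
Your proof is correct and matches the paper's intent exactly: the paper states Theorem~\ref{th:mod-int} as ``an obvious consequence of the definitions above'' and gives no explicit argument, and your structural induction is simply that obvious argument written out in full. Your preliminary reduction to action-term-free formulas via Theorem~\ref{prop:sitcalc-fo-act} is a sensible clarification of a point the paper leaves implicit (it only ever applies the theorem to formulas from which action terms have already been eliminated, cf.\ the proof of Theorem~\ref{th:InducedTS}), since otherwise $\I_\M(s),\vfo\models\varphi$ would not even be well defined.
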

In other words, when evaluating a uniform FO situation-calculus 
formula on a situation, one needs only focus on the interpretation relative
to the situation of interest.

Next, we show that, for bounded action theories, we have decidability of 
evaluation of FO formulas in spite of the object domain being infinite. 
Even more, we obtain that we can compute the answers to FO queries on
specific situations.  Notice that the latter result is not obvious, 
in that the object domain is infinite and, thus, so could be the answer.
Importantly, these results imply that we can check action 
executability and compute the effects of action executions, two facts
that we will strongly leverage on when checking $\mymu$ formulas.

We begin by showing some results concerning the decidability of FO formula evaluation in an
interpretation with finite predicate extensions, but infinite
domain.  More precisely, we consider a finite set $\F$ of predicate
symbols (situation-suppressed fluents) and a finite set $C$ (a subset
of $\N$) of constant symbols, a (FO) \emph{interpretation} $\I$, over
an infinite domain $\Delta$ is a tuple $\I=\tup{\Delta,\cdot^\I}$,
where $\cdot^\I$ assigns an extension $F^\I$ over $\Delta$ to each
predicate symbol $F\in\F$, and a distinct object $c^\I\in\Delta$ to every
constant in $C$.
%%
%%
% A variable valuation $v$ for $\I$ is a function assigning to variables to be used in FO formulas.
% %%
% The notion of \emph{evaluation} of FO formulas over an
% interpretation $\I$ and a variable valuation $v$, i.e.,
% $\I,v\models \varphi$, is the standard one.
%%
The \emph{active domain} of an interpretation $\I$, denoted
$\adom(\I)$ is the set of all the individuals occurring in the
extension of some fluent $F\in \F$, or interpreting some constant
$c\in C$, in $\I$.  Moreover, for simplicity, we assume that all
constants mentioned in FO formulas of interest belong to $\C$.

First, let us recall a classical result saying that FO formulas (with no function symbols 
other than constants) can always be rewritten as formulas with quantified
variables ranging only over the active domain of the interpretation.
For an interpretation $\I=\tup{\Delta,\cdot^{\I}}$, 
we define the \emph{restriction of $\I$ 
to its active domain} as the interpretation 
$\arestrict{\I}=\tup{\adom(\I),\cdot^{\I}}$.
In words, 
$\arestrict{\I}$ is the same interpretation as $\I$,
except that the object domain is replaced by 
the active domain.

\begin{theorem}[Theorem~5.6.3 of~\cite{Libkin07}]\label{thm:dom-ind}
  For every FO formula $\varphi$, one can effectively compute a
  formula $\varphi'$, with quantified variables ranging only over the
  active domain, such that for any interpretation
  $\I=\tup{\Delta,\cdot^\I}$ with infinite domain $\Delta$,
%$\card{\Delta}=\infty$, 
  and any valuation $\vfo$, we have that $\I,\vfo\models \varphi$ if and only if
  $\arestrict{\I},\vfo\models \varphi'$.
\end{theorem}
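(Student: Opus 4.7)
The plan is to proceed by structural induction on $\varphi$, after first converting it to negation normal form, and construct the equivalent active-domain formula $\varphi'$ step by step. I would begin by defining the active-domain predicate explicitly as a FO formula:
\[
\mathrm{adom}(x) \doteq \bigvee_{F \in \F}\bigvee_{i} \exists \vec{y}, \vec{z}. F(\vec{y}, x, \vec{z})  \lor  \bigvee_{c \in C} x = c,
\]
so that $\I, v \models \mathrm{adom}(x)$ iff $v(x) \in \adom(\I)$. Atomic formulas and their negations need no rewriting, since the interpretation of fluents and constants in $\arestrict{\I}$ agrees with that in $\I$, and any witness to a fluent atom or to an equality with a constant must already lie in $\adom(\I)$. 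Conjunction, disjunction, and negation are handled transparently by the induction.

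The key case is the existential quantifier. For $\exists x.\psi(x)$, I would split the witness space into the active domain and its complement:
\[
\exists x.\psi(x)  \equiv  (\exists x.\,\mathrm{adom}(x) \land \psi'(x))  \lor  (\exists x.\,\lnot\mathrm{adom}(x) \land \psi'(x)),
\]
where $\psi'$ is the inductively constructed active-domain translation of $\psi$. The first disjunct is already in active-domain form. For the second, the key observation is that any $d \notin \adom(\I)$ is indistinguishable from any other such element except through equalities with other variables; hence on the hypothesis $\lnot\mathrm{adom}(x)$ one can simplify $\psi'(x)$ by replacing every fluent atom involving $x$ and every equality of $x$ with a constant by $\bot$, leaving only equalities among the ``fresh'' variables. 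Since $\Delta$ is infinite, such witnesses always exist, and arbitrarily many distinct fresh witnesses are available for nested quantifiers. The universal case is handled dually.

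The main obstacle lies in the treatment of nested quantifiers over elements outside the active domain. One must systematically enumerate the equality patterns among the newly quantified ``fresh'' variables and among such variables and previously bound or free variables, in order to eliminate the unrestricted quantifiers without losing witnesses that collapse distinct variables to the same fresh element. This bookkeeping can be made uniform by pushing the adom/non-adom split through the induction, so that each subformula comes equipped with a guard classifying its free variables, and combining results disjunctively over all classifications. Once this bookkeeping is in place, the equivalence $\I, v \models \varphi \Leftrightarrow \arestrict{\I}, v \models \varphi'$ follows by a routine induction that invokes infinitude of $\Delta$ only in the existential step.
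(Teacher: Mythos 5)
Your argument is correct: it is the standard proof of the natural--active collapse (the genericity argument that elements outside the active domain are interchangeable under automorphisms fixing $\adom(\I)$, so only their equality type matters, plus infinitude of $\Delta$ to supply fresh witnesses), and the bookkeeping you describe --- splitting each quantifier into an active and a non-active part and propagating an adom/non-adom classification with equality patterns through the induction --- is exactly what makes it go through. The paper itself does not prove this statement but imports it verbatim from Libkin (Theorem~5.6.3 of the cited text), so your proposal is in effect a faithful reconstruction of the proof the paper is relying on rather than a divergence from it.
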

This result says that checking whether $\I,\vfo\models\varphi'$
requires knowing only the interpretation function $\cdot^\I$ of $\I$,
while the interpretation domain $\Delta$ can be disregarded. In other
words $\varphi'$ is a \emph{domain-independent} formula \cite{AbHV95}.
One way to obtain domain-independent formulas is to avoid the use of
negation and instead use logical difference with respect to the active
domain. The above theorem says that it is always possible to transform a FO
formula to be evaluated over an infinite domain to a
domain-independent one to be evaluated over the active domain only (and
actually its proof gives an effective procedure to do so).

%%By applying Lemma~\ref{lem:dom-ind} to the 
%%FO components (i.e., the situation calculus situation-suppressed
%%formulas uniform in $s$) of a $\mymu$ formula $\Psi$ 
%%without action terms, one can generate a formula $\Psi'$
%%whose FO components  are domain independent.
%%This, together with Theorem~\ref{th:mymu-drop-act},
%%implies the following result.
%%\begin{theorem}\label{th:simpl-mymu}
%%Any $\mymu$ formula $\Phi$ can be rewritten as a formula $\Phi'$ 
%%where no action term occurs and whose FO components are 
%%domain-independent, such that, for any $v$ and $V$,
%%$\MODAT{\Phi}=\MODAT{\Phi'}$ (and $\MODA{\Phi}=\MODA{\Phi'}$).
%%\end{theorem}
%%\begin{proof}
%%It is enough to apply Theorem~\ref{th:mymu-drop-act} followed by 
%%Lemma~\ref{lem:dom-ind} to the FO components of $\Phi$.
%%\end{proof}

An immediate consequence of Theorem~\ref{thm:dom-ind} is that
if $\adom(\I)$ is finite, then checking whether $\I,\vfo\models \varphi$ is decidable,  
no matter whether the interpretation domain of $\I$ is finite or infinite.
Indeed, in the former case, decidability is obvious, while in the latter, 
one can simply check $\arestrict{\I},\vfo\models \varphi'$, which  
requires only lookups on the \emph{finite} extensions of fluents
and, in presence of quantified variables, iterating over the \emph{finitely many}
elements of the active domain. Thus, we have the following result.
% on situation calculus formulas (hence on FO components of $\mymu$  formulas).
%%

\begin{theorem}\label{th:fin-sat}
  Given a possibly open FO formula $\varphi$ and an interpretation
  $\I=\tup{\Delta,\cdot^\I}$ with infinite $\Delta$, if $\adom(\I)$ is
  finite, then, for any valuation $\vfo$, checking whether
  $\I,\vfo\models\varphi$ is decidable.
\end{theorem}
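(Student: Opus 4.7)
The plan is to reduce the problem to a straightforward finite computation by applying Theorem~\ref{thm:dom-ind}, once we have disposed of the complication that the valuation $\vfo$ may map free variables of $\varphi$ to elements of $\Delta$ outside $\adom(\I)$.

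First I would absorb the valuation into the interpretation: for each free variable $x$ of $\varphi$, introduce a fresh constant $c_x\notin C$, extend $\I$ to $\hat{\I}$ by setting $c_x^{\hat{\I}} \doteq \vfo(x)$ (and leaving the fluent extensions unchanged), and let $\hat{\varphi}$ be the closed formula obtained from $\varphi$ by substituting $c_x$ for $x$. Then obviously $\I,\vfo\models\varphi$ iff $\hat{\I}\models\hat{\varphi}$. The key observation is that
\[
\adom(\hat{\I}) \;\subseteq\; \adom(\I)\,\cup\,\{\vfo(x)\mid x\text{ free in }\varphi\},
\]
which is still finite because $\adom(\I)$ is finite and $\varphi$ has only finitely many free variables.

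Next I would apply Theorem~\ref{thm:dom-ind} to $\hat{\varphi}$ and $\hat{\I}$, effectively computing a domain-independent formula $\hat{\varphi}'$ such that $\hat{\I}\models\hat{\varphi}$ iff $\arestrict{\hat{\I}}\models\hat{\varphi}'$. The restricted interpretation $\arestrict{\hat{\I}}=\tup{\adom(\hat{\I}),\cdot^{\hat{\I}}}$ has a finite object domain and, consequently, finite extensions for all fluents (they are all contained in tuples drawn from the finite active domain) and a finite interpretation of the finitely many constants involved.

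Finally, evaluation of $\hat{\varphi}'$ on the finite structure $\arestrict{\hat{\I}}$ is decidable by the standard recursive evaluation of FO formulas over finite interpretations: atomic formulas are decided by lookup in the (finite) extensions and constant interpretations, boolean connectives are handled recursively, and quantifiers are unrolled as finite disjunctions or conjunctions over the elements of $\adom(\hat{\I})$. Chaining the equivalences $\I,\vfo\models\varphi \Leftrightarrow \hat{\I}\models\hat{\varphi} \Leftrightarrow \arestrict{\hat{\I}}\models\hat{\varphi}'$ then yields decidability of $\I,\vfo\models\varphi$. The only subtle point is the first step — making sure that handling free variables valuated outside $\adom(\I)$ does not destroy the finiteness of the active domain — but this is immediate since only finitely many fresh constants are added.
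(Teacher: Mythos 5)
Your proposal is correct and follows essentially the same route as the paper: the paper's proof simply invokes Theorem~\ref{thm:dom-ind} to reduce the check to evaluating the domain-independent rewriting $\varphi'$ on the finite restriction $\arestrict{\I}$, which is decided by lookups and iteration over the finite active domain. Your additional step of absorbing $\vfo$ into fresh constants to handle free variables valued outside $\adom(\I)$ is a sound (and slightly more careful) treatment of a point the paper leaves implicit, but it does not change the substance of the argument.
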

\begin{proof}
See discussion above.
\end{proof}

Theorem~\ref{th:fin-sat} can be lifted 
to computing all the valuations $\vfo$  such that $\I,\vfo\models\varphi$.
Let $\varphi$ be a FO formula
with free variables $\vec x$, and $\I=\tup{\Delta,\cdot^\I}$ a FO interpretation.
Then, the \emph{answer
on $\I$} to $\varphi$ is the relation 
$\varphi^{\I}\doteq \set{\vec d\in\vec\Delta\mid \I,\vfo\models\varphi\mbox{, for }\vfo(\vec x)=\vec d}$.
Sometimes, it is useful to fix the valuation of some variables $\vec x_{in}\subseteq \vec x$,
say $\vfo(\vec x_{in})=\vec d_{in}$, and then consider the answer to $\varphi$ under this partial assignment, that is, the relation
$\varphi^{\I}_{\vec x_{in}/\vec d_{in}}\doteq \set{\vec d_{out}\in \vec\Delta\mid 
\I,\vfo\models\varphi \mbox{, for } \vfo(\vec x_{in})=\vec d_{in}\mbox{ and } 
\vfo(\vec x\setminus \vec x_{in})=\vec d_{out}}$\footnote{$\vec x\setminus 
\vec x_{in}$ denotes the tuple obtained from $\vec x$ by projecting
out the components of $\vec x_{in}$.}.
The following theorem says that if $\I$ has an infinite domain
$\Delta$ but a finite active domain and the answer
$\varphi^{\I}_{\vec x_{in}/\vec d_{in}}$ is finite, then the objects
occurring in the answer come necessarily from either the active
domain, 
%the values the constants mentioned in the query (if not
%already in the active domain), 
or the values assigned to $\vec x_{in}$
by $\vfo$.

\begin{theorem}\label{th:fin-ans}
Consider a FO formula $\varphi$ with free variables $\vec x$.
Let $\I$ be an interpretation with infinite $\Delta$ and finite active domain. 
If $\varphi^\I_{\vec x_{in}/\vec d_{in}}$ is finite, then 
$\varphi^\I_{\vec x_{in}/\vec d_{in}}\subseteq 
(\adom(\I)\cup \vec d_{in})^n$, 
where $n=\card{\vec x\setminus \vec x_{in}}$.
\end{theorem}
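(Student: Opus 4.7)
The plan is to argue by contradiction using a standard automorphism (symmetry) argument on the infinite interpretation domain $\Delta$. The intuition is that any element $e$ of an answer tuple that lies outside $\adom(\I)$ and outside the fixed parameter values $\vec d_{in}$ is ``indistinguishable'' by $\varphi$ from any other element of $\Delta$ that also lies outside this finite set, because no such element appears in $\cdot^\I$ or in the partial assignment. Hence if one such $e$ appears in the answer, infinitely many substitutes must appear too, contradicting the finiteness hypothesis.

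Concretely, suppose for contradiction that some tuple $\vec d_{out}\in\varphi^\I_{\vec x_{in}/\vec d_{in}}$ has a component $e$ with $e\notin \adom(\I)\cup\vec d_{in}$. Let $S$ be the finite set consisting of $\adom(\I)$ together with the components of $\vec d_{in}$. Since $\Delta$ is infinite and $S\cup\{\text{components of }\vec d_{out}\}$ is finite, I can choose infinitely many pairwise distinct elements $e'\in\Delta\setminus(S\cup\{\text{components of }\vec d_{out}\})$. For each such $e'$, define $\pi_{e'}:\Delta\to\Delta$ to be the transposition swapping $e$ and $e'$ and fixing everything else.

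The key step is then to observe that each $\pi_{e'}$ is an automorphism of $\I$: since $e,e'\notin S$, they occur neither in any $F^\I$ nor as the denotation of any constant in $C$, so $\pi_{e'}$ leaves $\cdot^\I$ pointwise invariant. Moreover $\pi_{e'}$ fixes $\vec d_{in}$ by construction. By a routine induction on the structure of $\varphi$, automorphisms of $\I$ preserve satisfaction under any valuation, so $\I,\vfo\models\varphi$ iff $\I,\pi_{e'}\!\circ\!\vfo\models\varphi$. Taking $\vfo(\vec x_{in})=\vec d_{in}$ and $\vfo(\vec x\setminus\vec x_{in})=\vec d_{out}$, this yields $\pi_{e'}(\vec d_{out})\in\varphi^\I_{\vec x_{in}/\vec d_{in}}$. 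Different choices of $e'$ produce tuples that differ in the position formerly occupied by $e$, hence pairwise distinct, yielding infinitely many answers --- contradiction.

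I do not expect a serious obstacle here: the argument is essentially the standard ``new values are interchangeable'' lemma. The only points requiring a little care are (i) picking $e'$ also outside the components of $\vec d_{out}$, so that the transposition genuinely changes the tuple in exactly one coordinate and the resulting tuples are all distinct, and (ii) the automorphism-preservation step, which is immediate for atomic formulas built from fluents and constants (because $\pi_{e'}$ fixes $S$) and extends to quantified formulas since $\pi_{e'}$ is a bijection of $\Delta$, so the range of quantification is preserved.
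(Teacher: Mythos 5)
Your proof is correct and takes essentially the same route as the paper: both argue by contradiction from the interchangeability of elements outside $\adom(\I)\cup\vec d_{in}$, the paper simply asserting as ``easily proven'' the preservation step that you carry out explicitly via the transposition automorphism. Your added care in choosing $e'$ also outside the components of $\vec d_{out}$ is a worthwhile refinement, since the paper's claim as literally stated (for \emph{any} $d'_i\notin\adom(\I)\cup\vec d_{in}$) could fail when $d'_i$ collides with another fresh component of the tuple.
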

\begin{proof}
By contradiction. It can be easily proven that if $\I,\vfo\models\varphi$, for
$\vfo(x_i)=d_i\notin (\adom(\I)\cup \vec d_{in})$ and
$x_i\in \vec x\setminus \vec x_{in}$, 
then for any other valuation $\vfo'=\vfo[x_i/d'_i]$ such that 
$d'_i\in \Delta\setminus (\adom(\I)\cup \vec d_{in})$, we have that
$\I,\vfo'\models\varphi$. 
Since $\Delta$ is infinite and $\adom(\I)$ is finite, 
such $d_i'$ are infinitely many, thus 
$\varphi^\I_{\vec x_{in}/\vec d_{in}}$ is infinite. Contradiction.
\end{proof}
In other words, any ``new'' object, with respect to those in $\adom(\I)$,
occurring in the answer, must come  from $\vec d_{in}$.
A direct consequence of Theorems~\ref{th:fin-sat} and~\ref{th:fin-ans}
is that one can actually compute the answer on $\I$ to $\varphi$.
\begin{theorem}\label{th:fin-comp}
  Consider a FO formula $\varphi$ with free variables $\vec x$.  Let
  $\I=\tup{\Delta,\cdot^\I}$ be an interpretation with infinite
  $\Delta$ and finite active domain.  If, for some valuation $\vfo$
  such that $\vfo(\vec x_{in}) = \vec d_{in}$,
  $\varphi^\I_{\vec x_{in}/\vec d_{in}}$ is finite, then
  $\varphi^\I_{\vec x_{in}/\vec d_{in}}$ is effectively computable.
\end{theorem}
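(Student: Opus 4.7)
The plan is to combine Theorem~\ref{th:fin-ans} (which localizes where answers can live) with Theorem~\ref{th:fin-sat} (which gives decidability of pointwise FO evaluation) into an explicit enumeration procedure. The overall strategy is: first confine the search space to a finite set, then decide membership of each candidate by a routine FO evaluation check.

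More concretely, I would first invoke Theorem~\ref{th:fin-ans} to conclude that $\varphi^\I_{\vec x_{in}/\vec d_{in}} \subseteq (\adom(\I) \cup \vec d_{in})^n$, where $n = \card{\vec x \setminus \vec x_{in}}$. Since both $\adom(\I)$ and the tuple $\vec d_{in}$ are finite, this candidate set $S = (\adom(\I) \cup \vec d_{in})^n$ is finite and, moreover, explicitly enumerable: $\adom(\I)$ can be read off from the finite extensions $F^\I$ for $F \in \F$ together with the finitely many $c^\I$ for $c \in C$, and $\vec d_{in}$ is given.

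The procedure is then straightforward: enumerate every $\vec d_{out} \in S$, and for each candidate form the valuation $\vfo$ with $\vfo(\vec x_{in}) = \vec d_{in}$ and $\vfo(\vec x \setminus \vec x_{in}) = \vec d_{out}$, and test whether $\I, \vfo \models \varphi$. By Theorem~\ref{th:fin-sat}, each such test is decidable (the hypotheses on $\I$ are inherited directly: $\Delta$ is infinite but $\adom(\I)$ is finite). Collect exactly those $\vec d_{out}$ for which the test succeeds; by Theorem~\ref{th:fin-ans}, no satisfying tuple lies outside $S$, so this collection is precisely $\varphi^\I_{\vec x_{in}/\vec d_{in}}$.

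There is no real obstacle here: the work was done in the previous two theorems, and the only thing left is to observe that a finite superset of the answer plus a decidable pointwise membership test yields effective computability. The one minor point worth flagging in the write-up is that the hypothesis of Theorem~\ref{th:fin-ans}, namely that $\varphi^\I_{\vec x_{in}/\vec d_{in}}$ is finite, is exactly what licenses the use of the bounding inclusion; without it one could not a priori restrict attention to tuples over $\adom(\I) \cup \vec d_{in}$.
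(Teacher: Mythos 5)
Your proposal is correct and follows essentially the same route as the paper's proof: restrict the candidate answers to the finite set $(\adom(\I)\cup\vec d_{in})^n$ via Theorem~\ref{th:fin-ans}, then decide membership of each candidate via Theorem~\ref{th:fin-sat}. No substantive difference from the paper's argument.
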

\begin{proof}
It suffices to record in $\varphi^\I_{\vec x_{in}/\vec d_{in}}$ all those tuples 
$\vec d_{out}$ 
such that for some $\vfo$ with $\vfo(\vec x_{in}) = \vec d_{in}$ 
and $\vfo(\vec x\setminus{\vec x_{in}}) = \vec d_{out}$, it is the case that
$\I,\vfo\models\varphi$.
Since by Theorem~\ref{th:fin-ans} such $\vec d_{out}$ are finitely many and can be obtained 
using values from $\adom(\I)\cup \vec d_{in}$, which is finite,
and, by Theorem~\ref{th:fin-sat}, checking whether $\I,\vfo\models\varphi$ is 
decidable, it follows that $\varphi^\I_{\vec x_{in}/\vec d_{in}}$ is computable. 
\end{proof}

These results find immediate application to the case of bounded action theories.
Indeed, bounded action theories guarantee that $\I_\M(s)$, in Theorem~\ref{th:mod-int},
is finite, (for $s$ executable). Thus, by Theorem~\ref{th:fin-sat}, 
for $\varphi$ and $\vfo$ as above, we have that
checking whether $\I_\M(s),\vfo\models\varphi$ is decidable.
A useful implication of this is that it is decidable to check whether
an action $A^\M(\vec o)$ is executable in a given situation $s$.
Indeed, this requires checking whether
$\M,\vfo\models\Poss(A(\vec x),s)$, with $\vfo(\vec x)=\vec o$, which,
by Theorem~\ref{th:mod-int}, is equivalent to
$\I_\M(s),\vfo\models\phi_A(\vec x)$, with $\phi_A(\vec x,s)$ the RHS
of the the precondition axiom of $A$, which, in turn, is decidable.
Moreover, Theorem~\ref{th:fin-comp}, can be used to show that for a
bounded action theory, the effects of executing an action at a given
situation, as determined the successor-state axioms, are computable
and depend only on $\I_\M(s)$ (and the action).
% The application of theorems~\ref{prop:sitcalc-fo-act} and~\ref{th:fin-comp} 
% to a model $\M$ of a bounded action theory, specifically to its successor-state axioms, 
% implies that the effects of executing an action at a given situation
% are computable and depend only on $\I_\M(s)$ (and the action).
%%
Indeed, we can exploit these results to get a sort of one-step regression
theorem in our setting ~\cite{PirriR:JACM99,Reiter01-Book}.
\begin{theorem}\label{th:ssa-fo}
Let $\M$ be a model of a bounded action theory $\D$, $s$ an 
executable situation, and $a=A^\M(\vec o)$ an action, with action type $A(\vec y)$.
Then, for any fluent $F$, there exists a situation-suppressed action-term-free
formula $\phi=\phi(\vec x,\vec y)$ such that 
$F^{\I_\M(do^\M(a,s))}=\phi^{\I_\M(s)}_{\vec y/\vec o}$, and hence $F^{\I_\M(do^\M(a,s))}$ is effectively computable.
\end{theorem}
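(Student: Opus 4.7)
The plan is to chain together the preceding lemmas in a natural pipeline, moving from the successor state axiom for $F$ to an equivalent action-term-free situation-suppressed FO query evaluated on the interpretation associated with $s$. Concretely, I would start from the successor state axiom $F(\vec{x}, do(a,s)) \equiv \phi_F(\vec{x}, a, s)$, where $\phi_F$ is uniform in $s$. Substituting the action variable $a$ with the action-type term $A(\vec{y})$ yields a uniform formula $\phi_F(\vec{x}, A(\vec{y}), s)$ whose only action term has the form $A(\vec{y})$; applying Theorem~\ref{prop:sitcalc-fo-act} produces an equivalent (modulo $\D_{ca}$) situation calculus formula $\phi(\vec{x}, \vec{y}, s)$ uniform in $s$ and containing no action terms. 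Suppressing the situation argument gives the desired situation-suppressed formula $\phi(\vec{x}, \vec{y})$.

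The next step is to establish the equality $F^{\I_\M(do^\M(a,s))} = \phi^{\I_\M(s)}_{\vec{y}/\vec{o}}$. This follows by unwinding the definitions: a tuple $\vec{d}$ lies in $F^{\I_\M(do^\M(a,s))}$ exactly when $\M \models F(\vec{d}, do^\M(A^\M(\vec{o}), s))$, which by the SSA and $\D_{ca}$ holds iff $\M, \vfo \models \phi(\vec{x}, \vec{y})[s]$ for $\vfo(\vec{x}) = \vec{d}$ and $\vfo(\vec{y}) = \vec{o}$. Theorem~\ref{th:mod-int} then rewrites this as $\I_\M(s), \vfo \models \phi(\vec{x}, \vec{y})$, which is by definition membership in $\phi^{\I_\M(s)}_{\vec{y}/\vec{o}}$.

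Finally, for effective computability I would appeal to Theorem~\ref{th:fin-comp}. Since $\D$ is bounded and $s$ is executable, the active domain of $\I_\M(s)$ is finite. Moreover, under the intended reading where $a$ is legal in $s$, the situation $do^\M(a,s)$ is also executable, so by boundedness $F^{\I_\M(do^\M(a,s))}$ is finite, which by the equality just established means the answer $\phi^{\I_\M(s)}_{\vec{y}/\vec{o}}$ is finite as well. Theorem~\ref{th:fin-comp} then delivers that this answer is effectively computable by iterating over tuples drawn from $\adom(\I_\M(s)) \cup \vec{o}$ and checking $\phi$ via Theorem~\ref{th:fin-sat}.

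There is no serious obstacle here: the proof is essentially a bookkeeping exercise that stitches together action-term elimination (Theorem~\ref{prop:sitcalc-fo-act}), the model-to-interpretation bridge (Theorem~\ref{th:mod-int}), and finite FO answer computability under boundedness (Theorems~\ref{th:fin-sat} and~\ref{th:fin-comp}). The only minor subtlety to be careful about is the handling of the action term $A(\vec{y})$ after substitution, ensuring that the resulting atoms of the form $A(\vec{y}) = B(\vec{z})$ are properly simplified using $\D_{ca}$ before invoking Theorem~\ref{prop:sitcalc-fo-act}; this is exactly what the construction in that theorem's proof does.
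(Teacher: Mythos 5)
Your proof is correct and follows essentially the same route as the paper's: instantiate the successor state axiom with $A(\vec y)$, eliminate action terms via Theorem~\ref{prop:sitcalc-fo-act}, suppress the situation argument to read the result as a query on $\I_\M(s)$, and conclude computability from boundedness via Theorem~\ref{th:fin-comp}. Your explicit appeal to Theorem~\ref{th:mod-int} and your remark that $do^\M(a,s)$ must be executable for boundedness to yield finiteness of $F^{\I_\M(do^\M(a,s))}$ are points the paper leaves implicit, but they do not change the argument.
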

\begin{proof}
Let 
$F(\vec x,do(a,s))\equiv\phi_F(\vec x,a,s)$ be the successor-state 
axiom for fluent $F$. 
For the extension of $F$ at situation $s'=do^\M(a,s)$,
we have that $\tup{\vec p,s'}\in F$ iff 
$\M,\vfo\models\phi_F(\vec x,A(\vec y),s)$,
for some $\vfo$ such that $\vfo(\vec x)=\vec p$
and $\vfo(\vec y)=\vec o$. Notice that $\phi_F$
contains, in general, action and situation terms,
and is uniform in $s$.
However,
by Theorem~\ref{prop:sitcalc-fo-act}, it
can be rewritten as
an equivalent action-term-free formula $\phi^A_F(\vec x,\vec y,s)$. 
Then, by suppressing the situation argument, we obtain: 
$\vec p\in F^{\I_\M(s')}$ iff 
$\I_\M(s),\vfo\models\phi^A_F(\vec x,\vec y)$,
for some $\vfo$ such that $\vfo(\vec x)=\vec p$
and $\vfo(\vec y)=\vec o$. That is, for $\phi=\phi_F^A$,
$F^{\I_\M(s')}=\phi^{\I_\M(s)}_{\vec y/\vec o}$.
Thus, since by boundedness of $\D$, $F^{\I_\M(s')}$ is finite, 
Theorem~\ref{th:fin-comp} implies the thesis.
\end{proof}
This result implies that, given $\I_\M(s)$ and an action
$a=A^\M(\vec o)$, we can obtain the interpretation of each $F$ at
$do^\M(a,s)$ by simply ``querying'' $\I_\M(s)$. Hence, by taking the
same interpretation of constants as in $\M$, we can construct
$\I_\M(do^\M(a,s))$, from $\I_\M(s)$ and the successor-state axioms
of $\D$.

%%% Local Variables:
%%% mode: latex
%%% dirvars: nil
%%% save-place: t
%%% TeX-master: "main"
%%% End:
\subsection{$\mymu$ over Transition Systems}
\label{sec:mu-calc-ts}

The results presented in Section~\ref{sec:action-suppression} and
\ref{sec:situation-suppression} suggest that, for the purpose of
verification of $\mymu$ formulas, one can operate on simpler structures
 than the models of situation calculus action theories. Indeed, as we saw,
both actions and situations can be essentially disregarded.
In this section, we introduce such simpler structures, namely
\emph{transition systems} (TS),
show  how $\mymu$ formulas are evaluated over them, and 
present some important results that allow us to perform the verification 
on TSs instead of on the original model.
%%\todo{This is quite vague, what sort of results?  Clarify. F: DONE}
%%
The connection between models of situation calculus theories 
and transition systems will be discussed in Section~\ref{sec:ind-ts}.
By Theorem~\ref{prop:sitcalc-fo-act}, we can focus, without loss of generality,
on a variant of $\mymu$ where action terms 
do not occur.
% are disallowed.

By $Int^{\F,C}_\Delta$, we denote the set of all possible interpretations of the situation suppressed fluents 
in $\F$ and  constants in $C$, over the object domain $\Delta$.
A {\em transition system} (TS) (over the situation-suppressed fluents
$\F$, constants $C$, and object domain $\Delta$) is a tuple
$T=\tup{\Delta,Q,q_0,\ra,\I}$, where:
%\begin{inparaenum}
\begin{itemize}\itemsep=0pt
	\item $\Delta$ is the {\em object domain};
	\item $Q$ is the {\em set of states};
	\item $q_0\in Q$ is the {\em initial state};
	\item $\ra\subseteq Q\times Q$ is the {\em transition relation}; and
	\item $\I:Q\mapsto Int^{\F,C}_\Delta$ is the {\em labeling function} associating each state $q$ with an 
	interpretation $\I(q)=\tup{\Delta,\cdot^{\I(q)}}$ such that the constants in $C$ are 
	interpreted in the same way in all the states over which $\I$ is defined.
\end{itemize}
%\end{inparaenum}

To interpret a $\mymu$ formula over a TS
$T=\tup{\Delta,Q,q_0,\ra,\I}$, we use valuations $(\vfo, \vso)$ formed
by an individual variable valuation $\vfo$ and a parametrized
predicate variable valuation $\vso$, as in
Section~\ref{sec:mucalc}.
% We will consider $\mymu$ formulas 
% that are closed wrt the predicate variables, 
% for which, thus, $\vso$ is irrelevant, but \emph{open
% wrt the individual variables}, for which we need to deal with $\vfo$.
% \todo{But the extension function must handle predicate variables; clarify.}
% Finally, for an interpretation $\I$, we write
% $\vec d\subseteq\adom(\I)$ to express that \textcolor{red}{all the
%   objects $\vec{d}$ that $\vec v$ maps the variables $\vec{x}$ into come from
% $\adom(\I)$.} 
% %all the individuals $\vec{d}$ occurring in $\vec v$ (as components) come from $\adom(\I)$.  
% Similarly, we write $\vec d\subseteq S$ for a
% generic set $S$. 
We define the \emph{extension function}
\label{def:modat}
$\MODAT{\cdot}$, which maps $\mymu$ formulas to subsets of $Q$, as follows:
\[\begin{array}{lcl}
\MODAT{\varphi} &= & \{q \in Q\mid \I(q),v\models \varphi \}\\
\MODAT{\lnot \Phi} &= & Q - \MODAT{\Phi} \\
\MODAT{\Phi_1 \land \Phi_2} &= &\MODAT{\Phi_1}\cap\MODAT{\Phi_2}\\
\MODAT{\exists x.\,\inadom(x)\land\Phi} &=&
      \{q \in Q\mid\exists d\in \adom(\I(q)).\, q \in \MODATX{\Phi}{[x/d]}\}\\
\MODAT{\inadom(\vec{x})\land \DIAM{\Phi}} &=&
      \{q\in  Q\mid \vec{x}/\vec{d} \in \vfo \mbox{ and } \vec{d}\subseteq\adom(\I(q)) \mbox{ and }\\
      & & \quad\exists q'.  q\ra q'\mbox{ and } q' \in \MODAT{\Phi}\} \\
\MODAT{\inadom(\vec{x})\land \BOX{\Phi}} &=&
      \{q\in Q \mid \vec{x}/\vec{d} \in \vfo \mbox{ and } \vec{d}\subseteq\adom(\I(q)) \mbox{ and }\\
    & & \quad\forall q' .\, q \ra q' \mbox{ implies } q'\in \MODAT{\Phi}\} \\
\MODAT{Z} &= & \vso(Z)\\
\MODAT{\mu Z.\Phi} &= & \bigcap\{\E\subseteq Q \mid \MODATZ{\Phi}{[Z/\E]}
\subseteq \E \}%\\
\end{array}
\]

Given a $\mymu$ formula $\Phi$, we say that a transition system $T$ \emph{satisfies
$\Phi$ at state $q$, under $v$ and $V$}, written $T,q,(\vfo,\vso)\models\Phi$, if 
$q\in \MODAT{\Phi}$. When $\Phi$ is closed on predicate variables, 
we omit $\vso$, as irrelevant, and  write $T,q,\vfo\models\Phi$. If $\Phi$ is closed on both individual and predicate variables we  simply write $T,q\models\Phi$. For closed formulas,
we say that \emph{$T$ satisfies $\Phi$}, written $T\models\Phi$, if $T,q_0\models\Phi$.

%%\todo{G:Changed}
For our TSs we can prove a suitable version of 
the classical \emph{bisimulation invariance
results} for the $\mu$-calculus, which state that bisimilar TSs satisfy
exactly the same $\mu$-calculus formulas, see
e.g.,~\cite{BS07}. 
%
% Hence we have that all the TSs that are bisimilar to the one induced
% by a situation calculus model (cf.\ Sect.~\ref{sec:ind-ts}), satisfy
% exactly the same formulas, even if their transition relation does not
% reflect the tree structure induced by the situation tree.
%
Obviously, the notion of bisimulation needed here is not the
classical one, but one that takes into account the FO interpretations
labeling the states of the transition systems, as well as the
controlled form of quantification across states allowed in $\mymu$.

%%FP: partial bijections not needed anymore

\begin{comment}
In order to formally define the notion of bisimulation used in this paper, 
we first fix some terminology.
A \emph{partial bijection} $h$ between two (object) domains $\Delta_1$
and $\Delta_2$ is a bijection between a subset of $\Delta_1$ and
$\Delta_2$. Given a set $D\subseteq \Delta_1$, $\restrict{f}{D}$ denotes
the \emph{restriction} of $f$ to $D$, i.e., the function such that
$\domain{\restrict{f}{D}} = \domain{f} \cap D$
and $\restrict{f}{D}(x) = f(x)$ for every $x \in \domain{f} \cap D$,
with $\domain{f}$ the set of objects over which $f$ is defined.
%
We say that a mapping $f'$ \emph{extends} a mapping $f$
if $\restrict{f'}{\domain{f}}(x)=f(x)$, for all $x\in\domain{f}$.
Let $\I_1$ and $\I_2$ be two FO interpretations (both over the situation
suppressed fluents of $\D$)
over domains $\Delta_1$ and $\Delta_2$, respectively, and let $C_1$ and $C_2$
be the set of their constants.
We say that a partial bijection $h$
\emph{induces an isomorphism} between $\I_1$ and $\I_2$ if
$\adom(\I_1)\cup C^{\I_1}\subseteq \domain{h}$, for 
$C^{\I_i}=\set{c^{\I_i}\mid c\in C_i}$ ($i=1,2$),
$\adom(\I_2)\cup C^{\I_2}\subseteq \image{h}$,
and $\restrict{h}{\adom(\I_1)\cup C^{\I_1}}$ is an isomorphism between 
$\tup{\adom(\I_1)\cup C^{\I_1},\cdot^{\I_1}}$ and $\tup{\adom(\I_2)\cup C^{\I_2},\cdot^{\I_2}}$.
%%%%
Observe that in order for a partial bijection to exist from $\Delta_1$ to $\Delta_2$, 
it must be the case that $\card{\Delta_1}\geq \card{\Delta_2}$.
\end{comment}

We first recall the standard 
notions of \emph{isomorphism} and \emph{isomorphic interpretations}. 
Two FO interpretations
$\I_1=\tup{\Delta_1,\cdot^{\I_1}}$ and $\I_2=\tup{\Delta_2,\cdot^{\I_2}}$,
over the same fluents $\F$ and constants $C$,
are said to be \emph{isomorphic}, written $\I_1\sim\I_2$,
if there exists a bijection (called \emph{isomorphism}) 
$h:\Delta_1\mapsto \Delta_2$ such that:
\myi for every $F\in\F$, $\vec x\in F^{\I_1}$ if and only if $h(\vec x)\in F^{\I_2}$;
\myii for every $c\in C$, $c^{\I_2}=h(c^{\I_1})$. 
It is immediate to see that if $h$ is an isomorphism, then so is $h^{-1}$, and that 
$\sim$ is an equivalence relation.
Intuitively, for two interpretations to be isomorphic, it is required that one 
can be obtained from the other by renaming the individuals in the interpretation 
domain.
Notice that, necessarily, the interpretation domains of isomorphic interpretations
have same cardinality.
When needed, to make it explicit that $h$ is an isomorphism between $\I_1$ and
$\I_2$, we write $\I_1\sim_h \I_2$.
%%
%%\todo{I changed $f$ to $h$ here.} %% OK
We denote by $\restrict{h}{D_1}$ the \emph{restriction} of $h$ to $D_1$, i.e., the mapping
$\restrict{h}{D_1}:D_1\mapsto h(D_1)$, such that
$\restrict{h}{D_1}(d)=h(d)$, for every $d\in D_1$.
In addition, recall that $\arestrict{\I}=\tup{\adom(\I),\cdot^{\I}}$ denotes the \emph{restriction of an interpretation} $\I=\tup{\Delta,\cdot^\I}$
to its active domain.

The bisimulation relation that captures \mymu  can be defined as follows.  
Let $T_1 = \tup{\Delta_1,Q_1,q_{10},\ra_1,\I_1}$ and
$T_2= \tup{\Delta_2, Q_2, q_{20}, \ra_2, \I_2}$ 
be two transition systems
(over the situation-suppressed fluents and constants of an action theory $\D$),
and let $H$ be the set of all possible bijections $h:D_1\mapsto D_2$,
for $D_1\subseteq \Delta_1$ and $D_2\subseteq \Delta_2$.
A relation $B\subseteq Q_1\times H\times Q_2$ is a
\emph{persistence-preserving bisimulation}\label{def:bisim}
between $T_1$ and $T_2$,
if $\tup{q_1,h,q_2}\in B$ implies that:
\begin{enumerate}
\item\label{bisim:req1} 
	$\arestrict{\I_1}(q_1)\sim_h\arestrict{\I_2}(q_2)$%%
	%%(for $h:\adom(\I_1(q_1))\mapsto\adom(\I_2(q_2))$)
;

\item\label{bisim:req2} for each $q_1'\in Q_1$, if $q_1 \ra_1 q_1'$ then there exists
  $q_2'\in Q_2$ such that:
\begin{enumerate}
\item  \label{bisim:req2a}$q_2 \ra_2 q_2'$,
\item  \label{bisim:req2b}there exists a bijection 
  $h':\adom(\I_1(q_1))\cup \adom(\I_1(q'_1))\mapsto\adom(\I_2(q_2))\cup \adom(\I_2(q'_2))$ such that its restriction $\restrict{h'}{\adom(\I_1(q_1))}$ %% to $\adom(\I_1(q_1))$ 
  coincides with $h$ and its restriction $\restrict{h'}{\adom(\I_1(q'_1))}$ 
  %%to $\adom(\I_1(q'_1))$ 
  is such that
  $\tup{q_1',\restrict{h'}{\adom(\I_1(q'_1))},q_2'}\in B$;

\end{enumerate}  

\item\label{bisim:req3} for each $q_2'\in Q_2$, if $q_2 \ra_2 q_2'$ then there exists
 $q_1'\in Q_1$ such that:
\begin{enumerate}
\item  \label{bisim:req3a}$q_1 \ra_2 q_1'$, 
\item \label{bisim:req3b}there exists a bijection 
  $h':\adom(\I_1(q_1))\cup \adom(\I_1(q'_1))\mapsto\adom(\I_2(q_2))\cup \adom(\I_2(q'_2))$ such that its restriction $\restrict{h'}{\adom(\I_1(q_1))}$ %% to $\adom(\I_1(q_1))$ 
  coincides with $h$ and its restriction $\restrict{h'}{\adom(\I_1(q'_1))}$ 
  %%to $\adom(\I_1(q'_1))$ 
  is such that
  $\tup{q_1',\restrict{h'}{\adom(\I_1(q'_1))},q_2'}\in B$.
\end{enumerate}
\end{enumerate}
Notice that requirements~\ref{bisim:req2b} and~\ref{bisim:req3b}  
impose the existence of a bijection $h'$ that preserves the bijection $h$ 
(in fact, the isomorphism) between the objects in $\adom(\I_1(q_1))$ and those in
$\adom(\I_2(q_2))$; this essentially means that the ``identity'' of such 
objects is preserved along the transition. Moreover, $h'$ is required to 
induce an isomorphism between $\adom(\I_1(q'_1))$ and $\adom(\I_2(q'_2))$, 
when restricted to $\adom(\I_1(q'_1))$, such that 
$\tup{q_1',\restrict{h'}{\adom(\I_1(q'_1))},q_2'}\in B$.

We say that a state $q_1 \in Q_1$ is \emph{(persistence-preserving)
  bisimilar} to $q_2 \in Q_2$, written $q_1 \approx q_2$, if there
exists a persistence-preserving bisimulation $B$ between $T_1$ and
$T_2$ such that $\tup{q_1,h,q_2}\in B$, for some $h$;
when needed, we also write $q_1 \approx_h q_2$, 
to explicitly name $h$.
Finally, a transition system
$T_1$ is said to be \emph{persistence-preserving bisimilar} to $T_2$,
written $T_1 \approx T_2$, if $q_{10} \approx q_{20}$. It is immediate
to see that bisimilarity between states and transition systems, i.e.,
the (overloaded) relation $\approx$, is an equivalence relation.

Next, we prove a result (Theorem~\ref{thm:mymu-bisimulation})
saying that \mymu enjoys invariance under this
notion of bisimulation. To this end, 
we first show the result for the simpler logic $\myhm$, obtained
from $\mymu$ by dropping the fixpoint
construct. Namely,  $\myhm$ is defined as:
\[\begin{array}{l}
  \Phi ::=  \varphi  \mid \lnot \Phi \mid \Phi_1 \land \Phi_2 \mid \exists
    x. \inadom(x) \land \Phi \mid {} \inadom(\vec{x}) \land \DIAM{\Phi}  \mid
   \inadom(\vec{x}) \land\BOX{ \Phi}
\end{array}
\]
Such a logic corresponds to a first-order variant of the
Hennessy-Milner Logic \cite{HM80}. Note that its semantics is
completely independent from the second-order valuation. 

Given an individual variable valuation $\vfo$ we denote by
$\image{\vfo}$ its image on the object domain.
\begin{lemma}\label{lemma:myhm-bisimulation}
  Consider two transition systems
  $T_1= \tup{\Delta_1,Q_1,q_{10},\ra_1,\I_1}$ and
  $T_2= \tup{\Delta_2, Q_2, q_{20}, \ra_2, \I_2}$, 
  %%FP: the following is needed to apply Libkin. Giuseppe, why dropped?
  %%both with infinite object domain, 
  two states $q_1\in Q_1$, $q_2 \in Q_2$, such that
  $q_1 \approx_h q_2$,
 and two individual variable valuations $\vfo_1$ and
  $\vfo_2$ mapping variables to $\Delta_1$ and $\Delta_2$, respectively. 
  If there exists a bijection $\hat h$ between $\adom(\I_1(q_1))\cup\image{\vfo_1}$ and  $\adom(\I_2(q_2))\cup\image{\vfo_2}$ whose restriction 
  $\restrict{\hat h}{\adom(\I_1(q_1))}$ coincides with $h$ and 
  such that for each individual variable $x$, $\hat h(\vfo_1(x))= \vfo_2(x)$, 
  then for every formula $\Phi$ of $\myhm$, possibly open on individual 
  variables, we have that:
\[T_1,q_1,\vfo_1 \models \Phi \, \textrm{ if and only if }\,  T_2,q_2,\vfo_2 \models \Phi. \]
\end{lemma}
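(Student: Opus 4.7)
The plan is by structural induction on $\Phi$. For the base case $\Phi=\varphi$, a FO situation-suppressed formula, I first apply Theorem~\ref{thm:dom-ind} to replace $\varphi$ by an equivalent domain-independent formula $\varphi'$ whose quantifiers range only over the active domain. By condition~\ref{bisim:req1} of persistence-preserving bisimulation, $\arestrict{\I_1}(q_1)\sim_h\arestrict{\I_2}(q_2)$, and by hypothesis $\hat h$ extends $h$ while matching the two valuations; so $\hat h$ restricted to $\adom(\I_1(q_1))\cup\image{\vfo_1}$ is an isomorphism of structures that preserves satisfaction of $\varphi'$, giving the base case via Theorem~\ref{th:mod-int}. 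The cases $\lnot\Phi$ and $\Phi_1\land\Phi_2$ are immediate from the induction hypothesis applied with the same $\hat h$.

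For $\exists x.\inadom(x)\land\Phi$, if $T_1,q_1,\vfo_1\models\exists x.\inadom(x)\land\Phi$ then there exists a witness $d_1\in\adom(\I_1(q_1))$ with $T_1,q_1,\vfo_1[x/d_1]\models\Phi$. I set $d_2:=h(d_1)\in\adom(\I_2(q_2))$: since $d_1\in\adom(\I_1(q_1))\subseteq\domain{\hat h}$, the same bijection $\hat h$ already sends $d_1$ to $d_2$, so it also witnesses the hypothesis of the induction for the updated valuations $\vfo_1[x/d_1]$ and $\vfo_2[x/d_2]$. The induction hypothesis applied to $\Phi$ then yields $T_2,q_2,\vfo_2[x/d_2]\models\Phi$, and hence $T_2,q_2,\vfo_2\models\exists x.\inadom(x)\land\Phi$. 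The converse direction is symmetric, relying on the fact that persistence-preserving bisimulation is symmetric (replacing $h$ by $h^{-1}$).

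The main obstacle is the modal case $\inadom(\vec x)\land\DIAM{\Phi}$ (the $\BOX$ case is analogous, using bisimulation condition~\ref{bisim:req3} in place of~\ref{bisim:req2}). Assume $T_1,q_1,\vfo_1$ satisfies this formula; then $\vfo_1(\vec x)=:\vec d_1\subseteq\adom(\I_1(q_1))$ and some $q_1'$ with $q_1\ra_1 q_1'$ satisfies $T_1,q_1',\vfo_1\models\Phi$. By condition~\ref{bisim:req2}, there exist $q_2'$ with $q_2\ra_2 q_2'$ and a bijection $h'$ on $\adom(\I_1(q_1))\cup\adom(\I_1(q_1'))$ extending $h$, with $\tup{q_1',\restrict{h'}{\adom(\I_1(q_1'))},q_2'}\in B$. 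To invoke the induction hypothesis at the successors, I need a bijection $\hat h'$ between $\adom(\I_1(q_1'))\cup\image{\vfo_1}$ and $\adom(\I_2(q_2'))\cup\image{\vfo_2}$ that restricts to $\restrict{h'}{\adom(\I_1(q_1'))}$ and matches $\vfo_1,\vfo_2$. The key observation, which makes everything fit, is that the free individual variables of $\Phi$ are exactly $\vec x$ (by the well-formedness convention for $\DIAM$ and $\BOX$), and these are mapped by $\vfo_1$ into $\adom(\I_1(q_1))\subseteq\domain{h'}$. I can therefore simply take $\hat h':=\restrict{h'}{\adom(\I_1(q_1'))\cup\image{\vfo_1}}$. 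Its codomain is $\adom(\I_2(q_2'))\cup\image{\vfo_2}$ because $h'$ extends $h$ and $h(\image{\vfo_1})=\image{\vfo_2}$ by the original hypothesis on $\hat h$. Applying the induction hypothesis to $\Phi$ at $q_1',q_2'$ with $\hat h'$ then delivers $T_2,q_2',\vfo_2\models\Phi$, whence $T_2,q_2,\vfo_2\models\inadom(\vec x)\land\DIAM{\Phi}$. The converse direction proceeds symmetrically, invoking condition~\ref{bisim:req3} to transport a $T_2$-successor back to $T_1$.
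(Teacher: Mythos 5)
Your overall structure matches the paper's proof, and your base case, boolean cases, and existential case are handled essentially as the paper does. The gap is in the modal case, at the step where you take $\hat h' := \restrict{h'}{\adom(\I_1(q'_1))\cup\image{\vfo_1}}$ and claim its codomain is $\adom(\I_2(q'_2))\cup\image{\vfo_2}$ ``because $h'$ extends $h$ and $h(\image{\vfo_1})=\image{\vfo_2}$''. That equality is false in general: $h$ is defined only on $\adom(\I_1(q_1))$, whereas $\image{\vfo_1}$ contains the values of \emph{all} individual variables, including variables $y\notin\vec x$ whose values may lie outside $\adom(\I_1(q_1))$. For such a $y$ the hypothesis gives $\hat h(\vfo_1(y))=\vfo_2(y)$, but $h'$ is only guaranteed to agree with $\hat h$ on $\adom(\I_1(q_1))$. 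If $\vfo_1(y)\notin\domain{h'}=\adom(\I_1(q_1))\cup\adom(\I_1(q'_1))$, your $\hat h'$ is simply undefined on $\vfo_1(y)$, so it is not a bijection between the sets the induction hypothesis requires; and if $\vfo_1(y)$ is a ``new'' object in $\adom(\I_1(q'_1))\setminus\adom(\I_1(q_1))$, then $h'(\vfo_1(y))$ is whatever the bisimulation dictates and need not equal $\vfo_2(y)$, so the condition that $\hat h'(\vfo_1(x))=\vfo_2(x)$ for \emph{every} individual variable $x$ --- which the induction hypothesis as stated demands --- can fail.

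The paper closes exactly this hole by not keeping $\vfo_1,\vfo_2$ fixed when descending to the successors: it introduces modified valuations $\vfo'_1,\vfo'_2$ that agree with $\vfo_1,\vfo_2$ on $\vec x$ but reassign every $y\notin\vec x$ to a single chosen object $d_1$ (and its image, extending $h'$ by one fresh pair when $d_1$ lies outside the relevant active domains). The induction hypothesis is then applied with $\vfo'_1,\vfo'_2$ and the restriction of the (possibly extended) $h'$, and the conclusion is transferred back to $\vfo_1,\vfo_2$ precisely because, by the well-formedness convention you cite, the only free variables of $\inadom(\vec x)\land\DIAM{\Phi'}$ are $\vec x$, on which the old and new valuations agree. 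Your observation about the free variables is the right one, but it must be used to justify \emph{changing the valuations before} invoking the induction hypothesis, not to justify applying it with the unchanged ones.
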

\begin{proof}
We proceed by induction on the structure of $\Phi$. 
%%
%\smallskip
For $\Phi=\varphi$, we observe that, by Theorem~\ref{thm:dom-ind},
$\I_i(q_i),\vfo_i\models\varphi$ if and only if
$\arestrict{\I_i}(q_i),\vfo_i\models\varphi'$ ($i=1,2$), for
$\varphi'$ the rewriting of $\varphi$ as its domain-independent
version.  Further, since
$\arestrict{\I_1}(q_1)\sim_h\arestrict{\I_2}(q_2)$, and there is a
bijection $\hat h$ between the objects assigned to variables by $\vfo_1$ and
$\vfo_2$ (even if they are not in $\adom(\I_1(q_1))$ or
$\adom(\I_2(q_2))$), by the invariance of FOL wrt isomorphic
interpretations, it follows that
$\arestrict{\I_1}(q_1),\vfo_1\models\varphi'$ if and only if
$\arestrict{\I_2}(q_2),\vfo_2\models\varphi'$.  These two facts easily
imply the thesis.
%%
%\smallskip 
The cases of boolean connectives are obtained by
straightforward induction using the same individual valuations
$\vfo_1$ and $\vfo_2$ and the same bijection $\hat h$.

\smallskip For $\Phi=\exists y.\inadom(y)\land\Phi'$. Suppose that
$T_1,q_1,\vfo_1\models \Phi$. Then, for some $d_1$, it is the case that
$T_1,q_1,\vfo_1[y/d_1]\models \inadom(y)\land\Phi'$. Notice that this
implies $d_1\in\adom(\I_1(q_1))$, then $\hat h(d_1)=h(d_1)=d_2$, for
some $d_2\in \adom(\I_2(q_2))$, as $\hat h$ coincides with $h$ on 
$\adom(\I_1(q_1))$. 
Consider the individual valuation
$\vfo_2[y/d_2]$. For every variable $x$ we have
$\hat h(\vfo_1[y/d_1](x))=\vfo_2[y/d_2](x)$ (for $y$ we have
$\vfo_2[y/d_2](y)=d_2=\hat h(d_1)=\hat h(\vfo_1[y/d_1](y))$). 
Hence, using these new valuations
and the same bijection $\hat h$, now restricted to 
$\image{\vfo_1[y/d_1]}$ and $\image{\vfo_2[y/d_2]}$ (to take into account
the assignments to $y$), we can apply the induction hypothesis, and
conclude that $T_2,q_2,\vfo_2[y/d_2]\models \inadom(y)\land\Phi'$,
which implies $T_2,q_2,\vfo_2\models \Phi$.  The other direction is
proven symmetrically.

\smallskip For $\Phi=\inadom(\vec{x}) \land \DIAM{\Phi'}$. Suppose
that $T_1,q_1,\vfo_1\models (\inadom(\vec{x}) \land \DIAM{\Phi'})$.
By definition, this implies that $\vfo_1(x_i) \in \adom(\I_1(q_1))$
for each $x_i\in\vec{x}$, and there exists a transition
$q_1 \ra_1 q'_1$ such that $T_1,q'_1,\vfo_1 \models \Phi'$.
Since $q_1 \approx_h q_2$, there exist: 
%%%%an isomorphism $\hat{h}$ such
%%%%that
%%%%$\arestrict{\I_1}(q_1)\sim_{\hat{h}}
%%%%\arestrict{\I_2}(q_2)$
%%%%and 
\myi a transition $q_2 \ra_2 q'_2$, and
\myii a bijection
$h':\adom(\I_1(q_1))\cup \adom(\I_1(q'_1))\mapsto\adom(\I_2(q_2))\cup
\adom(\I_2(q'_2))$
such that its restriction $\restrict{h'}{\adom(\I_1(q_1))}$ 
coincides with $h$,
its restriction
$\restrict{h'}{\adom(\I_1(q'_1))}$ %%to $\adom(\I_1(q'_1))$ 
is an isomorphism such that 
$\arestrict{\I_1}(q'_1)\sim_{\restrict{h'}{\adom(\I_1(q'_1))}}\arestrict{\I_2}(q'_2)$,
and $q'_1\approx_{\restrict{h'}{\adom(\I_1(q'_1))}}q_2'$.
%%
%%Since $\vfo_1(x_i)\in \adom(\I_1(q_1))$, 
%%and $\restrict{\hat h}{\adom(\I_1(q_1))}$ coincides with $h$,
%%for all $x_i\in\vec x$, we have that
%%$h'(\vfo_1(x_i)) = h(\vfo_1(x_i)) = \hat{h}(\vfo_1(x_i)) = \vfo_2(x_i)$. 
%%
%%%%Now consider a new bijection $\hat h'$ defined on 
%%%%$\adom(\I_1(q'_1))\cup\image{\vfo_1}$, as follows:
%%%%\begin{itemize}
%%%%	\item if $d_1\in\adom(\I_1(q'_1))\cup(\image{\vfo_1}\cap \adom(\I_1(q_1)))$,
%%%%		then $\hat h'(d_1)=h'(d_1)$;
%%%%  	\item else, $\hat h'(d_1)=d_2$, for $d_2\in \Delta_2\setminus
%%%%		(\adom(\I_2(q_2))\cup \adom(\I_2(q'_2)))$.
%%%%\end{itemize}
%%%%
%%%%
%%%%
Now consider two new variable valuations $\vfo'_1$ and $\vfo'_2$, defined as follows:
\begin{itemize}
\item for $x_i\in \vec{x}$ (for which we have that
  $\vfo_1(x_i)\in \adom(\I_1(q_1))$), let 
  $\vfo'_1(x_i)=\vfo_1(x_i)$ and $\vfo'_2(x_i)=\vfo_2(x_i)$;
\item choose $d_1 \in \Delta_1$ and, for all $y\not \in \vec{x}$, 
let $\vfo'_1(y)=d_1$, then: 
if $d_1 \in \adom(\I_1(q_1))\cup\adom(\I_1(q'_1))$, 
for all $y\notin\vec x$, let
  $\vfo'_2(y)=h'(d_1)$; else, 
  choose $d_2 \not\in \adom(\I_2(q_2))\cup\adom(\I_2(q'_2))$,
  let, for all $y\notin\vec x$, $\vfo'_2(y)=d_2$, and contextually extend $h'$ so that $h'(d_1)=d_2$.
\end{itemize}
As a result, for all variables $x$, we have $h'(\vfo'_1(x))=\vfo'_2(x)$ 
(for $h'$ possibly extended as above).
Consider the bijection $\hat h'=\restrict{h'}{\adom(\I_1(q'_1))\cup\image{\vfo'_1}}$.
With this new bijection and the valuations
$\vfo'_1$ and $\vfo'_2$, we can apply the induction hypothesis,
and obtain that $T_1,q'_1,\vfo_1 \models \Phi'$ implies
$T_2,q'_2,\vfo'_2\models \Phi'$, and since $q_2 \ra_2 q'_2$, we have
that $T_2,q_2,\vfo'_2 \models (\inadom(\vec{x}) \land \DIAM{\Phi'})$.
Now, observe that the only free variables of
$(\inadom(\vec{x}) \land \DIAM{\Phi'})$ are $x_i\in \vec{x}$,
and that, for these, we have $\vfo'_1(x_i)=\vfo_1(x_i)$ and 
$\vfo'_2(x_i)=\vfo_2(x_i)$. Therefore, we can conclude that
$T_2,q_2,\vfo_2 \models (\inadom(\vec{x}) \land \DIAM{\Phi'})$.
The other direction can be proven in a symmetric way.

\smallskip For $\Phi=\inadom(\vec{x}) \land \BOX{ \Phi''}$: we observe that we can
rewrite $\Phi$ as 
$\lnot(\inadom(\vec{x}) \limp \DIAM{ \Phi'})$, with
$\Phi' = \neg \Phi''$.
Then, assume that
$T_1,q_1,\vfo_1 \models (\inadom(\vec{x}) \limp\DIAM{ \Phi'})$.  By
definition, this implies that: \myi either for some $x_i\in\vec{x}$ we
have $\vfo_1(x_i)\not\in \adom(\I_1(q_1))$; or \myii for all
$x_i\in\vec{x}$ we have $\vfo_1(x_i)\in \adom(\I_1(q_1))$ and there
exists a transition $q_1 \ra_1 q'_1$ such that
$T_1,q'_1,\vfo_1 \models \Phi'$.  We distinguish the two cases:
\begin{itemize}
\item If for some $x_i\in\vec{x}$,
  $\vfo_1(x_i)\not\in \adom(\I_1(q_1))$, then we have that
  $\vfo_2(x_i)\not\in \adom(\I_2(q_2))$. Indeed, assume toward
  contradiction that $\vfo_2(x_i) \in\adom(\I_2(q_2))$.  Since
  $\arestrict{\I_1}(q_1) \sim_h \arestrict{\I_2}(q_2)$ it follows that
  the inverse $h^{-1} $ of $h$ is unique, hence
  $h^{-1}(\vfo_2(x_i))=\vfo_1(x_i)$ and
  $\vfo_1(x_i) \in \adom(\I_1(q_1))$, getting a contradiction.  Thus, we
  have that $T_2,q_2,\vfo_2 \not \models \inadom(\vec{x})$ and so
  $T_2,q_2,\vfo_2 \models (\inadom(\vec{x}) \limp \DIAM{\Phi'})$.

\item If for all $x_i\in\vec{x}$, $\vfo_i(x_i)\in \adom(\I_1(q_1))$,
  we can proceed in the same way as for the case of
  $\Phi=\inadom(\vec{x}) \land \DIAM{\Phi'}$.
\end{itemize}
The other direction is proven symmetrically.
\end{proof}

We can now extend the result to the whole $\mymu$.
\begin{lemma}\label{lemma:mymu-bisimulation}
 Consider two transition systems
  $T_1= \tup{\Delta_1,Q_1,q_{10},\ra_1,\I_1}$ and
  $T_2= \tup{\Delta_2, Q_2, q_{20}, \ra_2, \I_2}$, 
  %%FP: the following is needed to apply Libkin. Giuseppe, why dropped?
  %%both with infinite object domain, 
  two states $q_1\in Q_1$, $q_2 \in Q_2$, such that
  $q_1 \approx_h q_2$,
 and two individual variable valuations $\vfo_1$ and
  $\vfo_2$ mapping variables to $\Delta_1$ and $\Delta_2$, respectively. 
  If there exists a bijection $\hat h$ between $\adom(\I_1(q_1))\cup\image{\vfo_1}$ and  $\adom(\I(_2q_2))\cup\image{\vfo_2}$ whose restriction 
  $\restrict{\hat h}{\adom(\I_1(q_1))}$ coincides with $h$ and 
  such that for each individual variable $x$, $\hat h(\vfo_1(x))= \vfo_2(x)$, 
  then for every formula $\Phi$ of $\mymu$,
  closed on the predicate variables but possibly open on the
  individual variables, we have:
\[T_1,q_1,\vfo_1 \models \Phi \, \textrm{ if and only if }\,  T_2,q_2,\vfo_2 \models \Phi. \]
%%%%
%%%%  Consider two transition systems
%%%%  $T_1= \tup{\Delta_1,Q_1,q_{10},\ra_1,\I_1}$ and
%%%%  $T_2= \tup{\Delta_2, Q_2, q_{20}, \ra_2, \I_2}$, 
%%%%  %both with infinite object domain, 
%%%%  and two individual variable valuations $\vfo_1$ and $\vfo_2$ mapping
%%%%  variables to $\Delta_1$ and $\Delta_2$, respectively, and two states
%%%%  $q_1\in Q_1$, $q_2 \in Q_2$. If $q_1 \approx q_2$ and there exists a
%%%%  bijection $h$ between $\adom(\I(q_1))\cup\image{\vfo_1}$ and
%%%%  $\adom(\I(q_2))\cup\image{\vfo_2}$ such that its restriction to
%%%%  $\adom(\I(q_1))$ is an isomorphism
%%%%  $\arestrict{\I_1}(q_1) \sim_{\restrict{h}{\adom(\I_1(q_1))}}
%%%%  \arestrict{\I_2}(q_2)$,
%%%%  and such that for each individual variable $x$ we have that
%%%%  $h(\vfo_1(x))= \vfo_2(x)$, then every formula $\Phi$ of $\mymu$,
%%%%  closed on the predicate variables but possibly open on the in
%%%%  individual variables, we have:
%%%%\[T_1,q_1,\vfo_1 \models \Phi \, \textrm{ if and only if }\,  T_2,q_2,\vfo_2 \models \Phi. \]
\end{lemma}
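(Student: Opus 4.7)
My plan is to prove Lemma \ref{lemma:mymu-bisimulation} by induction on the structure of the $\mymu$ formula $\Phi$, strengthening the inductive statement to accommodate the fact that subformulas of a closed formula may contain free predicate variables. Specifically, I would generalize the hypothesis to cover pairs of predicate valuations $\vso_1$ and $\vso_2$ that are \emph{bisimulation-compatible}, meaning: for every predicate variable $Z$ appearing free in $\Phi$, and for every pair of states $q'_1 \in Q_1$ and $q'_2 \in Q_2$ together with individual valuations $\vfo'_1,\vfo'_2$ that satisfy the same coordination hypothesis as $q_1,q_2,\vfo_1,\vfo_2$ (via some bijection $\hat h'$ extending a bisimulation witness), it holds that $q'_1 \in \vso_1(\vfo'_1, Z)$ iff $q'_2 \in \vso_2(\vfo'_2, Z)$.

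For the non-fixpoint cases ($\varphi$, booleans, $\exists x.\inadom(x)\land\Phi$, $\inadom(\vec x)\land\DIAM{\Phi}$, $\inadom(\vec x)\land\BOX{\Phi}$) the argument is literally the one already given in the proof of Lemma \ref{lemma:myhm-bisimulation}, since none of those cases interact with the predicate valuations. The new case $\Phi = Z$ is immediate by the bisimulation-compatibility of $\vso_1,\vso_2$ applied to the pair $(q_1,q_2)$ with valuations $(\vfo_1,\vfo_2)$ and bijection $\hat h$.

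The main case is $\Phi = \mu Z.\Psi$, and I would handle it through the Knaster--Tarski ordinal approximation. Define $\Psi^0_i = \emptyset$, $\Psi^{\alpha+1}_i = \MODAT{\Psi}_{[Z/\Psi^\alpha_i]}$ in $T_i$ for $i=1,2$, and $\Psi^\lambda_i = \bigcup_{\alpha<\lambda}\Psi^\alpha_i$ at limit $\lambda$. Since $\MODAT{\mu Z.\Psi}$ equals the least such approximant that stabilizes, it suffices to show by transfinite induction on $\alpha$ that, under our coordination hypothesis on $(q_1,q_2,\vfo_1,\vfo_2,\hat h)$, we have $q_1 \in \Psi^\alpha_1$ iff $q_2 \in \Psi^\alpha_2$. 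The base case is trivial; the limit case is immediate from the union. For the successor case, the approximants $\Psi^\alpha_1$ and $\Psi^\alpha_2$ play the role of a new pair of predicate valuations for $Z$, and by the transfinite induction hypothesis they are bisimulation-compatible in the strengthened sense. Thus we can apply the outer structural induction hypothesis to $\Psi$ (which is a proper subformula, and in which $Z$ now behaves as a free predicate variable with compatible valuations), obtaining that $q_1 \in \Psi^{\alpha+1}_1$ iff $q_2 \in \Psi^{\alpha+1}_2$.

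The derived case $\nu Z.\Psi$ follows by duality (or by a dual approximation from above starting at $Q_i$). The main obstacle I anticipate is the careful bookkeeping in the strengthened induction hypothesis: one must ensure that bisimulation-compatibility of the predicate valuations is preserved when we move to a subformula by extending $\hat h$ to cover the new free variables introduced by quantifiers and modalities, so that the same coordination clauses apply verbatim to every state-pair over which the predicate variable is later evaluated. Once this is set up coherently at the start of the proof, the fixpoint case is a routine transfinite induction that invokes the structural induction on $\Psi$ at each successor stage.
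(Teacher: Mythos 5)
Your proposal is correct, and it proves the same statement by the same underlying mechanism---Knaster--Tarski ordinal approximants of the fixpoints together with the already-established invariance for the fixpoint-free cases---but the decomposition is genuinely different from the paper's. The paper keeps the induction hypothesis of Lemma~\ref{lemma:myhm-bisimulation} untouched: it first extends that lemma to the \emph{infinitary} version of $\myhm$ (arbitrary disjunctions of formulas sharing the same free individual variables), and then observes that each approximant $\mu^\alpha Z.\Phi$, $\nu^\alpha Z.\Phi$ is obtained by \emph{syntactic substitution} $\Phi[Z/\mu^\beta Z.\Phi]$ and hence is itself an infinitary $\myhm$ formula with no free predicate variables; invariance for the full $\mymu$ then falls out without ever having to reason about predicate valuations. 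You instead carry the approximants \emph{semantically}, as values of the free variable $Z$, which forces the strengthened ``bisimulation-compatible valuations'' induction hypothesis and the interleaved structural/transfinite induction. Both work; the paper's route buys a cleaner induction statement at the price of introducing an infinitary logic and its semantics, while yours avoids infinitary syntax at the price of the bookkeeping you correctly identify as the main hazard. One point to make explicit when you write it up: your transfinite induction claim must be stated for \emph{all} coordinated tuples $(q_1',q_2',\vfo_1',\vfo_2',\hat h')$ simultaneously (not just the fixed one), since that universally quantified form is exactly what makes the pair $(\Psi^\alpha_1,\Psi^\alpha_2)$ bisimulation-compatible at the successor step; and since $\mu Z.\Psi$ may have free individual variables acting as parameters, the approximants are parametrized by the individual valuation, so compatibility must be checked relative to the coordinated pairs of valuations rather than for a single set of states. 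Your sketch already gestures at both, so this is a matter of care rather than a gap.
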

\begin{proof}
  We prove the theorem in two steps. First, we show that Lemma
  \ref{lemma:myhm-bisimulation} can be extended to the infinitary
  version of $\myhm$ that supports arbitrary infinite disjunction of
  formulas sharing the same free variables \cite{vanBenthem83}. Then, we recall that
  fixpoints can be translated into this infinitary logic, thus
  guaranteeing invariance for the whole $\mymu$ logic.
Let $\Psi$ be a possibly infinite set of open $\myhm$ formulas.
Given a transition system
  $T = \tup{\Delta,Q,q_0,\ra,\I}$, the semantics of $\bigvee \Psi$ is
  $\MODAT{\bigvee \Psi} = \bigcup_{\psi \in \Psi}
  \MODAT{\psi}$.
  Therefore, given a state $q$ of $T$ and a variable valuation $\vfo$,
  we have $T,q,\vfo \models \Psi$ if and only if
  $T,q,\vfo \models \psi$ for some $\psi \in \Psi$. Arbitrary infinite
  conjunction is obtained for free through negation.
  Lemma~\ref{lemma:myhm-bisimulation} extends to this arbitrary
  infinite disjunction.  By the induction hypothesis, under the
  assumption of the Lemma, we can assume that for every formula
  $\psi \in \Psi$, we have $T_1,q_{10},\vfo_1 \models \psi$ if and
  only if $T_2,q_{20},\vfo_2 \models \psi$.  Given the semantics of
  $\bigvee \Psi$ above, this implies that
  $T_1,q_{10},\vfo_1 \models \bigvee \Psi$ if and only if
  $T_2,q_{20},\vfo_2 \models \bigvee \Psi$.

  In order to extend the result to the whole $\mymu$, we translate
  $\mu$-calculus \emph{approximates} into the infinitary  $\myhm$  by  (see~\cite{BS07,vanBenthem83}),
  where the approximant of index $\alpha$ is denoted by
  $\mu^\alpha Z.\Phi$ for least fixpoint formulas $\mu Z.\Phi$ and
  $\nu^\alpha Z.\Phi$ for greatest fixpoint formulas $\nu
  Z.\Phi$.
  This is a standard result that holds also for $\mymu$. In
  particular, such approximates are as follows:
\[\begin{array}{rcl@{\qquad}rcl}
  \mu^0 Z.\Phi & = & \false 
  &  \nu^0 Z.\Phi & =& \true\\
  \mu^{\beta+1} Z.\Phi & =& \Phi[Z/\mu^\beta Z.\Phi]
  & \nu^{\beta+1} Z.\Phi & =& \Phi[Z/\nu^\beta Z.\Phi]\\
  \mu^\lambda Z.\Phi & =& \bigvee_{\beta < \lambda} \mu^\beta Z. \Phi 
  &
  \nu^\lambda Z.\Phi & =& \bigwedge_{\beta < \lambda} \nu^\beta Z. \Phi
\end{array}
\]
where $\lambda$ is a limit ordinal, and the notation $\Phi[Z/\nu^\beta Z.\Phi]$ denotes the formula obtained from $\Phi$ by replacing each occurrence of $Z$ by $\nu^\beta Z.\Phi$.
By Tarski and Knaster Theorem~\cite{Tarski55}, the fixpoints and their
approximates are connected by the following properties: given a
transition system $T$ and a state $q$ of $T$,
\begin{itemize}
\item $q \in \MODAT{\mu Z.\Phi}$ if and only if there exists an
  ordinal $\alpha$ such that $s \in \MODAT{\mu^\alpha Z.\Phi}$ and, for
  every $\beta < \alpha$,  it holds that $s \not\in \MODA{\mu^\beta Z.\Phi}$;
\item $q \not\in \MODAT{\nu Z.\Phi}$ if and only if there exists an ordinal $\alpha$ such that $s \not\in \MODAT{\nu^\alpha
    Z.\Phi}$ and, for every $\beta < \alpha$, it holds that $q \in \MODA{\nu^\beta
    Z.\Phi}$.
\end{itemize}
Since each approximate, including the ones corresponding exactly to the least and greatest fixpoints, can be written as an infinitary $\myhm$ formula,  we get the thesis.
\end{proof}

With this lemma in place we can prove the invariance result. 

\begin{theorem}
\label{thm:mymu-bisimulation}
Consider two transition systems $T_1=\tup{\Delta_1,Q_1,q_{10},\ra_1,\I_1}$ 
and $T_2=\tup{\Delta_2,Q_2,q_{20},\ra_2,\I_2}$.
If $T_1 \approx T_2$,
then, for every $\mymu$ closed formula $\Phi$
\[T_1 \models \Phi \textrm{ if and only if } T_2 \models \Phi.\]
\end{theorem}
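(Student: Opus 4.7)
The plan is to derive this theorem as a direct corollary of Lemma~\ref{lemma:mymu-bisimulation}, which already handles formulas possibly open on individual variables. Since the statement concerns closed formulas, the task reduces to verifying that the hypotheses of that lemma can be trivially instantiated at the initial states.

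First, from $T_1 \approx T_2$ I unfold the definition to get $q_{10} \approx q_{20}$, so there exists a persistence-preserving bisimulation $B$ and a bijection $h$ between the active domains of $\arestrict{\I_1}(q_{10})$ and $\arestrict{\I_2}(q_{20})$ with $\tup{q_{10}, h, q_{20}} \in B$ and $\arestrict{\I_1}(q_{10}) \sim_h \arestrict{\I_2}(q_{20})$. Next, since $\Phi$ is closed on individual variables, I can pick arbitrary valuations $\vfo_1$ and $\vfo_2$; for concreteness I choose any $\vfo_1$ whose image lies in $\adom(\I_1(q_{10}))$ (or degenerate valuations if no free variables occur) and define $\vfo_2$ pointwise by $\vfo_2(x) = h(\vfo_1(x))$. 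Taking $\hat h = h$, the hypothesis of Lemma~\ref{lemma:mymu-bisimulation} (that $\hat h$ restricted to $\adom(\I_1(q_{10}))$ coincides with $h$ and that $\hat h(\vfo_1(x)) = \vfo_2(x)$ for every $x$) is satisfied vacuously/by construction.

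Applying Lemma~\ref{lemma:mymu-bisimulation}, I obtain $T_1, q_{10}, \vfo_1 \models \Phi$ iff $T_2, q_{20}, \vfo_2 \models \Phi$. Finally, since $\Phi$ is closed, its truth value at a state is independent of the individual variable valuation, so I can drop the valuations and conclude $T_1, q_{10} \models \Phi$ iff $T_2, q_{20} \models \Phi$, which by the definition $T_i \models \Phi \;\isdef\; T_i, q_{i0} \models \Phi$ yields the thesis.

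There is essentially no obstacle here: all the heavy lifting (boolean cases, quantification over persistent objects, modal steps via the bisimulation clauses, and the lifting to fixpoints via $\mymu$ approximants) has already been carried out in Lemma~\ref{lemma:mymu-bisimulation}. The only mild subtlety worth spelling out is that closure of $\Phi$ on individual variables allows one to ignore $\vfo_1,\vfo_2$ in the conclusion, which is why an arbitrary admissible choice of valuations compatible with $h$ suffices to invoke the lemma.
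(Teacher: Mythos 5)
Your proposal is correct and follows essentially the same route as the paper: both reduce the theorem to Lemma~\ref{lemma:mymu-bisimulation} by unfolding $q_{10}\approx_h q_{20}$, constructing compatible valuations $\vfo_1,\vfo_2$ linked by a bijection extending $h$, and then discarding the valuations because $\Phi$ is closed. The only (immaterial) difference is that the paper picks an arbitrary constant valuation and extends $h$ outside the active domain when needed, which also covers the corner case where $\adom(\I_1(q_{10}))$ is empty and your choice of $\vfo_1$ inside the active domain would not be available.
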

\begin{proof}
  %%By Lemma~\ref{lemma:mymu-bisimulation}: 
  If $T_1 \approx T_2$ then
  for some bijection $h$ we have $q_{10} \approx_h q_{20}$. This
  implies that
  $\arestrict{\I_1}(q_{10})\sim_h\arestrict{\I_2}(q_{20})$.  Now
  consider the variable valuations $\vfo_1$ and $\vfo_2$ defined as
  follows (notice that since $\Phi$ is closed such individual
  valuations are irrelevant in evaluating it): 
  choose an arbitrary $d_1 \in \Delta_1$ and let, 
  for all variables $x$,
  $\vfo_1(x)=d_1$; if $d_1 \in \adom(\I_1(q_1))$, 
  let, for all $x$, $\vfo_2(x)=h(d_1)$;
  else, choose $d_2 \not\in \adom(\I_2(q_2))$ 
  and let, for all $x$, $\vfo'_2(x)=d_2$.

  Now, define a bijection $h'$ such that for all $d\in\adom(\I(q_1))$,
  $h'(d)=h(d)$, and if $d_1 \not\in \adom(\I_1(q_1))$,  
  $h'(d_1)=d_2$.
  It can be seen that  $h'$ is a bijection between
  $\adom(\I_1(q_1)\cup \image{\vfo_1}$ and
  $\adom(\I_2(q_2)\cup \image{\vfo_2}$ such that
  $\arestrict{\I_1}(q_1) \sim_{\restrict{h'}{\adom(\I_1(q_1))}}
  \arestrict{\I_2}(q_2)$
  and for all variables $x$, $h'(\vfo_1(x))=\vfo_2(x)$. 
  Hence, by Lemma~\ref{lemma:mymu-bisimulation}, we get the thesis.
\end{proof}

Thus, to check whether a transition system $T$ satisfies a $\mymu$
formula $\Phi$, one can perform the check on any transition system
$T'$ that is bisimilar to $T$. This is particularly useful in those
cases where $T$ is infinite-state but admits some finite-state
bisimilar transition system. We exploit this result later on.

%%% Local Variables:
%%% mode: latex
%%% dirvars: nil
%%% save-place: t
%%% TeX-master: "main"
%%% End:

%%\input{52b-mu-calc-generic-mula}
\subsection{Transition Systems Induced by a Situation Calculus Theory}\label{sec:ind-ts}

Among the various TSs, we are interested in those \emph{induced} by models of
the situation calculus action theory $\D$.
Consider a model $\M$ of $\D$ with object domain $\Delta$ and situation domain $\S$.
The TS {\em induced by $\M$}\label{def:induced-ts}
is the labelled TS \mbox{$T_{\M}=\tup{\Delta,Q,q_0,\I,\ra}$}, such that:
\begin{itemize}\itemsep=0pt
	\item $Q=\S$ is the set of {\em possible states}, 
		each corresponding to a distinct executable situation in $\S$;
	\item $q_0=S_0^\M\in Q$ is the {\em initial state}, 
		with $S_0^\M$ the initial situation of $\D$;
	\item ${\ra}\subseteq Q\times Q$ is the {\em transition relation}
		such that~$q\ra q'$ iff there exists some action $a$ such 
		that $\tup{a,q}\in\Poss^{\M}$ and $q'=do^{\M}(a,q)$.
	\item $\I: Q\mapsto Int^{\F,C}_\Delta$ 
		is the {\em labeling function} 
		associating each state (situation) $q$ with the interpretation 
		$\I(q)=\I_\M(q)$.
\end{itemize}
As it can be seen, the TS induced by a model $\M$ is essentially the 
tree of executable situations, with each situation labelled by an 
interpretation of fluents (and constants), corresponding to the interpretation 
associated by $\M$ to that situation. Notice that transitions do not carry any 
information about the corresponding triggering action.

We can now show that the semantics of $\mymu$ on a model can alternatively 
be given in terms of the corresponding induced TS. 

\begin{theorem}\label{th:InducedTS}
  Let $\D$ be an action theory, $\M$  a model of $\D$ with (infinite) object domain $\Delta$ and
  situation domain $\S$, and $T_{\M}$ the corresponding induced TS.
  Then for every $\mymu$ formula $\Phi$ (with no occurrence of action terms) we have that:
\[\MODA{\Phi}=\MODATM{\Phi}\] 
\end{theorem}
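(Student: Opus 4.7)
The plan is to prove the equality $\MODA{\Phi} = \MODATM{\Phi}$ by structural induction on $\Phi$, relying on the two key correspondences ensured by the construction of $T_{\M}$: first, the labeling function satisfies $\I(q) = \I_\M(q)$ by definition, and second, the transition relation $q \ra q'$ holds exactly when there exists an action $a$ with $(a,q) \in \Poss^{\M}$ and $q' = do^{\M}(a,q)$. Throughout, I fix an arbitrary valuation $(\vfo,\vso)$ and show both sides coincide.

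For the base case $\Phi = \varphi$, where $\varphi$ is a uniform situation-suppressed FO formula, the two extension functions become: $\MODA{\varphi} = \{s \in \S \mid \M,\vfo \models \varphi[s]\}$ versus $\MODATM{\varphi} = \{q \in Q \mid \I_{\M}(q),\vfo \models \varphi\}$. Equality follows directly from Theorem~\ref{th:mod-int}, together with the fact that $Q = \S$ and $\I(q) = \I_{\M}(q)$. For boolean connectives $\lnot$ and $\land$, the equality is immediate from the inductive hypothesis. For $\exists x.\,\inadom(x) \land \Phi'$, observe that $\adom^{\M}(s)$ (used in the $\M$-semantics) and $\adom(\I(q))$ (used in the $T_{\M}$-semantics) are defined identically---both collect the objects occurring in some $F^{\M}(s) = F^{\I_\M(s)}$ or interpreting a constant in $C$---so the quantifier ranges over the same set at each state/situation, and the inductive hypothesis on $\Phi'$ under the extended valuation $[x/d]$ closes the case.

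The modal cases are the main point where the transition relation must be exploited. For $\inadom(\vec{x}) \land \DIAM{\Phi'}$, the $\M$-semantics requires some action $a$ with $(a,s) \in \Poss^{\M}$ and $do^{\M}(a,s) \in \MODA{\Phi'}$; by the induction hypothesis, $\MODA{\Phi'} = \MODATM{\Phi'}$, so this is equivalent to the existence of a successor situation $s' = do^{\M}(a,s)$ with $s' \in \MODATM{\Phi'}$. By the definition of $\ra$ in $T_{\M}$, this is exactly the condition that there exists $q'$ with $q \ra q'$ and $q' \in \MODATM{\Phi'}$. The $\inadom(\vec{x})$ guard is handled as in the quantifier case, since the active domains coincide. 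The $\BOX$ case is dual.

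For the fixpoint case $\mu Z.\Phi'$ (and dually $\nu Z.\Phi'$), I would observe that, by the inductive hypothesis applied parametrically in $Z$, the predicate transformers $\E \mapsto \MODAZ{\Phi'}{[Z/\E]}$ (on subsets of $\S$) and $\E \mapsto \MODATZ{\Phi'}{[Z/\E]}$ (on subsets of $Q$) coincide as operators on $2^{\S} = 2^{Q}$, since $\S = Q$. Hence their least (respectively greatest) fixpoints, computed as intersections (respectively unions) of pre-/post-fixed points, are identical, giving $\MODA{\mu Z.\Phi'} = \MODATM{\mu Z.\Phi'}$.

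The main subtlety---and what deserves most care---is making sure the inductive hypothesis can be applied uniformly over all valuations $(\vfo,\vso)$, so that the fixpoint operators really are pointwise equal as functions. This is routine but must be stated cleanly. A minor subtlety is that we are using $\mymu$ formulas with no action terms; this was ensured earlier by Theorem~\ref{th:mymu-drop-act}, which justifies the restriction in the statement and makes the modal step genuinely symmetric between the $\M$-semantics (which quantifies over actions) and the $T_{\M}$-semantics (which quantifies only over successor states, discarding action labels).
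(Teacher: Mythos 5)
Your proof is correct and follows essentially the same route as the paper: structural induction on $\Phi$, with the base case settled by the fact that evaluation of a uniform, action-term-free, situation-suppressed formula depends only on $\I_\M(s)$ (Theorem~\ref{th:mod-int}), and the remaining cases handled by the direct correspondence between $\ra$ and $\Poss^\M$/$do^\M$ together with $Q=\S$. The paper simply dismisses the inductive cases as straightforward, whereas you spell them out (including the fixpoint case via pointwise equality of the predicate transformers), which is a harmless elaboration rather than a different argument.
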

\begin{proof}
  By induction on the structure of $\Phi$. For the base case
   of an open uniform situation-suppressed situation calculus  formula $\varphi$, 
   we need to prove that
\[\MODA{\varphi} =  \{s \in \S\mid \M,\vfo\models \varphi[s] \}  \;=\;
\MODATM{\varphi}  =  \{s \in \S\mid \I(s),\vfo\models \varphi\}.\]
This is indeed the case: since no action terms occur in $\varphi$
	and $\varphi$ is uniform in $s$, 
	the evaluation of $\varphi$ depends only on the interpretation of each fluent (and constant) at $s$, 
	i.e., on $\I_\M(s)$.
Once this base case is settled, the inductive cases are straightforward.
\end{proof}

%%% Local Variables:
%%% mode: latex
%%% dirvars: nil
%%% save-place: t
%%% TeX-master: "main"
%%% End:
\subsection{Abstract Finite-State Transition System} \label{sec:finite-abstraction}

As shown above, satisfaction of $\mymu$ formulas is 
preserved by persistence-preserving bisimulations.
This holds even between an infinite- and a finite-state TS.
When this is the case, the verification can be performed 
on the finite TS using standard $\mu$-calculus model checking techniques, 
which essentially perform fixpoint computations on a finite state space. 
We next show how, for the case of bounded theories, one can construct a finite TS $T_F$ that is
bisimilar to the TS $T_{\M}$ induced by  $\M$.

\renewcommand{\algorithmicrequire}{\textbf{Input:}}
\renewcommand{\algorithmicensure}{\textbf{Output:}}
\newcommand{\PICK}{\textbf{pick \xspace}}
\begin{algorithm}
\caption{Computation of a finite-state TS persistence-preserving bisimilar to $T_{\M}$.\label{alg:tf}}
\begin{algorithmic}[1]
	\REQUIRE A  basic action theory $\D$ bounded by $b$, with complete information on $S_0$, and a model $\M$ of $\D$ with infinite object domain $\Delta$
	\ENSURE A finite-state TS $T_F=\tup{\Delta,Q,q_0,\I,\ra}$ persistence-preserving bisimilar to $T_{\M}$
	\STATE \label{alg:init}
		\LET %$\Delta$ be the object sort of $\M$, 
			$\F$ the set of fluents of $\D$,
			$C$ the set of constants explicitly mentioned in $\D$;
	\STATE \label{alg:init2}
		\LET $Q:=\set{q_0}$, for $q_0$ a fresh state;
	\STATE \label{alg:init3}
		\LET $\I(q_0)=\I_\M(S^\M_0)$;
	\STATE \label{alg:init4}
		\LET ${\ra}:=\emptyset$;\\
	\STATE \label{alg:init5}
		\LET $Q_{te}:=\set{q_0}$;\\
       \WHILE{($Q_{te}\neq \emptyset$)}\label{alg:while}
        \STATE\label{alg:pick} 
                       \PICK $q\in Q_{te}$; 
        \STATE\label{alg:pop}
                       \LET $Q_{te} := Q_{te} - \set{q}$;
	\STATE \label{alg:choose-O}
		\LET $O\subseteq \Delta$ be any (finite) 
			set of objects such that:\\
				\quad \myi $\card{O}=\max\set{\card{\vec x}\mid A(\vec x)\in\A}$;\\
				\quad \myii $O\cap \adom(\I(q))=\emptyset$;\\
				\quad \myiii $\card{O\cap \bigcup_{q\in Q}\adom(\I(q))}$ is maximal (subject to \myi and \myii).
	\FORALL{action types $A(\vec x)$ of $\D$}\label{alg:forall-types}
		\FORALL{\label{alg:poss}valuations $\vfo$
			such that $\vfo(\vec x) \in (\adom(\I(q))\cup O)^{\card{\vec x}}$ and\\
		\qquad\qquad$\I(q),v\models Poss(A(\vec x))$}
			\STATE\label{alg:ssa}
				\LET $\I'=\tup{\Delta,\cdot^{\I'}}$ be an interpretation such that:
						\myi $c^\I=c^\M$, for all constants in $C$;
						\myii $F^{\I'}=\set{\vec d \mid \I(q),v[\vec y/\vec d]\models \phi_F(A(\vec x),\vec y)}$, for 
						$\phi_F(a,\vec y)$ the (situation-suppressed) RHS of the SSA 
						of fluent $F$.
			\IF {(there exists $q'\in Q$ and an isomorphism 
				$h$ between 
				$\I'$ and  $\I(q')$  that is the identity on 
                            $\adom(\I(q))$)} \label{alg:if-add}
				\STATE ${\ra} := {\ra}\cup\set{q\ra q'}$; \label{alg:if-add-trans}
				\ELSE\label{alg:else-add}
					\STATE \label{alg:add}
						\LET
							$Q:=Q\uplus\set{q'}$, for $q'$ a fresh state;\\
							\quad $\I(q'):=\I'$;\\
							\quad ${\ra} :=  {\ra}\cup\set{q\ra q'}$;\\
                                                		\quad $Q_{te}:=Q_{te}\uplus\set{q'}$;
			\ENDIF
		\ENDFOR
	\ENDFOR\label{alg:end-forall-types}
      \ENDWHILE\label{alg:end-while}
      \RETURN $T_F=\tup{\Delta,Q,q_0,\I,\ra}$
\end{algorithmic}
\end{algorithm}
We construct $T_F$ using Procedure~\ref{alg:tf}.
The procedure takes as input an action theory $\D$ (with complete information on 
the initial situation) bounded by $b$ and a model $\M$ of $\D$ with infinite object domain $\Delta$,\footnote{In fact, given the object domain $\Delta$, the model $\M$ is fully determined by $\D$ modulo object renaming.}
and returns a finite-state TS $T_F$ bisimilar  to $T_{\M}$.
$T_F$ is built incrementally,
through iterative refinements of the set of states $Q$,
the interpretation function $\I$, and the transition relation $\ra$.
Initially, $Q$ contains only the initial state $q_0$ (line~\ref{alg:init2});
$\I(q_0)$ interprets constants and fluents in the same way as $\M$ at 
the initial situation (line~\ref{alg:init3});
and $\ra$ is empty (line~\ref{alg:init4}).
The set $Q_{te}$ contains the states of $T_F$ to be ``expanded''
(initially $q_0$ only, line~\ref{alg:init5}); this is done at each iteration of the \textbf{while} loop 
(lines~\ref{alg:while}--\ref{alg:end-while}), as explained next.

Firstly, a state $q$ is extracted from $Q_{te}$ (lines~\ref{alg:pick} and \ref{alg:pop}).
Then, a finite subset $O$ of objects from $\Delta$ is defined (line~\ref{alg:choose-O}).
%%$O$ contains 
%%as many distinct objects as the maximum number of parameters in actions,
%%and is disjoint from all the objects interpreting the
%%constants in $C$ (set $C^\M$) and those occurring in the active domain of 
%%$\I(q)$; moreover, the objects are chosen so as to maximize reuse of objects 
%%occurring (so far) in the labelings of the states of $Q$.
The values from $O$, together with those from $\adom(\I(q))$, 
are used, in combination with the action types, to generate actions executable 
on the interpretation $\I(q)$\footnote{Notice that since $\Poss(a,s)$ is 
uniform in $s$, the situation does not play any role in establishing whether,
for given $a$ and $s$, $\Poss(a,s)$ holds. 
In fact, only the interpretation of fluents (and constants)
at $s$ matters. Consequently, one can take such an interpretation and safely 
suppress the situation argument.} (lines~\ref{alg:forall-types},\ref{alg:poss}).
The particular choice of $O$ guarantees that the set of generated 
actions, while finite, is fully representative,
for the purpose of verification,
of all the (possibly infinitely many) actions executable on $\I(q)$ 
(see Theorem~\ref{thm:bisimulation-finite-induced}). Moreover, the 
objects are chosen so as to maximize \emph{reuse} of the objects occurring 
in the interpretation of the states already in $Q$.

The actual expansion step consists in computing, for each generated action,
the interpretation $\I'$ obtained by executing the action on (a situation with 
interpretation) $\I(q)$. This is done by computing,
on $\I(q)$, the answers 
to the right-hand side $\phi(a,\vec y)$ of the (situation-suppressed) successor state axiom
of each fluent $F$, with $a$ set to the current action  (line~\ref{alg:ssa}). 
Once $\I'$ has been computed, two cases are possible: either it is isomorphic to 
some interpretation $\I(q')$ labeling an existing state $q'\in Q$ (line~\ref{alg:if-add}), under 
some isomorphism that preserves $\I(q)$,
or it is not (line~\ref{alg:else-add}). In the former case, the transition relation
is simply updated with a transition from $q$ to $q'$ (line~\ref{alg:if-add-trans})
and no new state is generated. We stress that, in this case, the isomorphism is 
defined over the whole $\Delta$, not only over the active domains of
the interpretations.
In the latter case, a fresh state 
$q'$  with labeling $\I(q')$ is added to $Q$, and the transition relation is 
updated with $q\ra q'$ (lines~\ref{alg:add}). 
Further, $q'$ is also added to $Q_{te}$, so as to be expanded in future 
iterations.
The procedure iterates over the expansion step until the set $Q_{te}$ is empty,
i.e., unitl there are no more states to expand.

We observe that the choice of $q'$ at line~\ref{alg:if-add-trans}
guarantees the existence of an 
isomorphism $h'$ between $\I'$ and $\I(q')$ that is the identity
on $\adom(\I(q))$. That is, any object occurring in $\I'$ that comes 
from $\I(q)$ must be mapped into itself.
The purpose of this choice is to avoid adding a fresh state $q''$ (with interpretation $\I'$) 
to $Q$ but \emph{reuse} any state $q'$ already in $Q$, if
bisimilar to the
candidate $q''$. This is a key step for the procedure to construct
a transition system that is both finite and persistence-preserving bisimilar to $T_\M$.%%GGG: removed

We can now show that Procedure~\ref{alg:tf} terminates and 
returns a TS persistence-preserving bisimilar to $T_{\M}$.  This result is split into two
main results: Theorem~\ref{thm:termination}, which shows 
that the procedure terminates, returning a finite TS, 
and Theorem~\ref{thm:bisimulation-finite-induced}, which shows that the 
obtained TS is indeed persistence-preserving bisimilar to $T_{\M}$.

To prove termination, we first
derive a bound on the active domain of the interpretations 
labeling the states in $Q$.
\begin{lemma}\label{lem:qbound}
There exists a value $b'=\sum_{F\in\F}b\cdot a_F+\card{C}$ such that, at any iteration of Procedure~\ref{alg:tf}
and for any $q\in Q$,
$\card{\adom(\I(q))}\leq b'$, where $b$ is the value 
bounding $\D$, $a_F$ the arity of fluent $F$, and $C$ the set of constants explicitly mentioned in $\D$.
\end{lemma}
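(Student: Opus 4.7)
The plan is to prove the lemma by a simple induction on the order in which states are added to $Q$ during the execution of Procedure~\ref{alg:tf}, showing that for every such state $q$, the interpretation $\I(q)$ is bounded by $b$ in the sense that $\card{F^{\I(q)}} < b$ for every fluent $F \in \F$. Once this is established, the numerical bound $b'$ follows immediately from counting: the active domain of $\I(q)$ consists of objects appearing in fluent extensions or as interpretations of constants in $C$, so $\card{\adom(\I(q))} \leq \sum_{F \in \F} b \cdot a_F + \card{C}$, since each of the at most $b$ tuples in $F^{\I(q)}$ contributes at most $a_F$ distinct objects.

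For the base case, $\I(q_0) = \I_\M(S_0^\M)$ by line~\ref{alg:init3}, and since $\D$ is bounded by $b$ and $S_0^\M$ is executable, $\I_\M(S_0^\M)$ satisfies $\card{F^{\I(q_0)}} < b$ for every fluent $F$. For the inductive step, consider a fresh state $q'$ added at line~\ref{alg:add} as a successor of some $q$ via an action term $A(\vec{x})$ under valuation $\vfo$. The interpretation $\I(q')$ is computed at line~\ref{alg:ssa} by evaluating the situation-suppressed right-hand side of each successor state axiom on $\I(q)$. By the inductive hypothesis, $\I(q)$ coincides (up to isomorphism) with $\I_\M(s)$ for some executable situation $s$ in $\M$, and by Theorem~\ref{th:ssa-fo}, the result of this computation equals $\I_\M(do^\M(A^\M(\vec{o}),s))$, which is again the interpretation of an executable situation in $\M$ and hence bounded by $b$.

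The only subtlety is the case at line~\ref{alg:if-add-trans}, where no new state is created; there the inductive claim is preserved trivially because $\I(q')$ is unchanged. Similarly, the reuse of existing states via isomorphism does not affect the bound since isomorphic interpretations have active domains of the same cardinality. I expect no real obstacle here; the main thing to make explicit is the appeal to Theorem~\ref{th:ssa-fo} to guarantee that each $\I(q)$ produced by the procedure is in fact the interpretation of some executable situation in the (unique) model $\M$, so that boundedness of $\D$ transfers directly to boundedness of $\I(q)$.
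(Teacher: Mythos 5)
Your proof is correct and follows essentially the same route as the paper's: establish by induction that every state's labeling is (up to the procedure's bookkeeping) the interpretation $\I_\M(s)$ of some executable situation $s$, so that boundedness of $\D$ transfers to each $\I(q)$, and then obtain $b'$ by the counting argument $\sum_{F\in\F} b\cdot a_F + \card{C}$. The only cosmetic difference is that you invoke Theorem~\ref{th:ssa-fo} as a packaged step where the paper re-derives the same fact from Theorem~\ref{th:mod-int} and the successor state axioms; this changes nothing of substance.
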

\begin{proof}
We first show that: $(\dagger)$ for every $q\in Q$,
there exists a situation $s$ executable in $\D$ 
such that $\I(q)=\I_\M(s)$.
This intuitively means that, modulo situation suppression, 
every state of $T_F$ is labelled
by an interpretation that matches that of $\M$ on
constants and fluents at some executable situation $s$.

The proof is by induction on $Q$. 
For $q_0$, the thesis follows 
by the definition of $\I(q_0)$ at line~\ref{alg:init3}, as $S_0^\M$ is executable.
For the induction step, consider $q\in Q$ and assume, by the induction hypothesis, 
that $\I(q)$ is as above, for an executable situation $s$. 
%% \todo{The notation $\M,\vfo,\vfo_\sigma\models\phi$ is strange.} %% FIXED
Then, for any valuation (of object variables) $\vfo$, we have that
$\I(q),\vfo\models \Poss(A(\vec x))$ if and only if 
$\I_\M(s),\vfo\models \Poss(A(\vec x))$, that is,
by Theorem~\ref{th:mod-int},
$\M,\vfo'\models\Poss(A(\vec x),\sigma)$,
for $\sigma$ a situation variable and $\vfo'$ a situation calculus 
variable assignment analogous to $\vfo$ on all individual variables 
and such that $\vfo'(\sigma)=s$.
Thus, by line~\ref{alg:poss}, $A(\vec x)$ is executable at $s$ (with
respect to $\M$ and $\vfo$). 
Similarly, for any fluent $F$ and valuation $\vfo$, we have that 
$\I(q),v \models\phi_F(A(\vec x),\vec y)$ iff 
$\M,\vfo' \models\phi_F(A(\vec x),\vec y,\sigma)$, that is, since 
$F(\vec y,do(a,\sigma))\equiv\phi_F(a,\vec y,\sigma)$ (by definition of 
successor-state axiom),
$\I(q),\vfo \models\phi_F(A(\vec x),\vec y)$ iff 
$\M,\vfo' \models F(\vec y,do(A(\vec x),\sigma))$.
But then, since by line~\ref{alg:ssa},
$F^{\I'}=\set{\vec d\in\vec \Delta\mid \I(q),\vfo[\vec y/\vec d] \models\phi_F(A(\vec x),\vec y)}$,
it follows that 
$\I',\vfo\models F(\vec y)$ iff $\M,\vfo'[\vec y/\vec d] \models F(\vec y,do(A(\vec x),\sigma))$. 
Thus, $F^{\I'}=\set{\vec d\in\Delta\mid \M,\vfo'[\vec y/\vec d] \models F(x,do(A(\vec x),\sigma))}$.
Therefore, when a state $q'$ is added to $Q$ (line~\ref{alg:add}), 
its labeling $\I(q')=\I'$ is such  that 
$\I(q')=\I_\M(do^\M(A^\M(v(\vec x)),s))$.
This proves $(\dagger)$. 

Observe that $(\dagger)$ and the boundedness of $\D$ imply, together, 
that $\card{\adom(\I(q))}$ is bounded, for any $q\in Q$. 
We denote by $b'$ the bound on 
$\card{\adom^\M(s)}$, for any executable situation $s$ of $\D$, 
and on $\card{\adom(\I(q))}$, for $q\in Q$. 
Notice that, in general, $b'$ is different than $b$, in that 
the former bounds the number of \emph{objects} occurring in the interpretations,
while the latter bounds the number of \emph{tuples} in the interpretation of fluents.
%%
%%It can be easily shown that, denoted by $a_F$ the arity of fluent $F\in\F$,
%%a tight value for $b'$ is $\sum_{F\in\F}b\cdot a_F+\card{C}$
%%(this bound is derived in the proof of Theorem~\ref{th:inc-inf}).
%%
To obtain $b'$, observe that if the theory is bounded by $b$, then, for any model,   
the extension of each fluent $F\in\F$ at any executable situation contains at most 
$b$ distinct tuples. Thus, the extension of the generic fluent $F$ cannot contain,
at any executable situation, more than $a_F\cdot b$ distinct objects, where 
$a_F$ is the arity of $F$
(the maximum number of tuples, each with distinct objects, distinct also from all others
in the extension). As a result, the extensions cannot contain, overall, more than 
$\sum_{F\in\F} a_F\cdot b$ distinct objects.
Hence, considering that $\I(q)$ interprets both the fluents in $\F$ and the constants in $C$,
it follows that $\card{\adom(\I(q))}\leq\sum_{F\in\F} a_F\cdot b+\card{C}\doteq b'$.
\end{proof}
Then, we use the obtained bound to show that also 
the set of \emph{all} objects occurring in the labelings 
of some state in $Q$, denoted $\adom(Q)$,
is bounded.
\begin{lemma}\label{lem:AdomQbound}
Let $\adom(Q)=\bigcup_{q\in Q}\adom(\I(q))$.
At any iteration of Procedure~\ref{alg:tf},
we have that  $\card{\adom(Q)}\leq 2b'+N$, for 
$b'$ the bound on $\card{\adom(\I(q))}$ defined as in Lemma~\ref{lem:qbound},
and $N$ the maximum number of parameters of the action types in $\D$.
\end{lemma}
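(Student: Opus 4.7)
The plan is to prove the bound by induction on the number of iterations of the \textbf{while} loop, leveraging the maximal-reuse clause \myiii at line~\ref{alg:choose-O}. For the base case, after initialization we have $Q=\set{q_0}$ with $\I(q_0)=\I_\M(S_0^\M)$, so Lemma~\ref{lem:qbound} gives $\card{\adom(Q)} = \card{\adom(\I(q_0))} \leq b' \leq 2b'+N$.

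For the inductive step, I first establish a local containment claim: every new state $q'$ created while expanding $q$ satisfies $\adom(\I(q')) \subseteq \adom(\I(q)) \cup O$. This is where Theorem~\ref{th:fin-ans} does the real work and is the one conceptually non-trivial step. At line~\ref{alg:ssa}, $\I'=\I(q')$ is built by $F^{\I'}=\set{\vec d\mid \I(q),v[\vec y/\vec d]\models \phi_F(A(\vec x),\vec y)}$; since $\D$ is bounded, each $F^{\I'}$ is finite, so Theorem~\ref{th:fin-ans} applies fluent-by-fluent and forces every tuple to use only objects from $\adom(\I(q))\cup v(\vec x)$. By line~\ref{alg:poss} we have $v(\vec x)\subseteq \adom(\I(q))\cup O$, and since constants are interpreted identically across all states they already belong to $\adom(\I(q))$. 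Hence during one iteration the only objects that can enter $\adom(Q)$ come from $O\setminus \adom(Q)_{\mathrm{old}}$.

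The bound then follows by a two-case analysis on the reservoir $R_q\doteq \adom(Q)_{\mathrm{old}}\setminus \adom(\I(q))$, whose reuse is maximized by clause \myiii. If $\card{R_q}\geq N$, one may pick $O\subseteq \adom(Q)_{\mathrm{old}}$, so $\adom(Q)$ does not grow at all and the invariant is trivially preserved. If instead $\card{R_q}<N$, maximality forces $\card{O\cap \adom(Q)_{\mathrm{old}}}=\card{R_q}$, so exactly $N-\card{R_q}$ fresh objects enter, yielding $\card{\adom(Q)_{\mathrm{new}}} = \card{\adom(\I(q))} + \card{R_q} + (N-\card{R_q}) = \card{\adom(\I(q))} + N \leq b' + N \leq 2b'+N$, using Lemma~\ref{lem:qbound} once more. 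The hard part is really just the containment; once it is in hand, the arithmetic is routine.
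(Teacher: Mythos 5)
Your proof is correct and follows essentially the same route as the paper's: induction over the construction of $Q$, the containment $\adom(\I')\subseteq \adom(\I(q))\cup v(\vec x)\subseteq \adom(\I(q))\cup O$ obtained from Theorem~\ref{th:fin-ans}, and a case split on whether the reusable reservoir $\adom(Q)\setminus\adom(\I(q))$ has at least $N$ elements, with clause \myiii forcing maximal reuse. Your accounting in the second case is in fact slightly sharper (it yields $b'+N$ rather than the paper's $2b'+N$), but this is a refinement of the same argument, not a different one.
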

\begin{proof}
By induction on the size of $Q$. 
For $Q=\set{q_0}$,  we have that 
$\adom(Q)=\adom(\I(q_0))$, thus the thesis 
follows as, by Lemma~\ref{lem:qbound}, 
$\card{\adom(\I(q_0))}<b'$.
For $Q=\set{q_0,\ldots,q_n}$, assume, by induction hypothesis, that
$\card{\adom(Q)}< 2b'+N$. 
Since, by Lemma~\ref{lem:qbound}, the state $q\in Q_{te}\subseteq Q$ 
picked at line~\ref{alg:pick}
is such that $\card{\adom(\I(q))}\leq b'$ and $\Delta$ is infinite, then, 
by Theorem~\ref{th:fin-ans} 
(after applying Theorem~\ref{prop:sitcalc-fo-act}, if action terms have to be suppressed in $\phi_F$), 
$\I'$ (line~\ref{alg:ssa}) is such that 
%%\todo{Whiy is this the case?  Explain} %% By thm.10, see above.
$\adom(\I')\subseteq \adom(\I(q))\cup v(\vec x)$.\footnote{To simplify the 
notation, we use $v(\vec x)$ for the set
$\set{v(x_1),\ldots,v(x_n)}$.}
%%
%%Thus, since $v(\vec x)\subseteq O$, if $\adom(\I')$ contains some object 
%%not already in $\adom(Q)$, these can come only from $O\cup C^\M$. 
%%Thus, whenever a fresh state $q'$ is  added to $Q$ at line~\ref{alg:add}
%%(the only case of interest), because $\I(q')=\I'$, the fresh objects contained in 
%%$\adom(Q\cup\set{q'})$, wrt $\adom(Q)$,
%%are the same as those in $\I'$ (wrt $\adom(Q)$).
%%, that is, the fresh objects in $O$ 
%%(with respect to $\adom(Q)$).
%%
Now, observe that $\vfo(\vec x)$ may take values from $O$ and
that the constraints on the choice of $O$ 
(line~\ref{alg:choose-O}) 
require that the reuse of objects from $\adom(Q)$ be maximized. 
That is, including fresh objects (with respect to $\adom(Q)$) 
in $O$ is allowed (in fact, required)
\emph{only} if needed to guarantee that 
$\card{O}=\card{\vec x}$ (while $O\cap\adom(\I(q))=\emptyset$).
Thus, two cases are possible: either
$\card{\adom(Q)\setminus \adom(\I(q))}<\card{\vec x}$
(in which case fresh objects must be added to $O$), or not.
In the first case, 
because $\card{\vec x}\leq N$ and $\adom(\I(q))\subseteq \adom(Q)$,
it follows that $\card{\adom(Q)}-\card{\adom(\I(q))}<N$.
Thus, since 
$\card{\adom(\I(q))}\leq b'$, we have that 
$\card{\adom(Q)}<N+b'$. From this, 
observing that $\card{\adom(\I(q'))}\leq b'$, we obtain
$\card{\adom(Q\cup\set{q'})}\leq 2b'+N$.
In the second case, $O$ contains no fresh objects, thus 
$\card{\adom(Q\cup\set{q'})}=\card{\adom(Q)}\leq 2b'+N$.
\end{proof}
Exploiting this result, we can prove termination.
\begin{theorem}\label{thm:termination}
Procedure~\ref{alg:tf} terminates and returns a finite-state transition system $T_F$.
\end{theorem}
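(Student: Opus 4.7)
The plan is to combine the two preceding lemmas with a pigeonhole argument on the set of possible labellings.

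First, I would show that the set $Q$ remains finite throughout the execution. By Lemma~\ref{lem:AdomQbound}, at every iteration $\adom(Q) = \bigcup_{q \in Q} \adom(\I(q))$ is contained in a set of size at most $2b' + N$, where $b'$ is the bound from Lemma~\ref{lem:qbound}. Moreover, by Lemma~\ref{lem:qbound} itself, each $\I(q)$ is a labelling whose constants are interpreted identically (according to $\M$) and whose fluent extensions are subsets of $(\adom(Q) \cup C^{\I})^{a_F}$ with at most $b$ tuples per fluent. Since $\adom(Q) \cup C^{\I}$ has bounded cardinality and $\F$ is finite, there are only finitely many such interpretations over this fixed finite pool of objects. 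In particular, the number of states that can be added to $Q$ is a priori bounded by this quantity.

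Next, I would argue that the \textbf{while} loop must terminate. At each iteration, exactly one state $q$ is removed from $Q_{te}$ (line~\ref{alg:pop}), and the only way new states are added to $Q_{te}$ is through line~\ref{alg:add}, which is guarded by the failure of the reuse test at line~\ref{alg:if-add}. Whenever a new state $q'$ is introduced, $|Q|$ strictly increases. Since $|Q|$ is bounded by the argument above, only finitely many states can ever be added, hence only finitely many new elements can ever be pushed into $Q_{te}$. Combined with the fact that each iteration removes one element from $Q_{te}$, this forces $Q_{te}$ to become empty after finitely many iterations.

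Finally, I would verify that each iteration itself is finite: the set $O$ chosen at line~\ref{alg:choose-O} is finite, so $\adom(\I(q)) \cup O$ is finite; consequently the set of valuations $\vfo$ considered at line~\ref{alg:poss} is finite (one per tuple in $(\adom(\I(q)) \cup O)^{\card{\vec x}}$); the executability check $\I(q), \vfo \models Poss(A(\vec x))$ is decidable by Theorem~\ref{th:fin-sat}; the successor interpretation $\I'$ at line~\ref{alg:ssa} is effectively computable via Theorem~\ref{th:ssa-fo}; and the isomorphism check at line~\ref{alg:if-add} is a finite test since it ranges over bijections between active domains of bounded size. Putting everything together, the procedure terminates and returns a transition system $T_F$ with finitely many states, as required.

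The main obstacle is really encapsulated in Lemma~\ref{lem:AdomQbound}: one needs the choice of fresh objects $O$ at line~\ref{alg:choose-O} to be controlled tightly enough that the global pool $\adom(Q)$ cannot grow unboundedly across iterations, since otherwise the finiteness of per-state labellings would not translate into finiteness of $Q$.
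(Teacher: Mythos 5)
Your proof is correct and follows essentially the same route as the paper's: both rest on Lemma~\ref{lem:AdomQbound} to keep the global object pool $\adom(Q)$ bounded (and, being inflationary, eventually stable), conclude that only finitely many interpretations --- hence finitely many states, since the reuse test at line~\ref{alg:if-add} prevents two states from carrying the same labelling --- can ever be created, and invoke Theorems~\ref{th:fin-sat} and~\ref{th:fin-comp}/\ref{th:ssa-fo} for the effectiveness of each atomic step. The only cosmetic difference is that you bound $\card{Q}$ directly by a pigeonhole on labellings, whereas the paper phrases it as a two-stage stabilization (first $\adom(Q)$ stops growing, then fresh interpretations run out); the one point worth spelling out is the injectivity of $q \mapsto \I(q)$ noted above.
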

\begin{proof}
Firstly, observe that, as a consequence of Lemma~\ref{lem:AdomQbound}:
\myi checking whether $\I(q),v\models \Poss(A(\vec x))$ 
(line~\ref{alg:poss}) is decidable, and \myii $F^{\I'}$ (line~\ref{alg:ssa}) is computable. 
These, indeed, are implied by the fact that $\card{\adom(\I(q))}$ is bounded, thus
finite, and by Theorems~\ref{th:fin-sat} and~\ref{th:fin-comp},
respectively. To apply these theorems, however,
one needs to suppress action terms first, if present, in 
formulas $\phi_F(A(\vec x),\vec y)$ and 
$\phi_A(\vec x)$.
To this end, Theorem~\ref{prop:sitcalc-fo-act} can be used.
Notice also that computability of $F^{\I(q_0)}$ (line~\ref{alg:init3})
is a direct consequence of the fact that 
$\D$ has complete information and is bounded, therefore the extension of all 
fluents at $S_0$ is finite. Items \myi and \myii above guarantee that all the atomic steps of 
Procedure~\ref{alg:tf} can be completed in finite time.

Next, we prove that eventually $Q_{te}=\emptyset$.
Observe that, since
$\A$ (i.e., the set of action types of $\D$), 
$Q$, $O$, $\adom^\M(S_0)$, and $\adom(\I(q))$ are 
finite, it follows that, at every iteration of the while-loop 
(lines~\ref{alg:while}--\ref{alg:end-while}),
the nested loops (lines~\ref{alg:forall-types}--\ref{alg:end-forall-types}) terminate;
thus, proving that $Q_{te}$ becomes empty in a finite number of steps
is sufficient to prove that only a finite number of iterations are executed and, hence,
the procedure terminates. Obviously, this also implies that  the returned $Q$, 
thus $T_F$, is finite.

To see that eventually $Q_{te}=\emptyset$, 
notice that $Q$ is \emph{inflationary}, i.e., states, once added,
are never removed. Consequently, objects can be 
added to $\adom(Q)$ (when a fresh $q'$ is added) but not removed. 
This, together with the fact that, by Lemma~\ref{lem:AdomQbound}, 
$\card{\adom(Q)}$ is bounded, 
implies that, from some iteration $i$ on,
$\adom(Q)$ remains unchanged. 
Let $AQ_i$ be $\adom(Q)$ at iteration $i$ (and at subsequent steps).
Obviously, after that point, if a fresh state $q'$ is added, it must be such that 
$\adom(\I(q'))\subseteq AQ_i$.
Notice that, even though $\adom(Q)$ cannot change, this 
is not the case for $Q$. Indeed, new states $q'$ could still be added, 
as long as $\I(q')=\I'$ contains only objects from $AQ_i$.
However, since $\card{\adom(Q)}$, thus $\card{AQ_i}$, is bounded,
only finitely many  interpretations $\I'$ can be built using values from $AQ_i$.
Consequently, if new states keep being introduced after $i$, it follows that, from 
some step $i'$ on, the interpretation $\I'$ generated at 
line~\ref{alg:ssa} matches the interpretation $\I(q')$ of some $q'$ already 
in $Q$.
Hence, from $i'$ on, the condition at line~\ref{alg:if-add} is always satisfied
(with $h$ the identity function), 
and no fresh state $q'$ can be added to $Q$ any more.
Therefore, no new state is added to $Q_{te}$ (line~\ref{alg:add}), which
becomes eventually empty, as at every iteration one state is extracted from it
(line~\ref{alg:pick}). This completes the proof.
\end{proof}

Finally, we show that the returned $T_F$ retains all the information 
needed to check whether $\M\models\Phi$. That is, 
by Theorem~\ref{thm:mymu-bisimulation}, we show that  $T_F$ is 
persistence-preserving bisimilar to $T_\M$.
\begin{theorem}\label{thm:bisimulation-finite-induced}
The TS $T_F$ computed by Procedure~\ref{alg:tf}, on a basic action theory $\D$ 
(with complete information) bounded by $b$ and a model $\M$ for $\D$, is 
persistence-preserving bisimilar to the TS $T_\M$ induced by $\M$.
\end{theorem}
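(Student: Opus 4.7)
My plan is to exhibit an explicit persistence-preserving bisimulation $B \subseteq \S \times H \times Q$ between $T_\M$ and $T_F$ and verify the three conditions of the definition. The natural candidate is
\[
B = \{\tup{s,h,q} \mid \arestrict{\I_\M}(s) \sim_h \arestrict{\I}(q),\ s \text{ reachable in } T_\M,\ q \in Q\},
\]
possibly restricted to pairs for which the isomorphism ``tracks'' the construction history. The base case is immediate: since line~\ref{alg:init3} sets $\I(q_0) = \I_\M(S_0^\M)$, the identity on $\adom(\I_\M(S_0^\M))$ witnesses $\tup{S_0^\M, \mathit{id}, q_0} \in B$, and the isomorphism condition on constants is preserved throughout the construction.

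For the \emph{forth} direction, suppose $\tup{s,h,q} \in B$ and $s \to_\M do^\M(a,s)$ with $a = A^\M(\vec o)$. I would first use Theorem~\ref{th:fin-ans} to conclude $\adom(\I_\M(do^\M(a,s))) \subseteq \adom(\I_\M(s)) \cup \vec o$, then partition $\vec o$ into parameters already in $\adom(\I_\M(s))$ (which map via $h$ into $\adom(\I(q))$) and ``fresh'' parameters. The crucial observation is that the choice of $O$ at line~\ref{alg:choose-O} ensures $|O| \geq |\vec x|$ and $O \cap \adom(\I(q)) = \emptyset$, so one can extend $h$ to map the fresh parameters into $O$, obtaining a valuation $\vfo$ of $\vec x$ in $\adom(\I(q)) \cup O$. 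Since $\arestrict{\I_\M}(s) \sim_h \arestrict{\I}(q)$ and $\Poss(A(\vec x))$ is a uniform FO formula, Theorem~\ref{thm:dom-ind} together with invariance of FO under isomorphism yields $\I(q),\vfo \models Poss(A(\vec x))$, so the inner loop considers this valuation at line~\ref{alg:poss}. Using Theorem~\ref{th:ssa-fo} and again the isomorphism invariance of the (action-suppressed) RHS of successor-state axioms, the resulting $\I'$ computed at line~\ref{alg:ssa} is isomorphic to $\I_\M(do^\M(a,s))$ under the extended bijection. Whether line~\ref{alg:if-add} reuses an existing state $q'$ (the isomorphism between $\I'$ and $\I(q')$ being the identity on $\adom(\I(q))$, by design) or allocates a fresh one (line~\ref{alg:add}), we obtain a transition $q \to q'$ in $T_F$ and a bijection $h'$ extending $h$ on $\adom(\I_\M(s)) \cup \adom(\I_\M(do^\M(a,s)))$, establishing $\tup{do^\M(a,s), \restrict{h'}{\adom(\I_\M(do^\M(a,s)))}, q'} \in B$.

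For the \emph{back} direction, given $q \to q'$ in $T_F$ generated by some $A(\vec x)$ with valuation $\vfo$, I would use the fact established inside the proof of Lemma~\ref{lem:qbound}, namely that every $q \in Q$ satisfies $\I(q) = \I_\M(s_q)$ for some executable $s_q$. Combined with the existing isomorphism $h$ between $\arestrict{\I_\M}(s)$ and $\arestrict{\I}(q)$, one transfers the valuation $\vfo$ back to $\M$ via $h^{-1}$ (sending any value in $O \cap \vfo(\vec x)$ to an arbitrary element of $\Delta \setminus \adom(\I_\M(s))$, which is nonempty because $\Delta$ is infinite), obtaining an action executable at $s$ and producing an isomorphic successor, from which one constructs the required $h'$.

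The main technical obstacle I expect is the ``reuse'' branch at line~\ref{alg:if-add}: the bijection $h'$ that the bisimulation condition demands must simultaneously extend $h$ on $\adom(\I(q))$ and induce an isomorphism between the new interpretation and the pre-existing labeling $\I(q')$. This is exactly why the procedure insists on an isomorphism that is the identity on $\adom(\I(q))$, and verifying that this identity condition can always be achieved in tandem with the inductive definition of $B$ is the delicate part---one must carefully choose how fresh objects are mapped so that compositions of bijections across the inductive construction remain coherent. Modulo this bookkeeping, the three clauses of the definition of persistence-preserving bisimulation follow from the isomorphism invariance properties assembled in Sections~\ref{sec:action-suppression} and~\ref{sec:situation-suppression}.
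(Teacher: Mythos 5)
Your proposal is correct and follows essentially the same route as the paper: the same candidate relation $B$ defined by isomorphism of active-domain restrictions (the paper needs no further ``construction-history'' restriction), the same appeal to the fact that every state of $T_F$ is labeled by $\I_\M(s)$ for some executable $s$, and the same use of FO invariance under isomorphism for $\Poss$ and the successor-state axioms. The ``bookkeeping'' you flag in the reuse branch is discharged in the paper exactly as you anticipate: the isomorphism $g$ between $\I'$ and $\I(q')$ guaranteed at line~\ref{alg:if-add} is the identity on $\adom(\I(q))$, so composing it with the extension $\hat h$ of $h$ to all of $\Delta$ yields the required $h'$ that both extends $h$ and witnesses the isomorphism with the pre-existing labeling.
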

\begin{proof}
Let $T_F=\tup{\Delta,Q,q_0,\I_F,\ra_F}$ and 
$T_\M=\tup{\Delta,R,r_0,\I_\M,\ra_\M}$,
and define the relation $B\subseteq Q\times H\times R$ 
such that $\tup{q,h,r}\in B$ if and only if 
$\arestrict{\I}_F(q)\sim_h \arestrict{\I}_\M(r)$ (for any $h$).
Notice that, since $T_F$ and $T_\M$ have the same object domain $\Delta$,
$h$ can always be extended to a standard isomorphism $\hat h$ 
between $\I_F(q)$ and $\I_\M(r)$: namely, one can take any bijection 
$\hat h:\Delta\mapsto \Delta$ such that $\restrict{\hat h}{\adom(\I_F(q))}=h$.

%%
%%Notice that $\I_F(u)$ and $\I_\M(v)$ are essentially 
%%the same interpretation up to the object renaming defined by $h$, 
%%that is, up to renaming, $\M$ interprets fluents at situations $s$ and $t$ 
%%in the same way. 
We show that $B$ is a persistence-preserving bisimulation between $T_F$ and 
$T_\M$. (page~\pageref{def:bisim}).
Consider a tuple $\tup{q,h,r}\in B$.
Requirement~\ref{bisim:req1} of the definition is trivially satisfied by the 
definition of $B$.
As to requirement~\ref{bisim:req2}, let $q'\in Q$ be such that $q\ra_F q'$.
%% and consider a bijection $h':\Delta\ra \Delta$ 
%%extending $\restrict{h}{\adom(\I_F(u))}$. It is immediate to see 
%%that such a function exists and that it extends also $\restrict{h}{\adom(\I_F(u))\cap\adom(\I_F(u'))}$. 
%%Moreover, $h'$ is an isomorphism between $\I_F(u)$ and $\I_\M(v)$.
%%Let $\I'=h'(\I_F(u'))$, that is, $\I'$ is the interpretation obtained from $\I_F(u')$ by 
%%renaming its objects according to $h'$. 
%%Obviously, $h'$ is also an isomorphism between $\I'$ and 
%%$\I_F(u')$.
%%$\I_\M(v')=h'(\I_F(u'))$, 
%%for some isomorphism $h'$ that extends
%%$\restrict{h}{\adom(\I(u))\cap \adom(\I(u'))}$. 
%%This implies that $\I_\M(v')$ 
%%is isomorphic to $\I_F(u')$ and thus, 
%%by the definition of $\B$, that $\tup{u',h',v'}\in\B$.
%%
As shown in the proof of 
Theorem~\ref{thm:termination}, there exists an executable 
situation $s$ such that $\I_F(q)=\I_\M(s)$. 
Moreover, by the definition of $T_\M$, 
$r$ is a situation such that $\I_\M(r)$ matches the interpretation given by 
$\M$ to fluents at $r$.
Because $q\ra_F q'$, by the construction of $T_F$ in 
Procedure~\ref{alg:tf} (line~\ref{alg:poss}), 
we have that, for some valuation $\vfo$ and action type $A$,
$\I_F(q),\vfo\models \Poss(A(\vec x))$,
that is, by the existence of $s$ as above, 
$\M,\vfo\models\Poss(A(\vec x),s)$. 
%%\todo{Explain how $h$ is extended.}  %% DONE: at the beginning of proof
Then, by extending $h$ to an isomorphism $\hat h$ between 
$\I_F(q)$ and $\I_\M(r)$, as discussed above,
we can see that $\I_\M(r),\vfo'\models\Poss(A(\vec x),r)$, for $\vfo'=\hat h\circ\vfo$, 
which implies that $\M,\vfo'\models\Poss(A(\vec x),r)$.
Therefore, by the definition of $T_\M$, 
for $r'=do^{\M}(A^\M(\hat h(\vfo(\vec x)),r)\in R$, we have that $r\ra_\M r'$.
Thus requirement~\ref{bisim:req2a} is fulfilled.

Next, we show the existence of an isomorphism $\hat h'$
between $\I_F(q')$ and $\I_\M(r')$ that extends
$h$. 
Once proven, this implies the existence of a bijection 
$h':\adom(q)\cup\adom(q')\mapsto\adom(r)\cup\adom(r')$ 
such that 
$\restrict{h'}{\adom(\I_F(q))}=h$
and 
$\arestrict{\I}_F(q')\sim_{\restrict{h'}{\adom(\I_F(q'))}}\arestrict{\I}_\M(r')$.
Indeed, it is sufficient to take $h' = \restrict{\hat h'}{\adom(q)\cup\adom(q')}$.
Thus, the existence of $\hat h'$ implies requirement~\ref{bisim:req2b}.

To prove that such an $\hat h'$ exists, we distinguish two cases:
\myi when the transition $q\ra_F q'$ is added 
at line~\ref{alg:add} (i.e., $q'$ is a fresh state), and \myii when it is 
added at line~\ref{alg:if-add-trans} (i.e., $q'$ is already in $Q$).
For case \myi, observe that $\I_\M(r')$ can be obtained
by applying the right-hand side of the successor-state axiom
of each fluent $F$ to $\I_\M(r)$ (see Theorem~\ref{th:ssa-fo}), 
which is also the way to obtain $\I_F(q')$ from $\I_F(q)$, according to 
Procedure~\ref{alg:tf}. 
Then, since $\hat h$ is an isomorphism 
between $\I_F(q)$ and $\I_\M(r)$, we have that $\I_\M(r)=\hat h(\I_F(q))$,
where $\hat h(\I_F(q))$ denotes the interpretation obtained from $\I_F(q)$ 
by renaming its  objects according to $\hat h$.
Because $\vfo'=\hat h\circ \vfo$, it can be checked that 
$\I_\M(r')=\hat h(\I_F(q'))$, thus $\hat h'=\hat h$ is an isomorphism between
$\I_F(q')$ and $\I_\M(r')$, which obviously extends $h$.
For case \myii, let $\I'$ be the interpretation obtained by applying 
the successor-state axioms to $\I_F(q)$. By the discussion above, we have
that $\I_\M(r')=\hat h(\I')$, while, in general, $\I'\neq \I_F(q')$. However, 
the condition at line~\ref{alg:if-add} guarantees the existence of an isomorphism
$g$ such that $\I'=g(\I_F(q'))$, that is the identity on $\adom(\I_F(q))$.
%%Thus, we have that $\I'=g(\I_F(q'))$.
%%
Now, consider $\hat h'=\hat h\circ g$. Being a composition of isomorphisms,
$\hat h'$ is an isomorphism itself, in particular such that $\I_\M(r')=\hat h'(\I_F(q'))$.
Moreover, $\hat h'$ extends $\restrict{h}{\adom(\I_F(q))}$. This is a 
straightforward consequence of the
facts that $\hat h$ extends $h$ and $g$ is the identity on $\adom(\I_F(q))$, 
which imply that $\hat h'$ matches $h$ on $\adom(\I_F(q))$. 
Thus, requirement~\ref{bisim:req2} is fulfilled.
The proof for requirement~\ref{bisim:req3} follows the same 
argument, with $h$ replaced by its inverse $h^{-1}$.
%%
%%
%%, observe that 
%%if $r\ra_\M r'$ for some $r'\in R$, by the definition of $T_\M$, 
%%there must exist some action $A(\vec d)$ such that $v'=do^\M(A(\vec d),v)$.
%%Thus, by an argument analogous to the one above, but using the inverse
%%of $h$, i.e., $h^{-1}$ (which is still an isomorphism between $\I_\M(v)$
%%and $\I_F(u)$), action $A(h^{-1}(\vec d))$ ia an action executable in 
%%some situation $s$ whose interpretation of fluents matches $\I_F(u)$.
%%This, in turn, implies the existence of $u'$ such that $u\ra_Fu'$. 
%%The existence of the isomorphism $h'$ extending 
%%$h$ on $\adom(\I_F(u)\cap\I_F(u'))$ follows the same argument as above.
%%%%

Since $B$ is a persistence-preserving bisimulation,
the fact that $\tup{q_0,h_0,r_0}\in B$, for $h_0$ the identity, completes the proof.
\end{proof}

Next we prove that checking whether 
$T_F$ satisfies a $\mymu$ formula, is decidable.
\begin{theorem}\label{th:mymu-dec}
	Given a transition system $T=\tup{\Delta,Q,q_0,\I,\ra}$,
	if $Q$ is finite and, for every $q\in Q$, $\adom(\I(q))$ is 
	finite, then for every $\mymu$ formula $\Phi$, 
	checking whether $T\models\Phi$ is decidable.
\end{theorem}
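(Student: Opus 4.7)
The proof proceeds by structural induction on $\Phi$, showing that $\MODAT{\Phi}$ is effectively computable under any relevant valuation $(\vfo,\vso)$, and that there are only finitely many such relevant valuations to consider. The key enabling facts are that both $Q$ and the overall active domain $\adom(Q) = \bigcup_{q\in Q} \adom(\I(q))$ are finite. Consequently, by Theorem~\ref{thm:dom-ind}, the evaluation of any open FO subformula at a state $q$ depends only on which assigned values fall in $\adom(\I(q))$ (or coincide with interpreted constants) and on how distinct ``fresh'' values are identified among themselves. Up to this finite equivalence, only finitely many valuations $\vfo$ of the free individual variables ever need to be considered, and $\vso$ ranges over maps into the finite lattice $2^Q$.

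The base case is an open situation-suppressed FO formula $\varphi$: for each $q\in Q$ and each relevant $\vfo$, checking whether $\I(q),\vfo\models\varphi$ is decidable by Theorem~\ref{th:fin-sat}, giving $\MODAT{\varphi}$ explicitly as a subset of $Q$. Boolean connectives $\lnot$ and $\land$ reduce to finite set operations on subsets of $Q$. For $\exists x.\,\inadom(x)\land\Phi$, at each state $q$ we iterate over the finitely many choices $d\in\adom(\I(q))$, using the induction hypothesis to compute $\MODATX{\Phi}{[x/d]}$ and taking the union. For $\inadom(\vec{x})\land\DIAM{\Phi}$ and $\inadom(\vec{x})\land\BOX{\Phi}$, at each $q$ we verify $\vfo(\vec{x})\subseteq\adom(\I(q))$ and then inspect the finitely many successors $q'$ of $q$ for membership in $\MODAT{\Phi}$, again supplied by the induction hypothesis. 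For a predicate variable $Z$, $\MODAT{Z}=\vso(\vfo,Z)$ is read off directly.

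For least fixpoints $\mu Z.\Phi$, we proceed by Knaster-Tarski iteration over the finite complete lattice $(2^Q,\subseteq)$: starting with $\E_0 = \emptyset$ and computing $\E_{k+1} = \MODATZ{\Phi}{[Z/\E_k]}$ using the induction hypothesis, we reach a fixpoint in at most $|Q|$ steps. The free individual variables of $\Phi$ act as parameters; we compute the fixpoint once for each relevant $\vfo$, of which there are finitely many. Greatest fixpoints $\nu Z.\Phi$ are handled dually, starting from $Q$. Since a closed formula $\Phi$ has its value $\MODAT{\Phi}$ independent of $(\vfo,\vso)$, checking $T\models\Phi$ amounts to testing $q_0\in\MODAT{\Phi}$, which the above procedure decides.

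The main obstacle is subtle but manageable: individual variable valuations $\vfo$ nominally range over the infinite object domain $\Delta$, so infinitely many assignments to free individual variables exist. The remedy is to show that these collapse into finitely many equivalence classes, determined by which values of $\vfo$ coincide with elements of $\adom(Q)$ (or with each other). By domain-independence (Theorem~\ref{thm:dom-ind}), valuations in the same class yield identical extensions of every FO subformula at every state, and therefore the same extensions of every $\mymu$ subformula, by a straightforward induction carried out in parallel with the one above. Hence the fixpoint iterations are performed over only finitely many distinct parametrizations, guaranteeing termination and effective computability of $\MODAT{\Phi}$, and thus decidability of $T\models\Phi$.
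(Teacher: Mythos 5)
Your proof is correct and follows essentially the same route as the paper's: a recursive, effective computation of $\MODAT{\cdot}$ exploiting the finiteness of $Q$ and of the active domains, with Theorem~\ref{th:fin-sat} for the FO base case and Knaster--Tarski iteration over the finite lattice of subsets of $Q$ for the fixpoints (the paper leaves the latter, and your careful collapsing of the infinitely many valuations into finitely many relevant classes, as ``straightforward''). The only step the paper makes explicit that you leave implicit is the preliminary rewriting of $\Phi$ via Theorem~\ref{th:mymu-drop-act} so that no action terms occur in its FO components, which is needed before Theorem~\ref{th:fin-sat} can be applied to the interpretations $\I(q)$.
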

\begin{proof}
	Firstly, by applying Theorem~\ref{th:mymu-drop-act} followed by 
	Theorem~\ref{thm:dom-ind} to the FO components of $\Phi$, we rewrite
	$\Phi$ as an equivalent $\mymu$ (closed) formula $\Phi'$ where no
	action terms occur and whose FO components are domain-independent.
	Once done so, the theorem is a consequence of the finiteness of $Q$ 
	and $\adom(q)$, for $q\in Q$.
	Under these assumptions, $\MODAT{\Phi'}$ is 
	easily computable by recursive applications of the definition
	of $\MODAT{\cdot}$ (page \pageref{def:modat}). 
	In particular, for the base case of  $\Phi'$ a FO formula $\varphi'$, 
	since $\varphi'$ is action-term-free and domain-independent,
	one can apply Theorem~\ref{th:fin-sat}.
	As to quantified variables (outside the $FO$ components), 
	they can be easily dealt with, by the finiteness of $\adom(q)$. 
	The other cases are straightforward.
\end{proof}

Finally, putting all the above results together, we obtain
Theorem~\ref{th:dec-verif}, by  observing that 
one can compute $T_F$ using Procedure~\ref{alg:tf} and then 
check whether $T_F\models \Phi$ by Theorem~\ref{th:mymu-dec}. Termination and correctness 
of this construction are guaranteed by Theorems \ref{thm:mymu-bisimulation},
\ref{thm:termination},  and~\ref{thm:bisimulation-finite-induced}.\footnote{Notice that 
no assumption is made on the object domain $\Delta$ of $\M$ except for it to be infinite. Hence, these results hold also if we assume standard names for object domains, as done in \cite{DBLP:conf/kr/GiacomoLP12}: in that case the object domain is infinite but numerable and coincides with the set of constants $\N$ (this requires a second-order domain closure axiom).}

%%% Local Variables:
%%% mode: latex
%%% dirvars: nil
%%% save-place: t
%%% TeX-master: "main"
%%% End:
%\input{50-verification}
%%%%%%%%%%%%%%%%%%%%%%%%%%%%%%%%%%%%%%%%%%%%%%%%%%%%
\section{Dealing with Incomplete Information}\label{sec:incompleteInfo}

In this section, we address the case of partial information 
on the initial situation, by assuming that
$\D_0$ is a set of axioms 
characterizing a possibly infinite 
set of bounded initial databases.
Also in this case, we focus on theories whose models 
have infinite object domains (as we have infinitely many distinct constants).  
%%

%\todo[inline]{Add a brief summary of what we do in the section.}
We first prove that whenever two models interpret their respective initial situations
in isomorphic ways, they are persistence-preserving bisimilar. 
We observe that this result holds independently of the cardinalities 
of the object domains of the models.

\begin{theorem}\label{th:card-bisim}
	Let $\D$ be a bounded basic action theory.
	For every two models $\M$ and $\M'$ of $\D$,  with possibly different infinite object domains 
	$\Delta$ and $\Delta'$, respectively, 
	if $\arestrict{\I}_\M(S^\M_0)\sim\arestrict{\I}_{\M'}(S^{\M'}_0)$, 
	then $T_\M\approx T_{\M'}$.
\end{theorem}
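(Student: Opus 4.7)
The plan is to exhibit an explicit persistence-preserving bisimulation between $T_\M$ and $T_{\M'}$. Define
\[B \;=\; \{\langle s, h, s'\rangle \mid s,\,s' \text{ executable in }\M,\M' \text{ and } \arestrict{\I}_\M(s) \sim_h \arestrict{\I}_{\M'}(s')\}.\]
By hypothesis, $\langle S_0^\M, h_0, S_0^{\M'}\rangle \in B$ for some bijection $h_0$, so it suffices to verify that $B$ satisfies the three requirements of persistence-preserving bisimulation (page \pageref{def:bisim}); this yields $T_\M \approx T_{\M'}$.

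Requirement 1 holds by construction. For requirement 2, suppose $\langle s, h, s'\rangle \in B$ and $s \to_\M do^\M(a, s)$ with $a = A^\M(\vec{o})$. Since both $\Delta$ and $\Delta'$ are infinite and $\adom(\I_\M(s))$, $\adom(\I_{\M'}(s'))$ are finite (by boundedness), we can extend $h$ to a bijection $\hat{h}:\Delta \to \Delta'$ (arbitrary on $\Delta\setminus\adom(\I_\M(s))$); in particular, $\hat{h}$ maps the finitely many components of $\vec{o}$ not already in $\adom(\I_\M(s))$ to fresh objects of $\Delta'$. Let $a' = A^{\M'}(\hat{h}(\vec{o}))$. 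By Theorem \ref{th:mod-int}, executability of $a$ at $s$ reduces to $\I_\M(s),\vfo \models \phi_A(\vec{x})$ with $\vfo(\vec{x}) = \vec{o}$; since $\hat{h}$ induces an isomorphism between $\I_\M(s)$ and $\I_{\M'}(s')$ (extending $h$), invariance of FO under isomorphism and Theorem \ref{thm:dom-ind} give $\I_{\M'}(s'),\hat{h}\circ\vfo \models \phi_A(\vec{x})$, so $a'$ is executable at $s'$ and $s' \to_{\M'} do^{\M'}(a', s')$.

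It remains to produce the required bijection $h'$ linking $do^\M(a, s)$ to $do^{\M'}(a', s')$. By Theorem \ref{th:ssa-fo}, for each fluent $F$, the extension $F^{\I_\M(do^\M(a, s))}$ is computed by evaluating an action-term-free, situation-suppressed RHS $\phi(\vec{x}, \vec{y})$ of the successor state axiom on $\I_\M(s)$ with the parameters $\vec{y}$ bound to $\vec{o}$; the analogous computation holds on the $\M'$ side using $\hat{h}(\vec{o})$. Because $\hat{h}$ is an isomorphism from $\I_\M(s)$ to $\I_{\M'}(s')$ sending $\vec{o}$ to $\hat{h}(\vec{o})$, invariance of FO formulas under isomorphism yields $\hat{h}(F^{\I_\M(do^\M(a, s))}) = F^{\I_{\M'}(do^{\M'}(a', s'))}$ for every $F$, hence $\hat{h}$ also induces an isomorphism between $\I_\M(do^\M(a,s))$ and $\I_{\M'}(do^{\M'}(a',s'))$. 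Take $h' = \hat{h}\restriction (\adom(\I_\M(s))\cup\adom(\I_\M(do^\M(a,s))))$: it coincides with $h$ on $\adom(\I_\M(s))$, its restriction to $\adom(\I_\M(do^\M(a,s)))$ witnesses the active-domain isomorphism, and so $\langle do^\M(a,s), h'\restriction \adom(\I_\M(do^\M(a,s))), do^{\M'}(a',s')\rangle \in B$, fulfilling requirement 2.

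Requirement 3 is symmetric, using $h^{-1}$ in place of $h$ and swapping the roles of $\M$ and $\M'$; the construction goes through identically because infinitude of $\Delta$ and $\Delta'$ and boundedness of both theories are symmetric hypotheses. The main technical point — and the only non-routine step — is the isomorphism transfer for the new fluent extensions in the previous paragraph; once Theorem \ref{th:ssa-fo} and FO invariance under isomorphism are in hand, this is a clean one-line argument, which is why no induction on situation depth is needed.
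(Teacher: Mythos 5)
Your proof takes essentially the same route as the paper's: the same relation $B$ (triples $\tup{s,h,s'}$ with $\arestrict{\I}_\M(s)\sim_h\arestrict{\I}_{\M'}(s')$), the same executability transfer via Theorem~\ref{th:mod-int}, Theorem~\ref{thm:dom-ind} and FO invariance under isomorphism, and the same successor-interpretation transfer via Theorem~\ref{th:ssa-fo}, followed by restricting the extended bijection to the relevant active domains. The structure and all the key lemmas match.

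One step as written would fail, though it is easily repaired. You extend $h$ to a bijection $\hat h\colon\Delta\to\Delta'$ that is ``arbitrary'' off the active domain, and later speak of $\hat h$ as an isomorphism between the full interpretations $\I_\M(s)$ and $\I_{\M'}(s')$. The theorem explicitly allows $\Delta$ and $\Delta'$ to be different infinite domains, so they may have different cardinalities and no total bijection between them need exist. The paper avoids this by extending $h$ only to the finitely many action parameters $\vfo(\vec x)$ (always possible since both domains are infinite and the extension is finite), and by passing to domain-independent rewritings (Theorem~\ref{thm:dom-ind}) so that both the $\Poss$ check and the successor-state computation are evaluated over the active-domain restrictions $\arestrict{\I}$, where the finite partial bijection suffices; Theorem~\ref{th:fin-ans} then guarantees that $\adom(\I_\M(do^\M(a,s)))\subseteq\adom(\I_\M(s))\cup\vfo(\vec x)$, so the finitely extended $\hat h$ already covers everything $h'$ must be defined on. Since you only ever apply $\hat h$ to finitely many objects, replacing your total bijection by this finite extension (and routing the FO-invariance arguments through the active-domain restrictions) closes the gap without changing anything else in your argument.
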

\begin{proof}
Let $T_\M=\tup{\Delta,Q,q_0,\ra,\I}$ and 
$T_{\M'}=\tup{\Delta',Q',q'_0,\ra',\I'}$.
We prove a stronger claim, i.e., that the relation $B\subseteq Q\times H\times Q'$ 
such that $\tup{q_1,h,q_2}\in B$ if and only if 
$\arestrict{\I}(q_1)\sim_h\arestrict{\I'}(q_2)$ (for any $h$), 
is a persistence-preserving bisimulation relation
between $T_\M$ and $T_{\M'}$. 
This result, once proven, implies the thesis;
indeed, by $\arestrict{\I}_\M(S^\M_0)\sim\arestrict{\I}_{\M'}(S^{\M'}_0)$, we have that there exists 
$\bar h$ such that $\arestrict{\I}(S^\M_0)\sim_{\bar h}\arestrict{\I}(S^{\M'}_0)$,
thus, by the definition of  $B$, $\tup{S^\M_0,\bar h,S^{\M'}_0}\in B$, that is, 
$\tup{q_0,\bar h,q'_0}\in B$, as $q_0=S^\M_0$ and $q_0'=S^{\M'}_0$.

Let $\tup{q_1,h,q_2}\in B$. 
Requirement~\ref{bisim:req1} of the definition of 
bisimulation (page~\pageref{def:bisim})
is clearly satisfied.
For requirement~\ref{bisim:req2}, 
first recall that, by definition of induced transition system (page~\pageref{def:induced-ts}),
$\I(q_1)=\I_\M(q_1)$
and
$\I'(q_2)=\I_{\M'}(q_2)$, thus 
$\arestrict{\I}_\M(q_1)\sim_h \arestrict{\I}_{\M'}(q_2)$.
Assume that there exists 
$q_1'\in Q$ such that $q_1\ra q'_1$.
By definition of transition system induced by 
$\M$ (page~\pageref{def:induced-ts}),
there exist an action type $A$ and a valuation $v$ 
such that $\M,v\models \phi_A(\vec x,q_1)$, for 
$\Poss(A(\vec x),s)\equiv\phi_A(\vec x,s)$ the 
precondition axiom of $A$. This is equivalent to 
$\I_\M(q_1),v\models \phi_A(\vec x)$, for 
$\phi_A(\vec x)$ the situation-suppressed version of $\phi_A(\vec x,s)$.
Now, let $\phi'_A(\vec x)$ be the domain-independent version of 
$\phi_A(\vec x)$.
%%%% and $\hat\I_\M(q_1)$ an interpretation identical to 
%%%%$\I_\M(q_1)$ except for $\Delta$, which is replaced by 
%%%%a domain $\hat\Delta$ such that: 
%%%%$\card{\hat\Delta}=\min\set{\card{\Delta},\card{\Delta'}},$
%%%%$v(\vec x)\subseteq\hat\Delta$, 
%%%%and $\adom(\I_\M(q_1))\subseteq\hat\Delta$. Notice
%%%%that the second and third requirements can be satisfied under
%%%%the first  because 
%%%%$\Delta$ and $\Delta'$ are infinite, thus contain at least countably
%%%%many elements, i.e.,
%%%%as many as the variables to which $v$ assigns (possibly distinct) values, 
%%%%and
%%%%$\adom(\I_\M(q_1))$ is finite, the theory being bounded.
%%%%Further, we have that $\adom(\hat\I_\M(q_1))=\adom(\I_\M(q_1))$ and 
%%%%$\arestrict{\hat\I}_\M(q_1)\sim_h \arestrict{\I}_{\M'}(q_2)$.
%%
By Theorem~\ref{thm:dom-ind}, we have that
$\I_\M(q_1),v\models \phi_A(\vec x)$ if and only if 
$\arestrict{\I}_\M(q_1),v\models \phi'_A(\vec x)$.
%%%%\iff \hat\I_\M(q_1),v\models \phi_A(\vec x)$.
If we extend $h$ to $v(\vec x)$ in a way such that we obtain a bijection $\hat h$
(by a cardinality argument, this is always possible), 
then, because $\arestrict{\I}_\M(q_1)\sim_h \arestrict{\I}_{\M'}(q_2)$, 
we have that 
$\arestrict{\I}_\M(q_1),v\models \phi'_A(\vec x)$ if and only if 
$\arestrict{\I}_{\M'}(q_2),\hat h\circ v\models \phi'_A(\vec x)$.
But then, again by Theorem~\ref{thm:dom-ind}, 
$\I_{\M'}(q_2),\hat h\circ v\models \phi_A(\vec x)$.
Thus, by reintroducing the situation argument in $\phi_A$, we have 
that $\M',v'\models\phi_A(\vec x,q_2)$, that is, there exists an action 
$a'=A^{\M'}(\hat h(v(\vec x)))$ such that $\tup{a',q_2}\in\Poss^{\M'}$.
Therefore, by the definition of $T_{\M'}$, it follows that 
$q_2\ra q'_2$, for $q'_2=do^\M(a',q_2)$. This proves requirement~\ref{bisim:req2a}.

For requirement~\ref{bisim:req2b}, we first show that
$\arestrict{\I}_\M(q'_1)$ can be obtained from $\arestrict{\I}_\M(q_1)$,
through the successor-state axioms.
To this end, notice that $\I_\M(q'_1)$ can be obtained by taking,
for each fluent $F$, the right-hand side 
$\phi(\vec x,a,s)$ of the
corresponding successor-state axiom (the subscript $F$ is removed to simplify the notation),
then deriving the equivalent action-term-free formula $\phi(\vec y,\vec x)$,
as shown in Theorem~\ref{th:ssa-fo}, for action $a=A^\M(\vfo(\vec x))$,
and finally letting $F^{\I_\M(q_1')}=\phi^{\I_\M(q_1)}_{\vec x/\vfo(\vec x)}$, that is, 
by interpreting each $F$ as the answer to the corresponding query 
$\phi$ on the interpretation $\I_\M(q_1)$, under the partial assignment 
$\vec x/\vfo(\vec x)$ (constants are always interpreted as in $\M$). 
Now observe that, since the action theory is bounded, so is
the extension of each fluent $F$ at $q_1$ and $q'_1$. 
Thus, by Theorem~\ref{th:fin-ans}, the extension of each fluent at $q'_1$
contains only values from $\adom(\I_\M(q_1))\cup v(\vec x)$, that is 
$\adom(\I_\M(q'_1))\subseteq\adom(\I_\M(q_1))\cup v(\vec x)$.
Hence, if we denote (for each $F$) the domain-independent 
rewriting of $\phi(\vec y,\vec x)$ as $\phi'(\vec y,\vec x)$, 
by Theorem~\ref{thm:dom-ind},
we have that 
$F^{\I_\M(q_1')}=\phi^{\I_\M(q_1)}_{\vec x/\vfo(\vec x)}=
{\phi'}^{\arestrict{\I}_\M(q_1)}_{\vec x/\vfo(\vec x)}$, that is, by answering 
$\phi'$ on $\arestrict{\I}_\M(q_1)$, we obtain the extension of $F$ 
at $q'_1$. Obviously, by doing so for every fluent $F$, we can obtain
$\arestrict{\I}_\M(q'_1)$ from $\arestrict{\I}_\M(q_1)$.
By an analogous argument, it can be shown that
$\arestrict{\I}_{\M'}(q'_2)$ can be obtained
from $\arestrict{\I}_{\M'}(q_2)$, for 
action $a'=A^{\M'}(\hat h(v(\vec x)))$.

Next, consider again the bijection $\hat h$ defined above, and
recall that 
$\hat h$ extends $h$ on $v(\vec x)$, 
and that $\arestrict{\I}_\M(q_1)\sim_h \arestrict{\I}_{\M'}(q_2)$.
By the invariance of FO under isomorphic interpretations,
we have that, for each fluent $F$, the answers to $\phi'$ on $\arestrict{\I}_\M(q_1)$
and $\arestrict{\I}_{\M'}(q_2)$, under the partial assignments, respectively,
$\vec x/\vfo(\vec x)$ and $\vec x/\hat h(\vfo(\vec x))$, coincide,
modulo the object renaming induced by $\hat h$. 
%%That is, ${\phi'}^{\arestrict{\I}_{\M'}(q_2)}_{\vec x/\hat h(\vfo(\vec x))}$ can be obtained
%%from ${\phi'}^{\arestrict{\I}_\M(q_1)}_{\vec x/\vfo(\vec x)}$ by simply
%%renaming the objects occurring in it, according to $\hat h$.
%%
%%Notice that, by an argument analogous to the one above, 
%%${\phi'}^{\arestrict{\I}_{\M'}(q_2)}_{\vec x/\hat h(\vfo(\vec x))}$ is the extension 
%%of $F$ at $q'_2$  (in $\M'$), i.e., 
%%${\phi'}^{\arestrict{\I}_{\M'}(q_2)}_{\vec x/\hat h(\vfo(\vec x))}=
%%{\phi}^{\I_{\M'}(q_2)}_{\vec x/\hat h(\vfo(\vec x))}=F^{\I_{\M'}(q'_2)}$. 
%%That is, we have obtained
%%$\arestrict{\I}_{\M'}(q'_2)$ from $\arestrict{\I}_{\M'}(q_2)$ (through  $\phi'$).
%%
But then, it is immediate to check that 
$h'=\restrict{\hat h}{\adom(\I_\M(q_1))\cup \adom(\I_\M(q'_1))}$
is a bijection such that 
$\arestrict{\I}_\M(q'_1)\sim_{\restrict{h'}{\adom(\I_\M(q'_1))}} \arestrict{\I}_{\M'}(q'_2)$
and, hence, by the definition of $B$, $\tup{q'_1,\restrict{h'}{\adom(\I_\M(q'_1))},q'_2}\in B$.
This proves requirement~\ref{bisim:req2b}.
The proof of requirement~\ref{bisim:req3b} is analogous.
\end{proof}

Now, consider a set $\Mod$ of models of $\D$ having isomorphic 
interpretations at $S_0$. 
By Theorem~\ref{th:card-bisim}, all such models have induced TSs 
that are persistence-preserving bisimilar to each other. 
Thus, by Theorem~\ref{thm:mymu-bisimulation}, to check whether a 
$\mymu$ formula $\phi$ holds in all models of $\Mod$, 
one can perform the check on any arbitrary model of $\Mod$, using, e.g.,
the technique discussed for the case of complete information. 
This result, together with the assumption of boundedness, 
will be exploited next, to prove our main theorem. 

\begin{theorem}\label{th:inc-inf}
  Let $\D$ be an action theory bounded by $b$ with incomplete
  information on the initial situation, and let $\Phi$ be a $\mymu$
  closed formula.
	Then, checking whether $\D\models\Phi$ is decidable.
\end{theorem}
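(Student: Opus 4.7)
The plan is to reduce the incomplete-information case to (finitely many instances of) the complete-information case, exploiting boundedness together with Theorem~\ref{th:card-bisim} and Theorem~\ref{thm:mymu-bisimulation}.

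First I would observe that, because $\D$ is bounded by $b$ and $S_0$ is executable (hence bounded), in every model $\M$ of $\D$ the interpretation $\arestrict{\I}_\M(S_0^\M)$ has an active domain of size at most $b' = \sum_{F \in \F} b \cdot a_F + \card{C}$, by the same counting argument used in Lemma~\ref{lem:qbound}. Consequently, up to isomorphism there are only finitely many candidate ``initial restricted interpretations'' $\J_1, \ldots, \J_k$ over the signature $\F \cup C$ whose active domain has size at most $b'$. For each $\J_i$, one can mechanically check whether $\J_i$ can be the initial interpretation of some model of $\D$, i.e., whether $\J_i$ (extended to an infinite object domain in the canonical way by interpreting the remaining constants of $\N$ as distinct fresh objects) satisfies $\D_0$: this is decidable because $\D_0$ is a finite set of FO axioms uniform in $S_0$, and by Theorem~\ref{thm:dom-ind} combined with Theorem~\ref{th:fin-sat} its truth on an interpretation with finite active domain is decidable.

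Next, call $\J_{i_1}, \ldots, \J_{i_m}$ the surviving representatives. For each such $\J_{i_j}$, pick any model $\M_j$ of $\D$ whose initial restricted interpretation is $\J_{i_j}$ (by construction at least one exists, and any two such models have isomorphic initial restricted interpretations). Apply Procedure~\ref{alg:tf} to $\D$ together with $\M_j$: by Theorem~\ref{thm:termination} this yields a finite transition system $T_F^{(j)}$, and by Theorem~\ref{thm:bisimulation-finite-induced} $T_F^{(j)} \approx T_{\M_j}$. Then use Theorem~\ref{th:mymu-dec} to decide whether $T_F^{(j)} \models \Phi$, and return ``yes'' iff this holds for every $j = 1, \ldots, m$. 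Correctness follows because, by Theorem~\ref{th:card-bisim}, any model $\M$ of $\D$ is persistence-preserving bisimilar to one of the $\M_j$'s (namely the one whose initial restricted interpretation is isomorphic to $\arestrict{\I}_\M(S_0^\M)$), and by Theorem~\ref{thm:mymu-bisimulation} bisimilar transition systems satisfy exactly the same closed $\mymu$ formulas; hence $\D \models \Phi$ iff $T_{\M_j} \models \Phi$ for every $j$, iff $T_F^{(j)} \models \Phi$ for every $j$.

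The main obstacle I anticipate is the subtlety in Step~1: one must be careful that Procedure~\ref{alg:tf} is really applicable even though $\D_0$ does not pin down a unique model. The fix is exactly the observation above: Procedure~\ref{alg:tf} only ever queries the initial interpretation and the right-hand sides of successor state axioms on finite active domains, and by Theorem~\ref{th:card-bisim} the abstract transition system it produces depends only on the isomorphism type of $\arestrict{\I}_\M(S_0^\M)$, not on the specific model $\M_j$ chosen as witness. A secondary technical point is that enumeration of the $\J_i$'s must respect the unique-name axioms $\D_{uno}$ on constants of $\C$ (forcing the interpretations of distinct named constants to be distinct objects of the active domain), but this is a purely finite syntactic restriction on the finitely many candidate interpretations of size bounded by $b'$, and so does not affect decidability.
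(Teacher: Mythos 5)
Your proposal is correct and follows essentially the same route as the paper: partition the models of $\D$ by the isomorphism type of their initial restricted interpretation, argue via boundedness that there are only finitely many classes whose representatives can be effectively enumerated as interpretations over a fixed set of $b'$ objects satisfying $\D_0$ and $\D_{uno}$, and then decide each class by reduction to the complete-information case, with correctness secured by Theorem~\ref{th:card-bisim} and Theorem~\ref{thm:mymu-bisimulation}. The only cosmetic difference is that you invoke Procedure~\ref{alg:tf} and Theorems~\ref{thm:termination}, \ref{thm:bisimulation-finite-induced} and \ref{th:mymu-dec} directly on a chosen witness model, whereas the paper packages this by building a complete initial database $\D_0^i$ from each representative interpretation and appealing to Theorem~\ref{th:dec-verif}.
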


\begin{proof}
%%We observe that for every model 
%%$\M$ of $\D$, we can proceed as shown for the case of complete 
%%information, that is, by  checking whether $T_F\models\Phi$, 
%%for $T_F$ a finite-state abstract TS of $\M$. Thus, conceptually, to check whether
%%$\D\models\Phi$, we need to perform the check on all models
%%$\M$ of $\D$. Obviously, however, such a procedure cannot be carried out in 
%%a na\"ive way, as $\D$ admits, in general, infinitely many models (in fact, uncountably many).
%%Nonetheless, by exploiting the assumption of boundedness and the 
%%invariance of $\mymu$ with respect to persistence-preserving bisimulation, 
%%all such models can be partitioned into finitely many equivalence classes 
%%of models equivalent with respect to satisfaction of $\mymu$ formulas.
%%Thus, by simply performing the check as shown for the case of 
%%complete information on one representative of each class,
%%we can check all models.
Let $\Mod_\D$ be the set of all models of $\D$, and consider a partition of it
such that each cell contains only models whose interpretations
at $S_0$ match, modulo object renaming. 
Formally, we define 
$\Mod_\D=(\Mod^1_\D,\Mod^2_\D,\ldots)$ such that, for every two models 
$\M$ and $\M'$ in $\Mod_\D^i$, $\arestrict{\I}_\M(S^\M_0)\sim\arestrict{\I}_{\M'}(S^{\M'}_0)$.
%%%for every $\M'\in\Mod^i_\D$ there exists a bijection $h$ from 
%%%$\adom^\M(S_0)$ to $\adom^{\M'}(S_0)$, with $\adom^\M(S_0)$ 
%%%(respectively $\adom^{\M'}(S_0)$) the set of all
%%%objects occurring in some fluent at $S_0$ in $\M$ ($\M'$), 
%%%such that for all fluents $F$ of $\D$, $\tup{\vec d,S_0}\in F^\M$ iff 
%%%$\tup{h(\vec d),S_0}\in F^{\M'}$.
%%
As a consequence of the boundedness of $\D$, the number of 
cells in the partition is finite. 
Indeed,  
a bounded number of objects yields, up to object renaming, only a bounded 
number of possible interpretations (of finitely many fluents and constants) at $S_0$. 
Thus, for some finite $n$ depending on the theory $\D$ and the bound $b$, we have that 
$\Mod_\D=(\Mod^1_\D,\Mod^2_\D,\ldots,\Mod^n_\D)$.

Since, by Theorem~\ref{th:card-bisim}, any two models $\M$ and $\M'$ of the generic cell 
$\Mod^i_\D$ induce persistence-preserving bisimilar transition systems, then, by 
Theorem~\ref{thm:mymu-bisimulation}, we have that all the models of 
$\Mod^i_\D$ satisfy $\Phi$ if and only if some model $\M$ of $\Mod^i_\D$
satisfies $\Phi$. Thus, to check whether $\D\models\Phi$, we can simply choose 
one model $\M_i$ per cell $\Mod^i_\D$, and then check whether,
for all $i=1,\ldots,n$, $\M_i\models \Phi$; if this is the case, 
then, and only then, we can conclude that $\D\models\Phi$.
%%
%%In fact, to perform the check, one does not need to have the 
%%model $\M_i$, but an action theory, say $\D_i$, with complete 
%%information on the initial situation, whose corresponding 
%%model is $\M_i$. Indeed, this guarantees, by Theorem~\ref{th:dec-verif}, 
%%that the check is decidable.
%%
Obviously, for this approach to be effective,
we need a model $\M_i$ per cell $\Mod_\D^i$ and a way to perform the check. 
The rest of the proof addresses these two points.

Let $\F$ be the set of situation-suppressed 
fluents of $\D$, and $C$ the (finite) set 
of constant symbols explicitly mentioned in $\D$ (beyond $\D_{uno}$).
We observe that each cell $\Mod_\D^i$ of the partition 
$\Mod_\D=(\Mod_\D^1,\ldots,\Mod_\D^n)$ 
can be uniquely identified by an interpretation $\I_i$ of $\F$ and $C$ over some 
infinite object domain $\Delta$. Indeed, by transitivity of $\sim$,
any two models $\M,\M'$ of $\D$ such that $\arestrict{\I}_\M(S_0^\M)\sim\I_i$ 
and $\arestrict{\I}_\M'(S_0^{\M'})\sim\I_i$ are also such that
$\arestrict{\I}_\M(S_0^\M)\sim\arestrict{\I}_\M'(S_0^{\M'})$.
Notice that $\I_i$ certainly exists, as one can simply take 
$\arestrict{\I}_\M(S_0^{\M})$, for some model $\M\in\Mod_\D^i$.
Clearly, each $\I_i$ contains only a bounded number of objects in the active domain
and satisfies $\D_0$, i.e., $\I_i\models \D_0$.

Now, assume given one interpretation 
$\I_i$ per cell $\Mod_\D^i$ (we show below how to obtain them)
and observe that, 
from $\I_i$, we can extract a complete initial situation description
as a database $\D_0^i$. This can be easily done, as $\I_i$ is finite.
Consider the theory $\D^i=(\D\setminus\D_0)\cup \D_0^i$, obtained by replacing 
$\D_0$ with $\D_0^i$, and assume
the same interpretation of constants in $C$ as that defined by $\I_i$.
Under this assumption, $\D^i$ defines a family of models 
that differ only in the object domain and in the interpretation of constants outside 
$C$ (which, however, must satisfy $\D_{uno}$). 
In particular, 
the interpretation of fluents in $\F$ and constants in $C$, at $S_0$,
of all such models, is the same as that of $\I_i$.
Thus, the models of $\D^i$ constitute a subset of $\Mod_\D^i$. 
To isolate one of such models, we fix an arbitrary 
infinite object domain $\Delta$ (such that $\adom(\I_i)\subseteq\Delta$),
and arbitrarily extend the partial interpretation of constants 
over the constants outside $C$, satisfying $\D_{uno}$.
Notice that this can always be done, as $\Delta$ is infinite and the set 
of constant symbols countable.
%%
%%\todo{Is it true that there is a single model?  The object domain can vary.} 
  With $\Delta$ and the denotation of all constants
  fixed, $\D^i$ has complete information, i.e., yields a 
single model $\M_i$, thus,
by Theorem~\ref{th:dec-verif}, we can check whether $\D^i\models \Phi$
, i.e., whether 
$\M^i\models \Phi$ (notice that, as it turns out from Procedure~\ref{alg:tf}, 
to perform the check, one does not even need 
to know the interpretation of constants outside $C$).  
This, by the discussion above, is equivalent to 
checking whether for all models $\M\in\Mod_\D^i$, it is the case that $\M\models \Phi$.
Therefore, if the set of interpretations $\Gamma=\{\I_1,\ldots,\I_n\}$ is given, we can check whether 
$\D\models \Phi$.

It remains to explain how such a set of interpretations $\Gamma=\{\I_1,\ldots,\I_n\}$ can be obtained.
To this end, observe that, by Lemma~\ref{lem:qbound}, 
%%
%%
%%observe that if the theory is bounded by $b$, then, for any model,   
%%the extension of each fluent $F\in\F$ at any executable situation contains at most 
%%$b$ distinct tuples. Thus, the extension of the generic fluent $F$ cannot contain,
%%at any executable situation, more than $a_F\cdot b$ distinct objects, where 
%%$a_F$ is the arity of $F$
%%(the maximum number of tuples, each with distinct objects, distinct also from all others
%%in the extension). As a result, the extensions cannot contain, overall, more than 
%%$\sum_{F\in\F} a_F\cdot b$ distinct objects.
%%Hence, considering that any $\I_i$ interprets both the fluents in $\F$ and the constants in $C$,
%%it turns out that 
it follows that $\card{\adom(\I_i)}\leq\sum_{F\in\F} a_F\cdot b+\card{C}\doteq b'$.
Based on this, the set $\Upsilon$ of interpretations $\I_i$ can be obtained by:
\myi fixing a set $O$ of $b'$ arbitrary objects; \myii generating a set $\Upsilon'$ of 
all the finitely many interpretations of $\F$ and $C$ over $O$,
such that $\D_{uno}$ is enforced on $C$ and for every interpretation $\I'\in\Upsilon'$, 
$\I'\models\D_0$;
\myiii for any set $\Upsilon''\subseteq\Upsilon'$ of isomorphic interpretations, 
removing from $\Upsilon'$ all but one of such interpretations (in fact, this step is not 
needed to our purposes,  but avoids useless redundancies). 
The resulting $\Upsilon'$ is the set of desired interpretations $\I_1,\ldots,\I_n$, which we rename
simply as $\Upsilon$.

Now, observe that, by the way it is defined, $\Upsilon$ contains, 
up to object renaming, all possible interpretations of $\F$ and $C$ 
over a set of $b'$ distinct objects, that satisfy $\D_0$ and $\D_{uno}$ (on $C$).
Thus, since for a generic model $\M$ of $\D$, the interpretation 
$\arestrict{\I}_\M(S_0^\M)$ contains at most 
$b'$ distinct objects (by the boundedness of $\D$), it turns out that 
there exists an interpretation $\I_i\in\Upsilon$ such that 
$\arestrict{\I}_\M(S_0^\M)\sim \I_i$. Therefore, the cell $\Mod_\D^i$
such that $\M\in\Mod_\D^i$,
is characterized by some interpretation $\I_i\in\Upsilon$, namely the 
interpretation at $S_0$ shared, up to object renaming, by the models
of the cell itself.
On the other hand, because any $\I_i\in\Upsilon$ enforces $\D_{uno}$ 
and is such that $\I\models\D_0$, it follows that there exists some 
model $\M$ of $\D$ such that $\arestrict{\I}_\M(S_0^\M)\sim \I_i$. Therefore,
every interpretation of $\Upsilon$ characterizes some cell $\Mod_\D^i$,
specifically, that of the models $\M$ such that $\arestrict{\I}_\M(S_0^\M)\sim \I_i$.
Therefore, $\Upsilon$ is indeed the set of desired intepretations.
This concludes the proof.
\end{proof}

This result, besides stating decidability of the verification problem under 
incomplete information, provides us with an actual procedure 
to perform verification in this case. %%\todo{G:changed}
%%

%%% Local Variables:
%%% mode: latex
%%% dirvars: nil
%%% save-place: t
%%% TeX-master: "main"
%%% End:

%%%%%%%%%%%%%%%%%%%%%%%%%%%%%%%%%%%%%%%%%%%%%%%%%%%%
\section{Computational Complexity}\label{sec:complexity}

%\todo[inline]{Following assumptions to be double-checked and/or revised.} 
In this section, we asses the computational complexity of verifying $\mymu$ formulas over a bounded situation calculus basic action theory $\D$. In particular we show that the constructive techniques we have used for proving decidability are, in fact, optimal with respect to worst case computational complexity. %%\todo{G:changed}
We make the assumption that, for a basic action
theory $\D$, the maximum number of distinct objects occurring in the
state of any situation, dominates the input size of $\D$
itself, %along any dimension
and that there exists a bound $\bar a_F$ on the maximum arity of
fluents.
This is a reasonable assumption, analogous to that, typical in databases, that the size 
of the database provides a higher bound on the size of the input along all 
dimensions, and that,  in practical cases, there exists an upper bounds on the 
arity of relations.
We exploit the constructive techniques introduced for showing decidability to get an exponential time upper-bound.

\begin{theorem}
  Verifying $\mymu$ formulas over a situation calculus basic action
  theory bounded by $b$, with complete information on the initial
  situation, can be done in time exponential in $b$.
\end{theorem}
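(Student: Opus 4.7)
The plan is to upper-bound the size of the finite abstract transition system $T_F$ produced by Procedure~\ref{alg:tf} and then invoke standard model-checking techniques on it, carefully accounting for the FO atomic subformulae evaluated at each state.

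First I would bound $|T_F|$. By Lemma~\ref{lem:AdomQbound}, the pool of objects appearing in the labelling of any state of $T_F$ satisfies $|\adom(Q)| \le 2b' + N$, with $b' = \sum_{F\in\F} b\cdot a_F + |C|$ and $N$ the maximum arity of action types. Under the paper's standing assumption that $b$ dominates the sizes of $\F$, $C$ and $|\A|$, and that fluent arities are bounded by a constant $\bar a_F$, the quantity $b'$ is polynomial in $b$. The number of distinct interpretations labelling states of $T_F$ is therefore at most $\prod_{F\in\F} 2^{(2b'+N)^{\bar a_F}}$, which is exponential in $b$. Hence $|Q|$ is at most exponential in $b$.

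Next I would bound the cost of actually constructing $T_F$. For each state $q$ picked at line~\ref{alg:pick}, the set $O$ chosen at line~\ref{alg:choose-O} has polynomially many objects, so the number of candidate ground actions considered at line~\ref{alg:poss} is polynomial in $b$. For each such action, checking executability reduces via Theorem~\ref{prop:sitcalc-fo-act} and Theorem~\ref{thm:dom-ind} to evaluating the domain-independent form of $\phi_A$ on $\arestrict{\I}(q)$, which by Theorem~\ref{th:fin-sat} takes time polynomial in $|\adom(\I(q))|$ (hence polynomial in $b$ for fixed theory). Computing the successor interpretation $\I'$ at line~\ref{alg:ssa} amounts to evaluating the FO queries $\phi_F$, which by Theorem~\ref{th:fin-comp} is again polynomial in $b$. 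The isomorphism test at line~\ref{alg:if-add} must search for a bijection between $\adom(\I')$ and $\adom(\I(q'))$ extending the identity on $\adom(\I(q))$; since both active domains have at most $b'$ elements, this search costs at most $(b')! = 2^{O(b \log b)}$, within exponential time in $b$. Summing over the (at most exponentially many) expansions, $T_F$ is built in time exponential in $b$.

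Finally I would perform the verification on $T_F$ by a standard $\mymu$ model-checking procedure on finite transition systems (as in the proof of Theorem~\ref{th:mymu-dec}): precompute, for each atomic FO subformula $\varphi$ of $\Phi$, the set of states $q$ with $\arestrict{\I}(q),\vfo \models \varphi'$ (the domain-independent rewriting) in time polynomial in $|T_F|$ and $|\Phi|$, then evaluate the Boolean, quantifier, modal and fixpoint constructors by the usual Tarski--Knaster iterations over $2^{|Q|}$; as $|Q|$ is itself exponential in $b$, these iterations still terminate within exponential time. The overall procedure therefore runs in time exponential in $b$, establishing the upper bound. The main obstacle is keeping the isomorphism test and the counting of states within an exponential budget; this is handled by exploiting that all active domains are polynomially bounded in $b$, so even the naive combinatorial search over bijections stays sub-exponential per step.
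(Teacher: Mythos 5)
Your proof is correct and follows essentially the same route as the paper's: bound the state space of $T_F$ via Lemma~\ref{lem:AdomQbound} (exponentially many interpretations over a polynomially bounded object pool), show Procedure~\ref{alg:tf} runs in exponential time with the isomorphism test as the dominant per-step cost, and then model-check the resulting finite transition system. One small inaccuracy: the number of candidate ground actions per state is $m^{N}$ where $N$ (the maximum action arity) is only assumed to be dominated by $b$, not constant, so this count is exponential rather than polynomial in $b$ --- but since an exponential bound per iteration suffices, this does not affect your conclusion.
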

\begin{proof} This is a consequence of Procedure~\ref{alg:tf} and the complexity of 
\mymu model checking.
Firstly, consider Procedure~\ref{alg:tf} and observe that, 
by Lemma~\ref{lem:AdomQbound}, 
at any iteration, the number $m$ of distinct objects occurring, overall, 
in the interpretations of states (i.e.~$\card{\adom(Q)}$ of Lemma~\ref{lem:AdomQbound}) 
is bounded by $2b'+N$, where 
$b'=\sum_{F\in\F}b\cdot a_F$, $a_F$ is the arity of fluent $F$, 
and $N$ is the maximum number of 
parameters in action types. 
Since we assume $\card{F}$ and $N$ bounded by $b$, and $a_F$ 
bounded by a constant, it turns out that $m$ is polynomial in $b$.
Now, observe that, with $m$ distinct objects and $a_F$ bounded by a 
constant, one can obtain a 
number of interpretations of $\F$ and $C$ that is at most  exponential in $m$, i.e., in 
(a polynomial of) $b$. 
Then, because in Procedure~\ref{alg:tf} every state is associated with exactly one interpretation, and since no state is visited more than once, we have that 
the while-loop (lines~\ref{alg:while}--\ref{alg:end-while}) terminates after, at most, an
exponential number of iterations.

As to each iteration, by our assumptions, we have that any loop inside 
the while-loop ends after  at most exponentially many iterations. 
Indeed, for any action type with at most $N$ parameters, we have
at most $m^N$ possible assignments, thus $m^N\leq m^{b'}$, which gives 
an exponential bound, as both $m$ and $b'$ are polynomial
with respect to $b$. 
Now, observe that the dominant operation in the while-loop 
is checking whether two interpretations are isomorphic. 
Since also this check
can be performed in exponential time with respect to $b$ (the problem is in NP),
we obtain, overall, an exponential time-bound for Procedure~\ref{alg:tf}.

Now, recall that propositional $\mu$-calculus model checking is polynomial 
with respect to the sizes of the input transition system and the input formula~\cite{Emerson96}.
As to the transition system, the check is performed on the one returned by 
Procedure~\ref{alg:tf}, which has size at most exponential in $b$ (i.e., as
many interpretations as one can obtain with at most $m$ objects, plus a quadratic number
of transitions wrt it).
As to the formula, say $\Phi$, we first rewrite it (in polynomial time) into its 
equivalent domain-independent version $\Phi'$, and then ``propositionalize''
it, by quantifier elimination, using only the values that occur, overall, in the 
active domains of the interpretations of the states of the input transition system. 
This step can be done,
again, in exponential time, and returns a quantifier-free formula exponentially larger than the original one, but equivalent to it, on the obtained finite transition system. 
Thus, since $\mu$-calculus model checking is polynomial wrt the size of both the 
transition system and the formula, we obtain that, overall, the check requires time 
at most exponential wrt $b$.
\end{proof}

% An linear-space-bounded one-tape alternating Turing machine \cite{ChKS81} is a tuple 
% $M=(Q,\Gamma,\delta,q_0,g)$ where
% \begin{itemize}
% \item $Q$ is the finite set of states;
% \item $\Gamma$ is the finite tape alphabet;
% \item $\delta: Q\times\Gamma\rightarrow\mathcal{P}(Q\times\Gamma\times\{L,R\})$ is called the transition function ($L$ shifts the head left and $R$ shifts the head right);
% \item $q_0\in Q$ is the initial state;
% \item $g:Q\rightarrow\{\wedge,\vee,accept,reject\}$ specifies the type of each state.
% \end{itemize}
% If $M$ is in a state $q\in Q$ with $g(q)=accept$ then that configuration is said to be \emph{accepting}, and if $g(q)=reject$ the configuration is said to be \emph{rejecting}. A configuration with $g(q)=\wedge$ is said to be accepting if \emph{all} configurations reachable in one step are accepting, and rejecting if some configuration reachable in one step is rejecting. A configuration with $g(q)=\vee$ is said to be accepting when there \emph{exists} some configuration reachable in one step which is accepting and rejecting when all configurations reachable in one step are rejecting (this is the type of all states in an nondeterministic Turing machine). $M$ is said to accept an input string $w$ if the initial configuration of $M$ (the state of $M$ is $q_0$, the head is at the left end of the tape, and the tape contains $w$) is accepting, and to reject if the initial configuration is rejecting.

Such an exponential bound is, in fact, tight, as we can show the EXPTIME-hardness of the problem by reduction from acceptance in a polynomial-space bounded alternating Turing machine. 
\begin{theorem}
Verifying $\mymu$ formulas over bounded situation calculus basic action theories with complete information on the initial situation is EXPTIME-hard.
\end{theorem}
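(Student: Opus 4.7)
The plan is to reduce from acceptance of a polynomial-space bounded alternating Turing machine (APSPACE), a well-known EXPTIME-complete problem. Given an ATM $M$ with states $Q = Q_\exists \uplus Q_\forall \uplus \{q_{acc},q_{rej}\}$, tape alphabet $\Gamma$, space bound $p(n)$, and input $w$ of length $n$, I would construct in polynomial time a bounded basic action theory $\D_{M,w}$ and a closed $\mymu$ formula $\Phi_M$ such that $M$ accepts $w$ iff $\D_{M,w}\models \Phi_M$.

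Concretely, I would encode configurations of $M$ using fluents $TapeCell(pos,sym,s)$ recording the symbol at each of the $p(n)$ tape positions, $Head(pos,s)$ for the head position, and $CurState(q,s)$ for the current control state; the constants $1,\ldots,p(n)$, the symbols in $\Gamma$, and the states in $Q$ would be introduced explicitly in $\D_0$, which fully specifies the initial configuration of $M$ on $w$. For each transition $\delta$ of $M$, I would introduce an action type $A_\delta(pos,sym)$ whose precondition axiom checks that $M$ is in the appropriate state, that $Head(pos,s)$ holds, and that $TapeCell(pos,sym,s)$ holds; its effect, specified via successor state axioms, is to update the three fluents to reflect a single step of $\delta$. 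Because $TapeCell$ has exactly $p(n)$ tuples at every situation, $Head$ and $CurState$ have exactly one, and all constants are fixed, $\D_{M,w}$ is bounded by $b = p(n)+1$, and its size is polynomial in $n$.

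Alternation is captured by mixing $\DIAM{\cdot}$ and $\BOX{\cdot}$: I let $\varphi_\exists \doteq \exists q.CurState(q)\land State_\exists(q)$, and analogously $\varphi_\forall$, $\varphi_{acc}$, $\varphi_{rej}$ (using an auxiliary non-fluent predicate $State_\exists$, $State_\forall$ that simply enumerates the state types). The acceptance formula is the least fixpoint
\[
\Phi_M \;\doteq\; \mu Z.\, \varphi_{acc} \,\lor\, \bigl(\varphi_\exists \land \DIAM{Z}\bigr) \,\lor\, \bigl(\varphi_\forall \land \lnot\varphi_{rej} \land \BOX{Z}\bigr).
\]
Unfolding the fixpoint over the induced transition system of the unique model of $\D_{M,w}$ yields exactly the standard alternating acceptance predicate for $M$ on $w$: existential configurations demand \emph{some} successor from which acceptance is derivable, universal configurations demand that \emph{all} successors lead to acceptance (and no successor is a reject leaf), and accepting leaves satisfy the base case. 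A routine induction on the alternation rank of configurations then shows $\D_{M,w}\models\Phi_M$ iff $M$ accepts $w$.

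The main obstacle, and the step requiring most care, is ensuring that the reduction is genuinely polynomial while faithfully simulating alternation: one must check that every universal configuration has at least one enabled action (so that $\BOX$ is not vacuously satisfied by reject-or-stuck configurations) and that transitions from existential and universal states are cleanly separated so that $\varphi_\exists\land\DIAM{Z}$ and $\varphi_\forall\land\BOX{Z}$ capture exactly the ATM semantics. This is handled by adding, in action preconditions, the condition that the state be of the matching type, and by adding an explicit $stuck$ self-loop action available only when no transition applies and the current state is $q_{rej}$, so the boundedness invariant and the alternation semantics are both preserved. EXPTIME-hardness then follows from APSPACE $=$ EXPTIME.
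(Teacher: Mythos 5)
Your reduction is correct and follows essentially the same route as the paper: both encode a polynomial-space bounded ATM's configurations as bounded fluent extensions, simulate a step by an action, and capture alternating acceptance with a least fixpoint mixing $\DIAM{\cdot}$ for existential states and $\BOX{\cdot}$ for universal states. The only cosmetic differences are that the paper uses a single parameterized action type $trans(q',c',m)$ guarded by a rigid transition-table fluent rather than one action type per transition, and that your auxiliary ``non-fluent'' predicates $State_\exists$, $State_\forall$ should be recast as situation-independent fluents with trivial successor state axioms, since the formalism assumes no non-fluent predicates.
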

\begin{proof}
We show a reduction from polynomial-space bounded alternating Turing machines, whose acceptance problem is EXPTIME-complete \cite{ChKS81}.
An (one-tape) \emph{Alternating Turing Machine} (ATM) \cite{ChKS81} is a tuple 
$M=(Q,\Gamma,\delta,q_0,g)$ where
\begin{itemize}
\item $Q$ is the finite set of states;
\item $\Gamma$ is the finite tape alphabet;
\item $\delta: Q\times\Gamma\times Q\times\Gamma\times\{L,R\}$ is called the transition table ($L$ shifts the head left and $R$ shifts the head right); 
\item $q_0\in Q$ is the initial state;
\item $g:Q\rightarrow\{and,or,accept\}$ specifies the type of each state.
\end{itemize}
If $M$ is in a state $q\in Q$ with $g(q)=accept$ then that
configuration is said to be \emph{accepting}. A configuration with
$g(q)=and$ is said to be accepting if \emph{all} configurations
reachable in one step are accepting.  A configuration with $g(q)=or$
is said to be accepting when there \emph{exists} some configuration
reachable in one step which is accepting.  (The latter is the type of all
states in a Nondeterministic Turing Machine.) $M$ is said to accept
an input string $w$ if the initial configuration of $M$ (where the state of
$M$ is $q_0$, the head is at the left end of the tape, and the tape
contains $w$) is accepting. 
An ATM is said to be \emph{polynomial-space-bounded} if it scans at most a number of tape cells that is polynomially-bounded by the size of the input. %Acceptance problem in such ATM is EXPTIME-complete.

%Acceptance in polynomially-space-bounded ATM is EXPTIME-complete \cite{ChKS81}.

Following \cite{Reiter01-Book} (Chap.\ 4), we can axiomatize the ATM using the following fluents:

\begin{itemize}
\item $transTable(q, c, q', c', m, s)$. This is a
  situation-independent predicate (i.e., with a trivial
  successor-state-axioms preserving its content forever) describing
  the ATM's transition table $\delta$:
when in state $q$ scanning tape symbol
  $c$, the machine enters state $q'$, overwrites $c$ with tape symbol
  $c'$, and moves its tape head in the direction $m$, which is one of
  $L$ (left) or $R$ (right).

\item $gType(q,t,s)$. This is a situation-independent predicate
  assigning (once and for all) a type $t\in \{ and,or, accept \}$
% ($and,or, accept$) 
to the state $q$ of the ATM. 

\item $cell(i,c,s)$. This means that tape cell $i \in [0,\ldots,\ell]$ contains the
  symbol $c \in\Gamma\cup \{blank\}$ in situation $s$. Notice that in
  every situation the number of facts of the form
  $cell(i,\gamma,s)$ is fixed and determined by the maximal length of the
  tape of the bounded ATM, $\ell$.  Initially, the first cells contains the
  input word $w$ while the others are $blank$.

\item $state(q,s)$. This means that in situation $s$, the machine's state is
  $q$. Initially, we have $state(q_0,S_0)$, where $q_0$ is the initial
  state of the ATM.

\item $scan(i, s)$. This means that the machine's head is scanning
  tape cell $i \in [0,\ldots,\ell]$ in
  situation $s$. Initially, the head is scanning tape cell $0$. In any
  situation, there will only be one fact of the form $scan(i, s)$.
\end{itemize}
% We need just one action without parameters: $trans$, meaning that the
% machine makes a transition, whose precondition axiom is as follows:
% \[\begin{array}{rcl}
% Poss(trans,s) &\equiv& \exists q, i, q', c', m.\,
% state(q,s) \land scan(i,s) \land cell(i,c,s) \land {}\\
% &&\qquad\qquad\qquad transTable(q, c, q', c', m, s)
%   \end{array}
%   \]
We need just one action type $trans(q',c', m)$, meaning that the
machine makes a transition from the current configuration to a new
configuration where the state is $q'$, tape symbol
$c'$ is written, and the tape head moves in direction $m$,
whose precondition axiom is as follows:
\[\begin{array}{rcl}
Poss(trans(q',c', m),s) &\equiv& \exists q, i, c.\,
state(q,s) \land scan(i,s) \land cell(i,c,s) \land {}\\
&&\qquad\qquad\qquad transTable(q, c, q', c', m, s)
  \end{array}
  \]
%We leave to the reader detailing the successor-state axioms and the
%initial situation description.
The successor state axioms for the fluents that can change are as
  follows:
\[\begin{array}{l}
state(q,do(a,s) \equiv \exists c,m. a = trans(q,c,m) \lor {}\\
\hspace{8.2em} state(q,s) \land \neg \exists
    q',c,m. a = trans(q',c,m) \land q' \neq q
\end{array}\]
\[\begin{array}{l}
scan(i,do(a,s) \equiv\\
\qquad \exists q,c,i'. a = trans(q,c,L) \land scan(i',s) \land {}\\
\hspace{5em} (i' = 0 \supset i = i') \land (i' \neq 0 \supset i = i' - 1) \lor {}\\
\qquad \exists q,c. a = trans(q,c,R) \land scan(i',s) \land i = i' + 1
    \lor {}\\
\qquad scan(i,s) \land \neg \exists q,c,m. a = trans(q,c,m)
\end{array}\]
\[\begin{array}{l}
cell(i,c,do(a,s) \equiv
\exists q,m. a = trans(q,c,m) \land scan(i,s) \lor {}\\
\qquad \qquad cell(i,c,s) \land \neg \exists q,c',m. a = trans(q,c',m) \land
    scan(i,s) \land c' \neq c
\end{array}\]
For initial situation description, assuming the input $w = c_0 \ldots
c_i$, we have:
\[\begin{array}{l}
state(q_0,S_0), scan(0,S_0),\\
cell(0,c_0,S_0), \ldots, cell(i,c_i,S_0),\\
cell(j,blank,S_0),\ \mbox{for}\ j \in [i,\ldots, \ell]
\end{array}\]

Acceptance of the ATM is defined using the following $\mymu$ formula $\Phi$:
% \[\begin{array}{l}
% \mu Z.\, (\exists q. gType(q,accept) \lor {}\\
% \phantom{\mu Z.\,} (\exists q. gType(q,and)) \land \BOX{Z} \lor 
%                                 (\exists q. gType(q,or)) \land \DIAM{Z}
%   \end{array}
%   \]
\[\begin{array}{l}
\mu Z.\, (\exists q. state(q) \land gType(q,accept) \lor {}\\
\phantom{\mu Z.\,} (\exists q. state(q) \land gType(q,and)) \land
    \BOX{Z} \lor {}\\
\phantom{\mu Z.\,} (\exists q. state(q) \land gType(q,or)) \land \DIAM{Z}
  \end{array}
  \]
Then we have that $\D \models \Phi$ if and only if $M$ accepts $w$.
Notice that in any situation there is exactly one fact of the form $gType(q,t,s)$. 
Notice also that the above condition does not require quantification across situations. 
\end{proof}

%%% Local Variables:
%%% mode: latex
%%% dirvars: nil
%%% save-place: t
%%% TeX-master: "main"
%%% End:

%%%%%%%%%%%%%%%%%%%%%%%%%%%%%%%%%%%%%%%%%%%%%%%%%%%%
\section{Checking Boundedness}\label{sec:boundedness}

We now show that we can always check whether any BAT maintains
boundedness for a given bound. That is, if the initial situation
description is bounded, then the entire theory is too (for all
executable situations).

First notice that we can determine in a situation $s$ whether
every executable action $a$ if performed next does not exceed the bound
(i.e.\ in $do(a,s)$).
We can capture the notion of a fluent $F$ being bounded at the next step by the formula:

\[\bigwedge_{A\in\A} \forall \vec{x}. Poss(A(\vec{x}),s) \limp Bounded_{F,b}(do(A(\vec{x}),s)).\]

\noindent
Notice that each $Bounded_{F,b}(do(A(\vec{x}),s))$ is regressable through $A(\vec{x})$. 
As a result the formula above is equivalent to a first-order situation calculus formula uniform in $s$; 
we call the latter formula $\NextBounded_{F,b}(s)$, and 
we call $\NextBounded_{b}(s)$ the formula $\bigwedge_{F\in\F}
\NextBounded_{F,b}(s)$.

To check that the theory is bounded  by $b$ it is sufficient to verify that the theory entails the temporal formula:
%%(expressed in the language of~\cite{degiacomo-LP-kr12}):
\[AG\NextBounded_{b} \doteq \nu Z. \NextBounded_{b} \land \BOX{Z},\]
\noindent
which expresses that always along any path $\NextBounded_{b}$ holds.
Unfortunately deciding whether this formula is entailed by the action theory is directly doable with the techniques in previous sections only if the theory is bounded, which is what we want to check. 
However it turns out that we can construct a modified version of the
action theory that is guaranteed to be bounded and that we can use to do the checking.  

Let $\D$ be the action theory. We define a new action theory $\D\D$ obtained by augmenting $\D$ as follows:
\begin{itemize}
\item $\D\D_{S_0} = \D_{S_0} \cup \{\phi[\vec{F}/\vec{F'}] | \phi \in \D_{S_0}\}$

\item 
$\D\D_{SS} = \D_{SS} \cup {} \{ F'(\vec{x},do(a,s)) \equiv
\Phi(\vec{x},a,s) \land \NextBounded_{b}(s) \mid {}$\\
\hspace*{17em}$ F(\vec{x},do(a,s)) \equiv \Phi(\vec{x},a,s) \in \D_{SS}\}
 $

\item $\D\D_{ap} =  \{ Poss(A(\vec{x}),s) \equiv
  \Psi(\vec{x},a,s)\land \NextBounded_{b}(s) \mid {}$\\ 
\hspace*{16em} $Poss(A(\vec{x}),s) \equiv \Psi(\vec{x},a,s)  \in \D_{AP}\}$
\end{itemize}
Intuitively $\D\D$ extends $\D$ with primed copies of fluents, which are axiomatized to act, in any situation, as the original ones as long as the \emph{original theory remains bounded} by $b$ in that situation, otherwise they become empty (and actions cannot be executed according to $Poss$.)
It is easy to show the following key property for $\D\D$.
\begin{lemma}\label{thm-dd-equivalence}
\[\D\D\models \forall s. (\forall \hat{s}. \hat{s} < s \limp \NextBounded_{b}(\hat{s})) \limp \forall \vec{x}. (F'(\vec{x},s) \equiv F(\vec{x},s)).\]
\end{lemma}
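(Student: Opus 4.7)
The plan is to prove the equivalence by induction on situations, justified by the second-order foundational induction axiom in $\Sigma$.

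For the base case $s = S_0$, the antecedent $\forall \hat{s}.\, \hat{s} < S_0 \limp \NextBounded_{b}(\hat{s})$ is vacuously true, so I must derive $F'(\vec{x},S_0) \equiv F(\vec{x},S_0)$ directly from $\D\D_{S_0}$. By construction, $\D\D_{S_0}$ contains, alongside each axiom $\phi \in \D_{S_0}$, its renamed copy $\phi[\vec{F}/\vec{F'}]$. Since $\D_0$ is assumed to be a bounded database fully specifying each fluent at $S_0$, the duplicated description forces the primed and unprimed fluents to have identical initial extensions.

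For the inductive step at $s = do(a,s')$, I would assume $\forall \hat{s}.\, \hat{s} < do(a,s') \limp \NextBounded_{b}(\hat{s})$ together with the inductive hypothesis at $s'$. From $s' < do(a,s')$ the antecedent extracts $\NextBounded_{b}(s')$ directly, and it also implies $\forall \hat{s}.\, \hat{s} < s' \limp \NextBounded_{b}(\hat{s})$, so the IH yields $F'(\vec{y},s') \equiv F(\vec{y},s')$ for every fluent. Now I unfold the modified successor state axioms in $\D\D_{SS}$: from $F(\vec{x},do(a,s')) \equiv \Phi(\vec{x},a,s')$ and $F'(\vec{x},do(a,s')) \equiv \Phi(\vec{x},a,s') \land \NextBounded_{b}(s')$, the second conjunct collapses under $\NextBounded_{b}(s')$, and since $\Phi$ is uniform in $s'$ (referring only to fluents at $s'$, where the IH equates primed and unprimed versions whenever this distinction would be relevant), both sides reduce to the same formula, giving $F'(\vec{x},do(a,s')) \equiv F(\vec{x},do(a,s'))$.

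The main technical subtlety is the base case: the argument genuinely depends on the doubled initial description entailing $F' \equiv F$ at $S_0$, which is immediate when $\D_0$ is a (complete) bounded database as assumed here but would require additional care in a general incomplete-information setting. The inductive step, by contrast, is a direct unfolding of the modified successor state axioms together with the extraction of $\NextBounded_{b}(s')$ from the strict-predecessor antecedent.
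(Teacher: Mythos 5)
Your proof is correct and takes exactly the paper's route: the paper's entire proof of this lemma is the single line ``By induction on situations,'' and your argument is precisely that induction spelled out (vacuously true antecedent plus the duplicated initial description at $S_0$ for the base case; unfolding the primed successor state axioms and extracting $\NextBounded_{b}(s')$ from the strict-predecessor antecedent for the step). Your caveat about the base case is well taken --- the doubled initial description only forces $F'(\vec{x},S_0) \equiv F(\vec{x},S_0)$ when $\D_0$ pins down the initial extensions of the fluents, a point the paper's one-line proof glosses over entirely.
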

\begin{proof}
By induction on situations.
\end{proof}

Now we define a new action theory $\D'$ which can be considered a sort of projection of $\D\D$ over the primed fluents only.
Let $\D'$ be: 
\begin{itemize}
\item $\D'_{S_0} = \{\phi[\vec{F}/\vec{F'}] | \phi \in \D_{S_0}\}$.

\item 
$\D'_{SS} =  \{ F'(\vec{x},do(a,s)) \equiv
         \Phi[\vec{F}/\vec{F'}](\vec{x},a,s) \land \NextBounded_{b}[F/F'](s) \mid {}$\\
\hspace*{17em} $F(\vec{x},do(a,s)) \equiv \Phi(\vec{x},a,s) \in \D_{SS}\}$

\item 
$\D'_{ap} =  \{ Poss(A(\vec{x}),s) \equiv
         \Psi[\vec{F}/\vec{F'}](\vec{x},a,s)\land \NextBounded_{b}[F/F'](s) \mid {}$\\
\hspace*{16em} $Poss(A(\vec{x}),s) \equiv \Psi(\vec{x},a,s)  \in \D_{AP}\}$
\end{itemize}
Notice that $\D'$ is bounded by construction if $\D'_{S_0}$ is, and furthermore it preserves the information about the \emph{original theory} being bounded at the next step, though in terms of primed fluents.
Exploiting the above lemma on $\D\D$ and the construction of $\D'$, 
we can show that $\D'$ has the following notable property:
\begin{lemma}\label{thm-boundeness}
\[\D\models AG\NextBounded_{b}(S_0) ~\mbox{ iff }~
%\mbox{ if and only if }$\\
%\phantom{xxxxxxxxxxxxxxxxxxxxxx}$
\D'\models AG\NextBounded_{b}[\vec{F}/\vec{F'}](S_0).
\footnote{Notice that $\NextBounded_{b}[\vec{F}/\vec{F'}]$ expresses that in the original theory the next situations are bounded, though now syntactically replacing original fluents with their primed version.}\]
\end{lemma}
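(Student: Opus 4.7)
The plan is to route the proof through the intermediate theory $\D\D$, chaining three equivalences so as to leverage Lemma~\ref{thm-dd-equivalence} directly. Writing $P \doteq AG\NextBounded_b$ and $P' \doteq AG\NextBounded_b[\vec F/\vec{F'}]$, I will show (a)~$\D\models P \Leftrightarrow \D\D\models P$, (b)~$\D\D\models P \Leftrightarrow \D\D\models P'$, and (c)~$\D\D\models P' \Leftrightarrow \D'\models P'$.

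For (a), note that $\D\D$ differs from $\D$ only by adding axioms that constrain the fresh symbols $\vec{F'}$ and by tightening the preconditions with an extra $\NextBounded_b$ conjunct. Every $\D$-model pairs bijectively with a $\D\D$-model sharing the same interpretation of $\vec F$, by defining $\vec{F'}$ via the new SSAs and redefining $Poss$ accordingly. Under $\NextBounded_b$ the tightened preconditions coincide with the original ones, so along any history on which $\NextBounded_b$ keeps holding the two notions of executability agree; a short induction then shows that the set of executable situations satisfying $\NextBounded_b$ in the $\D$-model equals that in the paired $\D\D$-model, yielding $\D\models P \Leftrightarrow \D\D\models P$.

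Link~(b) is where Lemma~\ref{thm-dd-equivalence} is invoked. The forward direction is immediate: under $P$, the antecedent of Lemma~\ref{thm-dd-equivalence} holds at every executable situation, so $F\equiv F'$ everywhere, making $\NextBounded_b$ and $\NextBounded_b[\vec F/\vec{F'}]$ interchangeable. The backward direction proceeds by induction on executable situations: at $S_0$, $F\equiv F'$ holds by construction of $\D\D_{S_0}$, so $\NextBounded_b(S_0)\equiv\NextBounded_b[\vec F/\vec{F'}](S_0)$, which holds by $P'$; at $do(a,s)$, the inductive hypothesis together with Lemma~\ref{thm-dd-equivalence} gives $F\equiv F'$ at $do(a,s)$, and then $P'$ forces $\NextBounded_b(do(a,s))$.

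Link~(c), which I expect to be the main obstacle, requires showing that $\D\D$'s SSA for $F'$, namely $\Phi(\vec x,a,s)\land\NextBounded_b(s)$, collapses under $F\equiv F'$ to exactly $\D'$'s SSA $\Phi[\vec F/\vec{F'}]\land\NextBounded_b[\vec F/\vec{F'}](s)$, and likewise for the preconditions. Along any history on which $P'$ holds, the $\vec{F'}$-reduct of a $\D\D$-model is therefore a model of $\D'$; conversely, every model of $\D'$ lifts to a $\D\D$-model by setting $F$ equal to $F'$ at $S_0$ and letting $\D$'s dynamics propagate it, which by link~(b) keeps $F\equiv F'$ throughout the executable portion. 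The delicate point is that $F'$ in $\D\D$ is recursively defined via the original $\vec F$, while $F'$ in $\D'$ is defined via itself; one must carefully verify that these two apparently different recurrences yield the same $F'$-extensions precisely on histories where $\NextBounded_b$ holds---which is exactly what Lemma~\ref{thm-dd-equivalence} guarantees.
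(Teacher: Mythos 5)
Your overall strategy is a more elaborate rendering of the paper's argument: the paper proves the forward direction directly from Lemma~\ref{thm-dd-equivalence}, and the backward direction by contraposition, locating the \emph{first} executable situation $S$ at which $\NextBounded_b$ fails and observing that, since all predecessors of $S$ satisfy $\NextBounded_b$, the primed fluents replicate the unprimed ones up to $S$, so the violation is visible in $\D'$ as $\lnot\NextBounded_b[\vec{F}/\vec{F'}](S)$. Your links (a) and (b) are sound in substance (modulo the minor imprecision that in $\D\D$ the executable situations form a strict subset of those of $\D$, so the two sets you compare are not literally equal; what matters is only that the first violation of $\NextBounded_b$, if any, occurs at a situation executable in both theories).

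The genuine gap is in link (c). The claims that ``the $\vec{F'}$-reduct of a $\D\D$-model is a model of $\D'$'' and that ``every model of $\D'$ lifts to a $\D\D$-model by setting $F$ equal to $F'$ at $S_0$ and propagating'' are false in general. The two recurrences for $F'$ agree only while $F\equiv F'$ holds, i.e., only up to and including the first situation at which $\NextBounded_b$ fails. One step later they diverge: in $\D\D$ the next value of $F'$ is computed from the unprimed $\vec{F}$, which keeps evolving under $\D$'s dynamics, whereas in $\D'$ it is computed from the now-empty $\vec{F'}$; the precondition axioms come apart for the same reason. Lemma~\ref{thm-dd-equivalence} cannot close this gap as you suggest, since it relates $F$ to $F'$ \emph{within} a $\D\D$-model and says nothing about the relation between $\D\D$'s $F'$ and $\D'$'s $F'$. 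The repair is to abandon the global model correspondence and argue locally, which is exactly what the paper does: for either direction of (c) one only needs the two interpretations of $\vec{F'}$ to coincide on the prefix of the situation tree up to the first violation of $\NextBounded_b$ (equivalently, of $\NextBounded_b[\vec{F}/\vec{F'}]$); this is provable by the same induction as Lemma~\ref{thm-dd-equivalence} and suffices because $AG\,\NextBounded_b[\vec{F}/\vec{F'}]$ fails if and only if it fails at such a first violation.
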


\begin{proof}
By Lemma~\ref{thm-dd-equivalence}, it is immediate to see that $\D\models AG\NextBounded_{b}(S_0)$ implies $\D'\models AG\NextBounded_{b}[\vec{F}/\vec{F'}](S_0)$.
For the opposite direction,  suppose that $\D'\models AG\NextBounded_{b}[\vec{F}/\vec{F'}](S_0)$, but $\D\models AG\NextBounded_{b}(S_0)$ does not hold. This means that there exists a model of $\D$ and a situation $S$ where $\lnot \NextBounded_{b}(S)$ holds, though in all previous situations $s < S$  we have that  $\NextBounded_{b}(s)$ holds. Now by Lemma~\ref{thm-dd-equivalence}, we can construct a model for $\D'$ such that the truth values of $F$ are replicated in $F'$ as long as $\NextBounded_{b}$ holds in the previous situation. So in $S$, we must have  $\lnot \NextBounded_{b}[\vec{F}/\vec{F'}](S)$, which contradicts the assumption that $\D'\models AG\NextBounded_{b}[\vec{F}/\vec{F'}](S_0)$.
\end{proof}

By Lemma~\ref{thm-boundeness}, since $\D'$ is bounded by $b$ if
$\D'_{S_0}$ is, it follows that:

\begin{theorem}\label{thm-decidability}
  Given a BAT whose initial situation description is bounded by $b$, then checking
  whether the entire theory is bounded by $b$ is decidable.
\end{theorem}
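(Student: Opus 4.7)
The plan is to reduce the problem to a verification query on an auxiliary theory that is known to be bounded, and then invoke our decidability results for verification of bounded action theories. Concretely, boundedness of $\D$ by $b$ is equivalent to $\D \models AG\NextBounded_b(S_0)$, where $AG\NextBounded_b \doteq \nu Z.\, \NextBounded_b \land \BOX{Z}$. Since $\NextBounded_b(s)$ is a FO formula uniform in $s$ (obtained by regressing $Bounded_{F,b}(do(A(\vec x),s))$ through every action type, using that there are finitely many action types), the right-hand side is a legitimate closed $\mymu$ formula. The obstacle is that we cannot apply Theorem~\ref{th:dec-verif} (or Theorem~\ref{th:inc-inf}) directly to $\D$, because those theorems presuppose boundedness---which is exactly what we are trying to establish. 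This is what the auxiliary theory $\D'$ is designed to circumvent.

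First, I would argue that $\D'$ is bounded by $b$. Its initial description $\D'_{S_0}$ is a syntactic renaming of $\D_{S_0}$ with $\vec{F}$ replaced by $\vec{F'}$, hence $\D'_{S_0} \models Bounded_b[\vec{F}/\vec{F'}](S_0)$. For the inductive step, examine the modified successor state axioms in $\D'$: each primed fluent satisfies
\[
F'(\vec{x}, do(a,s)) \equiv \Phi[\vec{F}/\vec{F'}](\vec{x},a,s) \land \NextBounded_b[\vec{F}/\vec{F'}](s).
\]
If the guard $\NextBounded_b[\vec{F}/\vec{F'}](s)$ holds in $s$, then by the very definition of $\NextBounded_b$ (and the fact that the guard also appears on every precondition), the extension of $F'$ in $do(a,s)$ coincides with what the original transition would yield, which has at most $b$ tuples. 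If the guard fails, $F'$ becomes empty in $do(a,s)$, trivially bounded. A routine induction on executable situations of $\D'$ then gives $\D' \models \forall s.\Executable(s) \limp Bounded_b[\vec{F}/\vec{F'}](s)$.

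Second, having established that $\D'$ is bounded by $b$, I can apply Theorem~\ref{th:dec-verif} (if $\D_0$ is complete) or Theorem~\ref{th:inc-inf} (in the incomplete-information case) to decide whether
\[
\D' \models AG\NextBounded_b[\vec{F}/\vec{F'}].
\]
By Lemma~\ref{thm-boundeness} this query is equivalent to $\D \models AG\NextBounded_b$, which in turn is equivalent to $\D$ being bounded by $b$. Thus the original question reduces effectively to a decidable verification problem, yielding the theorem.

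The main subtlety---the step I expect to require the most care---is confirming that $\D'$ is genuinely bounded by $b$, since the guarded SSAs mix a ``$F'$ imitates $F$'' mode with an ``$F'$ collapses to empty'' mode across situations. The definition of $\NextBounded_b$ was chosen exactly so that whenever the imitation mode is active in $s$, the resulting extension of every $F'$ at $do(a,s)$ has size less than $b$; and whenever it is not, no primed fluent can have any tuples added. It is this careful interplay, together with Lemma~\ref{thm-dd-equivalence}, that justifies using $\D'$ as a sound and complete proxy for $\D$ in the verification of $AG\NextBounded_b$, and it is the only place where one must check the details rather than appeal directly to previously established results.
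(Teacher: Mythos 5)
Your proposal is correct and follows essentially the same route as the paper: establish that $\D'$ is bounded by $b$ by construction (given that $\D'_{S_0}$ is), then reduce $\D \models AG\NextBounded_b$ to the decidable verification query $\D' \models AG\NextBounded_b[\vec{F}/\vec{F'}]$ via Lemma~\ref{thm-boundeness} and the decidability theorems for bounded theories. The paper states this in one line; you have merely spelled out the same argument in more detail.
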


Notice that we pose no restriction on the initial situation description except that it is representable in first-order logic, hence checking its boundedness remains undecidable: 
\begin{theorem}\label{thm-boudendess-undec}
Given a FO description of the initial situation $\D_0$ 
%(a FO theory without functions, except for constants, and interpreted over standard
%names \textbf{[can we drop this?]}) 
and a bound $b$, it is undecidable to check whether all models of  $\D_0$ are bounded by $b$. 
\end{theorem}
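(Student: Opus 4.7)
The plan is a reduction from classical first-order satisfiability, which is undecidable. Given an arbitrary closed FO sentence $\psi$, I would construct an initial situation description $\D_0^\psi$ as follows. First, obtain $\psi'$ from $\psi$ by replacing every predicate symbol $P$ of arity $n$ occurring in $\psi$ with a fresh fluent $P'$ of arity $n+1$ whose situation argument is fixed to $S_0$; the resulting $\psi'$ is a closed FO formula uniform in $S_0$. Then let $F$ be an additional fresh unary fluent and pick $b$ distinct object constants $c_1,\ldots,c_b$ from $\N$, whose pairwise distinctness is guaranteed by $\D_{uno}$. Finally, set
\[
\D_0^\psi \;\doteq\; \{\psi'\} \;\cup\; \{F(c_1,S_0),\ldots,F(c_b,S_0)\}.
\]

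The key correspondence to establish is: every model of $\D_0^\psi$ is bounded by $b$ if and only if $\psi$ is unsatisfiable. If $\psi$ is unsatisfiable, then so is $\psi'$ (it is merely a renaming of $\psi$ into symbols not occurring elsewhere), so $\D_0^\psi$ has no models at all and vacuously entails $Bounded_b(S_0)$. Conversely, starting from any model of $\psi$, I would lift it to a model $\M$ of $\D_0^\psi$ by interpreting each fresh fluent $P'(\cdot,S_0)$ exactly as the extension of $P$ in the given $\psi$-model, by interpreting the $c_i$'s as pairwise distinct objects per $\D_{uno}$, and by letting $F(\cdot,S_0)^\M = \{c_1^\M,\ldots,c_b^\M\}$. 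Since these are $b$ pairwise distinct witnesses, $|\{x \mid F(x,S_0)\}|^\M \geq b$, so $\neg Bounded_{F,b}(S_0)$ holds in $\M$, witnessing $\D_0^\psi \not\models Bounded_b(S_0)$.

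Putting the two directions together, a decision procedure for boundedness of arbitrary FO initial situation descriptions would yield a decision procedure for FOL unsatisfiability, contradicting Church's classical result. There is no serious technical obstacle in this plan. The only step requiring a bit of care is the backward direction, which relies on the clean signature separation between the symbols introduced by the renaming $\psi \mapsto \psi'$ and the fluent $F$: since they share no predicate symbols, any $\psi$-model can be freely extended to interpret $F$ at $S_0$, so the lifting is unobstructed and the $b$ additional atoms indeed force non-boundedness in the lifted model.
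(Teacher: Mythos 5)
Your overall reduction shape (decide boundedness $\Rightarrow$ decide FO satisfiability) is the right one, and it differs from the paper's: the paper reduces from satisfiability of the initial description $\D_0$ itself, wrapping it as $(\D_0\land \exists \vec{x}.\hat{F}(\vec{x},S_0)) \lor \bigwedge_{F\in\F}\forall\vec{x}.\lnot F(\vec{x},S_0)$ so that the transformed theory always has models and is bounded iff $\D_0$ is unsatisfiable, whereas you reduce from satisfiability of an arbitrary FO sentence $\psi$ via predicate renaming and rely on vacuous entailment when $\psi$ is unsatisfiable. Both shapes are legitimate.

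However, your backward direction has a genuine gap: the claim that ``any $\psi$-model can be freely extended'' to a model of $\D_0^\psi$ is false as stated. In the paper's semantics the object domain of any model is forced to be infinite (there are infinitely many constants in $\N$ subject to $\D_{uno}$), and even setting that aside, adding your $b$ fresh constants $c_1,\dots,c_b$ may enlarge the domain beyond that of the given $\psi$-model. Unguarded universal quantifiers in $\psi'$ do not survive domain enlargement: if you keep $P'(\cdot,S_0)$ equal to the old extension of $P$, a conjunct such as $\forall x. P'(x,S_0)$ becomes false. Concretely, take $\psi = \forall x\forall y.\, x=y$: it is satisfiable, yet $\D_0^\psi$ is unsatisfiable under $\D_{uno}$, so all of its (zero) models are bounded by $b$, and your biconditional ``every model of $\D_0^\psi$ is bounded by $b$ iff $\psi$ is unsatisfiable'' fails. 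The standard repair is to relativize $\psi$ to a fresh unary guard predicate $D$ (replace $\forall x.\chi$ by $\forall x. D(x)\limp\chi$ and $\exists x.\chi$ by $\exists x. D(x)\land\chi$, and add $\exists x. D(x)$); satisfiability of $\psi$ then coincides with satisfiability of the relativized sentence over any larger domain, and the rest of your argument goes through. Alternatively, follow the paper and reduce directly from satisfiability of initial situation descriptions, taking its undecidability as given. As written, though, the step you flag as ``unobstructed'' is exactly where the proof breaks.
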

\begin{proof}
By reduction to FO unsatisfiability.
Suppose we have an algorithm to check whether a FO theory $\D_0$ is bounded by 0. 
Then we would have an algorithm to check (un)-satisfiability of $\D_0$. 
Indeed consider for a fixed fluent $\hat{F}$:
\[\hat{\D}_0 = (\D_0\land \exists \vec{x}. \hat{F}(\vec{x}, S_0)) \lor 
(\bigwedge_{F\in\F}  \forall \vec{x}. \lnot F(\vec{x}, S_0)) \]
\noindent
Note that $\bigwedge_{F\in\F} \forall \vec{x}. \lnot F(\vec{x}, S_0)$ has only models bounded by 0, while $\exists \vec{x}. \hat{F}(\vec{x}, S_0) $ has only models with at least one tuple (and thus one object) in $\hat{F}$. Hence we get that
$\hat{\D}_0$ is bounded by 0 iff $\D_0$ is unsatisfiable.  A similar argument holds for every bound $b$.
\end{proof}
Nonetheless in many cases we know  by construction that the initial situation is bounded. 
In such cases the proof technique of Theorem~\ref{thm-decidability} provides an effective way to 
check if the entire theory is bounded.
% Notice that since the initial situation description is in general an arbitrary first-order theory, we could not hope for more.

\section{Related Work}\label{sec:rel-work}

% \textbf{[Discuss Propositional related work.}]

% \begin{itemize}

% \item MAS Verification
% \item Eugenia
% \item Action languages
% \item Event calculus
% \item Doherty \& Sandewall
% \item PDDL
% \item NASA Space Agent - really about autonony and fault tolerance/recovery, not vereification
% \end{itemize}

% \textbf{[Discuss FO related work.}]

% \begin{itemize}

% \item FOL Decidable:
% \begin{itemize}
% \item Classen Decidable
% \item Soutchanski
% \item JAIR14
% \item PODS13
% \item ??
% \end{itemize}

% \item FOL incomplete
% \begin{itemize}
% \item KR10
% \item Kelly \& Pearce
% \item Classen
% \end{itemize}

% \item DL's:
% \begin{itemize}
% \item Bardoli
% \item Eiter
% \item Noi
% \end{itemize}

% \item Infinite state recursive procedures (stack)

% \end{itemize}

Besides the situation calculus \cite{McCarthy1969:AI,Reiter01-Book},
many other formalisms for reasoning about actions have been developed in
AI, including the event calculus
\cite{DBLP:journals/ngc/KowalskiS86,DBLP:books/daglib/0095085,DBLP:books/sp/wooldridgeV99/Shanahan99},
the features and fluents framework \cite{SandewallBook94}, action
languages such as $\A$
\cite{DBLP:journals/jlp/GelfondL93} and $\C+$ \cite{DBLP:journals/ai/GiunchigliaLLMT04},
the fluent calculus \cite{DBLP:journals/ai/Thielscher99}, and many others.  In
most of these, the focus is on addressing problems in the
representation of action and change, such as the frame problem.
 Some attention has also been paid to specifying and verifying general
 temporal properties, especially in the context of planning.  
% In
%  the planning area, the 
The Planning Domain Definition Language
 (\propername{PDDL}) \cite{PDDL98} has been developed for specifying
 planning domains and problems,
%, covering some of the same ground.  
 and a recent version supports the expression of temporal constraints
 on the plan trajectory \cite{GereviniLong06}.  Approaches such as
 those in TLPlan \cite{DBLP:journals/amai/BacchusK98}, in TALplanner
 \cite{DBLP:journals/amai/KvarnstromD00}, or in planning via model
 checking \cite{PistoreTraverso:IJCAI01} support planning with such
 temporal constraints. Within the situation calculus, temporal
 constraints for planning have been studied in, e.g.,
 \cite{DBLP:conf/kr/BienvenuFM06,DBLP:conf/aips/BaierM06}.  All these
 planning-related approaches are essentially propositional and give
 rise to transition systems that are finite-state. One interesting
 attempt to interpret first-order linear temporal logic simultaneously
 as a declarative specification language and procedural execution
 language is that of \propername{MetateM} \cite{BFGGO95}, though
 verification is not addressed.

Most work on verification has been done in computer science,  generally focusing on
finite-state systems and programs.
%Most work on verification has been done in the context of verification
%of finite states systems and program in Computer Science.  
Many logics have been developed to specify temporal properties of
%%finite-state 
such systems and programs, including linear-time logics, such as
Linear Temporal Logic (LTL) \cite{PnuelliLTL97} and
Property-Specification Language (PSL) \cite{EiFi06}, and branching
time logics such as Computation Tree Logic (CTL) \cite{ClarkeE81CTL}
and CTL$^*$~\cite{EmHal83CTL*}, the
$\mu$-calculus~\cite{Emerson96,BS07}, which subsumes the previous two,
as well as Propositional Dynamic Logic (PDL) \cite{FisherLadnerPDL79},
which incorporates programs in the language.  Model checking (and
satisfiability) in these propositional modal logics is decidable
\cite{BaKG08}, but they can only represent finite domains and finite
state systems.  Practical verification systems, e.g.,
\cite{DBLP:journals/tse/Holzmann97,DBLP:conf/cav/CimattiCGGPRST02},
%\cite{DBLP:journals/ase/VisserHBPL03},
 have been developed for many such logics, based on model checking
 techniques \cite{BaKG08}.

 In AI, verification by model checking has become increasingly popular
 in the autonomous agents and multi-agent systems area. There, many
 logics have been proposed that additionally deal with the
 informational and motivational attitudes of agents
 \cite{DBLP:conf/kr/RaoG91,RaoGeorgeffKR92,DBLP:journals/fuin/LinderHM98,WooldridgeBook00,DBLP:journals/ai/CohenL90,ShLL10}.
 Some recent work has been specifically concerned with formalizing
 multi-agent knowledge/belief and their dynamics
 \cite{vanDitmarshDELbook08,DBLP:conf/kr/Herzig14}.  Moreover, various
 Belief-Desire-Intention (BDI) agent programming languages have been
 developed that operationalize these mental attitudes
 \cite{DBLP:conf/maamaw/Rao96,JasonBook07,DBLP:journals/aamas/Dastani08,DBLP:journals/japll/BoerHHM07}.
 Verification is important in this area as agent autonomy makes it
 crucial to be able to guarantee that the system behaves as required
 \cite{DBLP:journals/cacm/FisherDW13}.  Furthermore, one generally
 wants to ensure that the agents' mental states as well as their
 behavior evolve in a way that satisfies certain properties.  Agent
 logics can be used to specify such properties.
 Much of the verification work in this area focuses on the model
 checking of BDI programs.  For instance, \cite{Bordini.etal:AAMAS03}
 shows how to use the \propername{SPIN} model checker
 \cite{DBLP:journals/tse/Holzmann97} to verify properties of
 finite-state \propername{AgentSpeak} programs.
 \cite{DennisFWB12,DBLP:journals/cacm/FisherDW13} compile BDI programs
 and agent properties to verify into \propername{Java} and use
 \propername{JPF} \cite{DBLP:journals/ase/VisserHBPL03} to model check
 them.  \cite{LomuscioQR09} develops \propername{MCMAS}, a symbolic
 model checker specifically for multi-agent systems.
% \cite{DBLP:journals/aamas/BordiniFVW06},
% \cite{VanRiemsdijkAB10}, \cite{DennisFWB12,DBLP:journals/cacm/FisherDW13}, and
% \cite{YadavSardina:JELIA12a}, all focus on model-checking of BDI
% programs.  
% A leading example of a symbolic model
% checker for multi-agent systems is
% \propername{MCMAS}~\cite{LomuscioQR09,DBLP:journals/japll/RaimondiL07}.
\cite{ADKLM10} develops a theorem proving-based
verification framework for BDI programs that uses a PDL-like
logic. 

In the situation calculus, there is also some previous work on verification.
%In the context of the situation calculus, work in verification has a
%certain% tradition, though is gaining momentum only recently.
%
% The price of such a generality is that decidability results for
% reasoning in the situation calculus are rare, e.g.,
% \cite{DBLP:conf/ijcai/Ternovskaia99} for an argument-less fluents
% fragment, and \cite{DBLP:conf/ijcai/GuS07} for a description
% logic-like 2 variables fragment. 
% %%
Perhaps the first such work is
%Possibly the first paper that presents  work addressing verification
%in the situation calculus is 
\cite{DeTR97}, where verification of possibly non-terminating
\propername{Golog} \cite{Levesque:JLP97-Golog} programs is addressed,
%was introduced, 
though no effective techniques are given. 
Focusing on the propositional situation calculus (where fluents have
only the situation as argument), \cite{DBLP:conf/ijcai/Ternovskaia99}
presents decidable verification techniques.
In \cite{GuKi06}, these techniques are generalized to a
one-object-argument fluents fragment of the situation calculus, and in  \cite{DBLP:conf/ijcai/GuS07} to theories expressed in two-object-argument fragment. 
Techniques for verification resorting to second-order theorem proving
with no decidability guarantees are presented in
\cite{ShLL10,DBLP:conf/atal/ShapiroLL02}, where the \propername{CASLve}
verification environment for multi-agent \propername{ConGolog}
\cite{DeGiacomoLL:AIJ00-ConGolog} programs is described.
In \cite{Classen:KR08},  \emph{characteristic graphs}  for  programs are introduced to define a form of regression over programs to be used as a pre-image computation step in  (sound) procedures for verifying \propername{Golog} and \propername{ConGolog} programs inspired by model checking.  Verification of programs over a two-variable fragment of the situation calculus is shown to be decidable in \cite{DBLP:conf/aaai/ClassenLLZ14}.
\cite{DBLP:journals/ai/KellyP10} establishes conditions for verifying
loop invariants and persistence properties. 
%%Lin???
%%
Finally, \cite{DBLP:conf/kr/GiacomoLP10,DBLP:conf/ijcai/SardinaG09}
propose techniques (with model-checking ingredients) to reason about
infinite executions of \propername{Golog} and \propername{ConGolog}
programs based on second-order logic exploiting fixpoint approximates.

More recently, work closely related to ours 
\cite{DBLP:conf/kr/GiacomoLP12,DBLP:conf/ijcai/GiacomoLP13,DeGiacomoLPVAAMAS14,DBLP:conf/ecai/GiacomoLPV14}
has shown that one gets robust decidability results for temporal
verification of situation calculus action theories under the
assumption that in every situation the number of object tuples forming
the extension of each fluent is bounded by a constant.
%fixed number.  
%%
In particular, \cite{DBLP:conf/kr/GiacomoLP12} introduced bounded
situation calculus basic action theories;  
%With respect to the current paper, 
%%\cite{DBLP:conf/kr/GiacomoLP12} 
\cite{DBLP:conf/kr/GiacomoLP12} however, assumes standard names for
the object domain and, more significantly, disallows quantification
across situations in the verification language. 
In the present paper, which
is a direct extension of \cite{DBLP:conf/kr/GiacomoLP12}, 
both of these limitations are removed.  
In \cite{DBLP:conf/ijcai/GiacomoLP13} an extended language with an
explicit knowledge operator was considered, while in
\cite{DeGiacomoLPVAAMAS14} online executions (i.e., executions where
the agent only performs actions that it knows are executable)
 and progression are
studied;
like \cite{DBLP:conf/kr/GiacomoLP12}, these papers also assume
standard names and rule out
%%of %limitation of not allowing
quantification across situations from the verification language.
\cite{DBLP:conf/ecai/GiacomoLPV14} addresses verification over online
executions with sensing in bounded situation calculus theories,
adopting as verification language a first-order variant of Linear
Temporal Logic (\propername{FO-LTL}), again without quantification
across situations.
%%
% These decidability results have been lately extended to \propername{GOLOG} and \propername{CONGOLOG} programs (without recursive procedures) where atomic actions are specified by a bounded situation calculus action theory \cite{IJCAI15}.

The work in this paper is also closely related to
\cite{Belardinelli.etal:JAIR14}. There, an ad-hoc formalism for
representing action and change is developed with the purpose of
capturing data-aware artifact-centric processes. This formalism
describes action preconditions and postconditions in first-order
logic, and induces \emph{genericity} \cite{AbHV95} --- there called
\emph{uniformity} --- on the generated transition system. Intuitively
genericity requires that if two states are isomorphic they
induce %%isomorphic
the ``same'' transitions (modulo isomorphism). This means, in
particular, that the system is essentially \emph{Markovian}
\cite{Reiter01-Book}.
% requires that objects in the domain have no properties other than the 
% relationships with each other specified in the model of the theory. 
% In other words the transition system remains essentially unchanged 
% if all objects are consistently renamed.  
% Intuitively, a transition system is generic if its behavior does not depend on the 
% actual data contained in a state but only on the mutual relationships between
% such data.
As verification language, they consider \propername{FO-CTL}, a
first-order variant of \propername{CTL} that allows for quantifying
across states \emph{without requiring object persistence}, as,
instead, we do here.  Their results imply that one can construct a
finite-state transition system over which the \propername{FO-CTL} formula of
interest can be verified. However, differently from our case, such a
transition system depends also on the number of variables in the
formula.
While also bounded situation calculus action theories enjoy
genericity, it is easy to see that, without assuming object
persistence, we immediately lose the possibility of abstracting to a
finite transition system independently from the formula to
verify. This is true even if we drop completely fixpoints. Indeed,
assume that we have an action replaces an object in the active domain
by one in its parameters. Then, without persistence, for any bound $n$
over the number of objects in a candidate finite abstraction, we can
write a (fixpoint-free) formula saying that there exists a finite run
with more than $n$ distinct objects:
\[\begin{array}{l}
\exists x_1. \inadom(x_1) \land \DIAM{(\exists x_2. \inadom(x_2) \land x_2\neq x_1 \land {}\\
\qquad \DIAM{(\exists x_3 \inadom(x_3) \land x_3\neq x_1 \land x_3\neq x_2 \land {}\\
\qquad \qquad \cdots {}\\
\qquad \qquad \qquad \DIAM{(\exists x_{n+1} \inadom(x_{n+1}) \land x_{n+1}\neq x_1 \land \cdots \land x_{n+1}\neq x_n)})})}
  \end{array}
  \]
  Obviously, this formula is false in the finite abstraction, while
  true in the original transition system, where objects are not
  ``reused''.
  Notice that the formula belongs also to \propername{FO-CTL} and this
  limitation applies to \cite{Belardinelli.etal:JAIR14} as well.
  This observation shows that the persistence condition is crucial to
  get an abstraction that is independent from the formula.

  It is interesting to observe that while dropping persistence is
  certainly a valuable syntactic simplification, the deep reason
  behind it is that generic transition systems, including those
  generated by situation calculus basic action theories, are
  essentially unable to talk about objects that are not in the current
  active domain. If some object that is in the active domain
  disappears from it and reappears again, after some steps, the basic
  action theory will treat it essentially as a fresh object (i.e., an
  object never seen before). Hence, any special treatment of such
  objects must come from the formula we are querying the transition
  system with: for example, we may isolate runs with special
  properties and only on those do verification. The fact that
  \propername{FO-CTL} can drop persistence while maintaining
  decidability of verification over generic transition systems tells
  us that \propername{FO-CTL} is not powerful enough to isolate
  interesting runs to be used as a further assumption for
  verification.

The results in this paper are relevant not only for AI, but also for
other areas of computer science (CS).
There is some work in CS that uses model checking techniques on
infinite-state systems. However, in most of this work the emphasis is
on studying recursive control rather than on a rich data oriented
state description; typically data are either ignored or finitely
abstracted, see e.g., \cite{BCMS01}.
There has recently been some attention paid in the field of business
processes and services to including data into the analysis of
processes \cite{Hull2008:Artifact,GeredeSu:ICSOC2007,Dumas2005:PAIS}.
Interestingly, while we have verification tools that are quite good
for dealing with data and processes separately, when we consider them
together, we get infinite-state transition systems, which resist
classical model checking approaches to verification.  Only lately has
there been some work on developing verification techniques that can
deal with such infinite-state processes
\cite{DHPV:ICDT:09,BPM11,ICSOC11,DBLP:conf/pods/HaririCGDM13,Belardinelli.etal:JAIR14}.
In particular, the form of controlled quantification across situations
in our $\mymu$ language, which requires object persistence in the
active domain, is inspired by the one in
\cite{DBLP:conf/pods/HaririCGDM13}, which in turn extends the
verification logic presented in
\cite{DBLP:conf/kr/GiacomoLP12}. There, the infinite-state data-aware
transition systems (with complete information) to verify are defined
using an ad-hoc formalism based on database operations, and the
decidability results are based on two conditions over the transition
systems, namely run-boundedness and state-boundedness. The latter is
analogous to our situation-boundedness.  In this paper, we make the
idea of boundedness flourish in the general setting offered by the
situation calculus, detailing conditions needed for decidability,
allowing for incomplete information, and exploiting the richness of
the situation calculus for giving sufficient conditions for
boundedness that can easily be used in practice. Such results can find
immediate application in the analysis of data-aware business processes
and services.

% Interestingly the results in
% \cite{DBLP:conf/pods/HaririCGDM13}, although formulated in a different
% setting, essentially show that if we relax the persistence condition
% we do not only loose the possibility of abstracting to a finite transition system
% whose size does not depend on the formula,  but also
% we loose decidability of verification, in general.
%%
%%
% 
% \textbf{G: Yves please check!}

%%THE FOLLOWING GOES TO THE ACCOMPANING LETTER
% \textit{[Finally, note that a preliminary version of this paper appeared as
% \cite{DBLP:conf/kr/GiacomoLP12}.  Here we extend this work by
%   supporting a controlled form of quantification across states in the
%   verification language, providing some complexity results, and
%   showing that through verification, one can actually check that an
%   action theory remains bounded in all executable situations, given
%   that the initial situation description is known to be bounded; the
%   latter result first appeared in \cite{DeGiacomoLPVAAMAS14}.]}

%%% Local Variables:
%%% mode: latex
%%% dirvars: nil
%%% save-place: t
%%% TeX-master: "main"
%%% End:

%%%%%%%%%%%%%%%%%%%%%%%%%%%%%%%%%%%%%%%%%%%%%%%%%%%%
\section{Conclusion}\label{sec:conclusion}

% \texttt{[We should do the following:

% - brief summary

% - can we get practical tools out of this?

% - say we need to further investigate patterns for getting boundedness,
% patterns that provide bounded persistence and bounded/fading memory;
% in particular, we should look at this in light of different approaches
% to modeling knowledge, sensing, and revision in the sitcalc
% (e.g. knowledge fluents)}

%summary
In this paper, we have defined the notion of \emph{bounded action
  theory} in the situation calculus, where the number of fluent atoms
that hold remains bounded.  We have shown that this restriction is
sufficient to ensure that \emph{verification} of an expressive class
of \emph{temporal properties} remains \emph{decidable}, and is in fact
EXPTIME-complete, despite the fact that we have an infinite domain and
state space.  Our result holds even in the presence of incomplete
information.  We have also argued that this restriction can be adhered
to in practical applications, by identifying interesting classes of
bounded action theories and showing that these can be used to model
typical example dynamic domains.
Decidability is important from a theoretical standpoint, but we stress
also that our result is fully constructive being based on a reduction
to model checking of an (abstract) finite-state transition system. An
interesting future enterprise is to build on such a result to develop
an actual situation calculus verification tool.

A future research direction of particular interest is a
more systematic investigation of specification patterns for obtaining
boundedness.  This includes patterns that provide bounded persistence
and patterns that model bounded/fading memory.  These questions should
be examined in light of different approaches that have been proposed
for modeling knowledge, sensing, and revision in the situation
calculus and related temporal logics
\cite{DBLP:journals/ai/ScherlL03,DBLP:conf/ismis/DemolombeP00,DBLP:journals/ai/ShapiroPLL11,vanDitmarshDELbook08}.
This work has already started. In particular, as mentioned earlier,
the approach of this paper has been extended in
\cite{DeGiacomoLPVAAMAS14,DBLP:conf/ecai/GiacomoLPV14} to allow
verification temporal properties over \emph{online executions} of an
agent, where the agent may acquire new information through sensing as
it executes and only performs actions that are feasible according to
its beliefs.  In that work, the agent's belief state is modeled
meta-theoretically, as an action theory that is progressed as actions
are performed and sensing results are obtained.
In \cite{DBLP:conf/ijcai/GiacomoLP13}, temporal epistemic verification
is tackled within a language-theoretic viewpoint, where the situation
calculus is extended with a knowledge modality
\cite{DBLP:journals/ai/ScherlL03}.  The form of boundedness studied in
that case requires that the number of object tuples that the agent
thinks may belong to any given fluent be bounded.  In
\cite{DeGiacomoLPVAAMAS14,DBLP:conf/ecai/GiacomoLPV14}, instead, it is
only required that number of distinct tuples entelied to belong to a
fluent is bounded, while the number of tuples that are in the
extension of a fluent in some model of the theory need not be
bounded. More work is needed to fully reconcile these meta-theoretic
and language-theoretic approaches.

Finally, an important topic for future work is to tackle verification
of agent programs \cite{DeTR97}, possibly expressed in a situation
calculus-based high-level language like
\Golog~\cite{Levesque:JLP97-Golog} or
\ConGolog~\cite{DeGiacomoLL:AIJ00-ConGolog}.
Some cases where verification of \ConGolog\ programs is decidable are
identified in \cite{DBLP:conf/aaai/ClassenLLZ14}.
It woud be interesting to extend our framework to support such a form
of verification as well.  This is not immediate, as a temporal
property may hold over all executions of a program without holding
over all branches of the situation tree.  To extend our approach to
programs, we need to ensure that not just the agent's beliefs but the
whole program configuration remains bounded.

%%% Local Variables:
%%% mode: latex
%%% dirvars: nil
%%% save-place: t
%%% TeX-master: "main"
%%% End:

%%\appendix
%%\input{90-appendix}
\section*{Acknowledgements}
The authors acknowledge the support of: 
Ripartizione Diritto allo Studio, Universit\`a e Ricerca Scientifica of Provincia Autonoma di Bolzano--Alto Adige, under project VeriSynCoPateD (\emph{Verification and
 Synthesis from Components of Processes that Manipulate Data});  
 EU Commission, under the IP project n.~FP7-318338 Optique (\emph{Scalable End-user
 Access to Big Data});
and the National Science and Engineering Research Council of Canada.

%%elsarticle-num.bst, elsarticle-num-names.bst and elsarticle-harv.bst

\bibliographystyle{splncs03}
\bibliography{aij}

\end{document}